\documentclass{article}

\usepackage{PRIMEarxiv}

\usepackage[utf8]{inputenc} 
\usepackage[T1]{fontenc}    

\usepackage{amsmath,amssymb,amsthm,mathtools,bm}
\usepackage{enumitem}
\usepackage{microtype}
\usepackage{algorithm}
\usepackage{algpseudocode}
\usepackage[numbers,sort&compress]{natbib}

\usepackage{dblfloatfix}
\newtheorem{theorem}{Theorem}
\newtheorem{lemma}{Lemma}
\newtheorem{assumption}{Assumption}
\newtheorem{corollary}{Corollary}

\newtheorem{remark}{Remark}

\newcommand{\R}{\mathbb R}
\newcommand{\E}{\mathbb E}
\newcommand{\Pp}{\mathbb P}
\newcommand{\F}{\mathcal F}
\newcommand{\norm}[1]{\left\lVert #1\right\rVert}
\newcommand{\ip}[2]{\left\langle #1,#2\right\rangle}
\newcommand{\ipF}[2]{\left\langle #1,#2\right\rangle_F}

\newcommand{\op}{\mathrm{op}}
\newcommand{\M}{\mathcal M}
\newcommand{\cU}{\mathcal U}
\newcommand{\KL}{D_{\mathrm{KL}}}

\usepackage[
    colorlinks=true,
    citecolor=blue,
    linkcolor=blue,
    urlcolor=blue
]{hyperref}
\usepackage[nameinlink,capitalize]{cleveref}
\DeclareMathOperator*{\argmin}{arg\,min}

\title{Minimax Optimality of Shampoo
for Nonstationary Stochastic Optimization
}

\author{
  Abir Sarkar$^{\ast}$ \\
  Department of Statistics and Data Science\\
  Cornell University\\
  \texttt{as4458@cornell.edu} 
   \And
  Nirmalya Mandal$^{\ast}$ \\
  Department of Statistics\\
  NC State University\\
  \texttt{nmandal@ncsu.edu} 
   \And
  Martin T. Wells \\
  Department of Statistics 
  and Data Science\\
  Cornell University\\
  \texttt{mtw1@cornell.edu}
}

\begin{document}
\maketitle
\begingroup
\renewcommand{\thefootnote}{\fnsymbol{footnote}}
\footnotetext[1]{Equal contribution.}
\endgroup

\begin{abstract}
We develop a comprehensive theoretical analysis of
Matrix--\textbf{Shampoo} for tracking time-varying optima in
nonstationary stochastic optimization. We derive finite-horizon
high-probability bounds for average tracking error that disentangle initialization,
optimizer drift, momentum-induced drift penalty, stochastic noise, and adaptive
preconditioning, with explicit dependence on momentum parameter
\(\beta_1\) and stabilization parameter \(\varepsilon\).
These bounds reveal an explicit drift--noise  and stability--geometry tradeoffs: 
noise-dominated regimes favor larger $\beta_1$, which provides stronger first-moment averaging and greater suppression of gradient noise, whereas drift-dominated regimes favor smaller $\beta_1$, allowing faster adaptation to moving optimizer. Larger $\varepsilon$ reduces the preconditioner-perturbation
term while weakening Shampoo's ability to exploit matrix geometry, thereby reducing adaptivity. We establish a matching minimax lower bound under squared path-variation constraint, proving the information-theoretic optimality of Shampoo over the class of causal optimization procedures. Under a constant stepsize, we establish
high-probability bounds for the average projected stationarity gap along general nonstationary nonconvex objectives. We validate our theoretical findings
on both synthetic and multiple real-world problems.
\end{abstract}

\keywords{Nonstationary stochastic optimization
\and Shampoo
\and Adaptive preconditioning
\and Momentum
\and Tracking error
\and Minimax optimality}


\section{Introduction}

We study stochastic optimization over a compact convex set \(\Theta\subset\mathbb{R}^d\), where the objective evolves over time according to
\begin{equation}
    \tag{Opt}\label{eq:nonstationary-convex-opt}
    \boldsymbol{\theta}_{t}^{\star} \in \argmin_{\boldsymbol{\theta}\in \Theta} G_{t}(\boldsymbol{\theta}), \;\; G_{t}(\boldsymbol{\theta}) = \mathbb{E}_{X_{t} \sim \Pi_{t}}[g(\boldsymbol{\theta}, X_{t})].
\end{equation}

Here \(g(\boldsymbol{\theta},X_t)\) is a noisy sample-level version of \(G_t(\boldsymbol{\theta})\), with \(X_t\sim\Pi_t\), where \(\Pi_t\) is a time-varying distribution. As \(\Pi_t\) changes, so does the population objective \(G_t\) and its minimizer \(\boldsymbol{\theta}_t^\star\). Such problems arise in signal processing \cite{Kushner1997,sayed2003fundamentals}, concept drift and online learning \cite{10.1561/2400000013,NEURIPS2021_62e7f2e0}, and minimum divergence-type statistical estimation objectives \cite{roy2026asymptotic}, where the goal is to follow an evolving optimizer rather than converge to a fixed solution.

Adaptive gradient methods such as Adam \cite{DBLP:journals/corr/KingmaB14} are standard tools in large-scale deep learning \cite{brown2020language, dosovitskiy2020image}, combining momentum with coordinate-wise adaptive step sizes. However, diagonal preconditioning treats parameters independently and can fail to exploit the parameter space geometry \cite{zhang2026understanding}, motivating richer matrix-valued adaptive methods \cite{duchi2011adaptive}. Shampoo \cite{pmlr-v80-gupta18a} addresses this limitation  through computationally tractable left and right-sided preconditioners that exploit the matrix or tensor structure of the parameters \cite{eschenhagen2026purifying}. In stationary settings, Shampoo enjoys favorable convergence guarantees by leveraging Kronecker-factored preconditioning \cite{pmlr-v80-gupta18a,vyas2025soap,xie2025structured}. Very recently, Shampoo has been refined through KL-based structured geometry and unified analyses of one and two-sided preconditioning \cite{lin2025understanding,li2026convergence}.

In nonstationary optimization, historical gradients create an inherent tension between noise reduction and adaptation to a moving objective \cite{JMLR:v24:21-1410,sahu2026provable,sahu2026adapt,bernardi2026performance}. This raises the first fundamental question: \textit{does this noise--adaptation tradeoff persist when first-moment memory is coupled with Shampoo’s two-sided matrix geometry?}

A second difficulty is that Shampoo’s preconditioner is  constructed from current and past stochastic gradients and is therefore nonpredictable. For analytically tractable results, existing literature often introduces predictable proxy preconditioners and assumes adaptive strong monotonicity \cite{sahu2026adapt}. Hence, the second question: \textit{can one analyze Shampoo's
realized preconditioner directly, while obtaining asymptotic minimax-optimal guarantees?}

 Empirical evidence links stale optimizer to reduced plasticity under task changes  \cite{lyle2023understanding,lee2024slow}, raising a fundamental stability--geometry question: \textit{How should the ridge balance robustness to noisy or stale second-moment estimates against the benefits of matrix adaptation?} 
 
 These questions motivate our theoretical study of when, and under what conditions, Shampoo’s matrix adaptivity yields sharp guarantees under distribution shift.

\subsection{Our Contributions}

\begin{enumerate}
    \item \textbf{Finite-time tracking Error Bound.}
    We provide the first finite-horizon tracking guarantees for Shampoo when the population optimizer evolves over time. Our high-probability bound on the average tracking error
    \(
        \frac{1}{T}\sum_{t=1}^T
        \|\theta_t-\theta_t^\star\|^2
    \)
    separates the contributions of initialization, optimizer drift, momentum-induced drift, stochastic-gradient noise, and matrix-preconditioner perturbations. The main challenge is the coupled, nonpredictable dependence of momentum and the Shampoo preconditioner on stochastic gradients. Our tracking error analysis isolates their respective contributions in the final bound.

    \item \textbf{Minimax-optimal tracking under squared path variation.}
    We measure nonstationarity through the squared path-variation budget
    \(
        \mathcal D_T
        :=
        \sum_{t=1}^{T}
        \|\theta_{t+1}^\star-\theta_t^\star\|^2,
    \)
    which controls the cumulative movement of the optimizer without imposing a pointwise bound on individual changes. Over this class of drifting problems, we establish an asymptotic minimax lower bound for arbitrary causal optimization procedures and show that appropriately tuned Shampoo attains the same rate,
    \[
        r_T^\star
        \asymp
        \frac{\sigma^{4/3}\mathcal D_T^{1/3}}
        {\mu^{4/3}T^{1/3}},
    \]
    up to constant factors. Here $\sigma$ denotes the gradient noise level, and we assume $\mu$-strong monotonicity on the mean-gradient map. The matching lower bound shows that no causal optimization rule can improve the asymptotic dependence on horizon $T$, and drift budget $\mathcal D_T$.

\item \textbf{Noise--drift and ridge--geometry tradeoffs.}
Our bound reveals two distinct effects of Shampoo's memory and ridge parameters. First, noise-dominated regimes support larger $\beta_1$ through stronger averaging, while drift-dominated regimes favor smaller $\beta_1$ for faster adaptation to the moving optimizer. Related phenomena have been established for momentum SGD and coordinatewise adaptive methods \cite{sahu2026provable,sahu2026adapt, zheng2026analysis, defossez2022simple}; our analysis extends this picture to Shampoo, where first-moment memory is coupled with a matrix-valued preconditioner. Second, the ridge parameter $\varepsilon$ controls the strength of this matrix adaptation. Larger $\varepsilon$ tightens the preconditioner-perturbation
bound but simultaneously drives the normalized Shampoo preconditioner toward the identity,  weakening matrix geometry adaptation. Smaller $\varepsilon$ preserves stronger matrix preconditioning and reduces the effective preconditioned noise energy, but enlarges the perturbation bound. This stability--adaptivity tension is consistent with empirical evidence of stale optimizer states and degraded adaptation under rapid distribution shifts \cite{bai2025adaptive,gogianu2026revisiting}, as well as Shampoo studies on damping and stale preconditioners \cite{mei2023kradagrad,eschenhagen2026purifying,nam2026foam}.

\item \textbf{Projected stationarity beyond strong monotonicity.}
Without imposing strong monotonicity, we derive a high-probability bound on the average projected stationarity gap for Shampoo along general nonstationarynonconvex objectives. Temporal objective variation, momentum-induced bias, filtered stochastic noise, and distortion from the evolving preconditioner all enter explicitly, showing that the qualitative results identified in the tracking analysis persist in the nonconvex setting.

    \item \textbf{Numerical validation across structured nonstationary problems.}
 Synthetic experiments on convex and nonconvex problems validate the predicted momentum-noise and ridge-geometry tradeoffs and support the minimax tracking scaling across multiple path-variation regimes. On \textbf{CLEAR-10} and \textbf{CORe50}, comparisons with SGD, momentum SGD,
and Adam demonstrate Shampoo's stable performance across
 momentum choices under real-world distribution shifts.
\end{enumerate}

\subsection{Related Work}

For a matrix parameter $W\in\mathbb R^{m\times n}$, full-matrix AdaGrad \cite{duchi2011adaptive} captures cross-coordinate geometry with an $mn\times mn$ preconditioner, incurring $O(m^2n^2)$ memory and $O(m^3n^3)$ inverse-root cost. Shampoo \cite{pmlr-v80-gupta18a} exploits the matrix structure of $W$ through $m\times m$ and $n\times n$ second-moment factors, reducing these costs to $O(m^2+n^2)$ and $O(m^3+n^3)$ while preserving matrix-valued geometry. Recent work further clarifies the structure captured by this geometry \cite{morwani2025new}.

A substantial literature has improved the scalability and practical behavior of Shampoo. Distributed implementations use blocking, grafting \cite{agarwal2020disentangling}, infrequent inverse-root updates, and parallelization \cite{anil2020scalable,shi2023distributed}, while low-rank, sketching, quantization, and faster inverse-root methods further reduce memory and computation \cite{agarwal2019efficient,feinberg2023sketchy,wang24054,modoranu2026dash}. Alternative Kronecker constructions and structured approximations further refine the tradeoff between computational cost and matrix geometry \cite{mei2023kradagrad,xie2025structured}. More recent work improves Shampoo's geometry through Adam-style updates in its spectral basis \cite{vyas2025soap}, eigenvalue and eigenbasis corrections \cite{eschenhagen2026purifying}, and alternative covariance approximations \cite{lin2025understanding}. Adaptive damping and stabilization of infrequently updated preconditioners further address stale second-moment information \cite{nam2026foam,frans2026stable}. Recent extensions combine Shampoo geometry with richer momentum schemes, as in \cite{yuan2411mars}, while large-scale benchmarks increasingly evaluate Shampoo-derived methods alongside AdamW, Muon, and other modern optimizers \cite{semenov2509benchmarking}. These advances are substantial, but a significant part of the accompanying theory is predominantly developed under stationarity.

In parallel, a broad literature studies optimization under time-varying objectives, with dynamic-regret and tracking guarantees under variation or path-length constraints \cite{besbes2015non,yang2016tracking,JMLR:v24:21-1410,xie2026dynamic}. More recent work studies how momentum and adaptive moment estimation behave under nonstationarity \cite{sahu2026provable,sahu2026adapt,bernardi2026performance,zheng2026analysis}, while recent analyses also investigate the convergence of Adam under heavy-tailed stochastic gradients \cite{pang2026convergence}. Exponential second-moment memory can also generate unfavorable effective learning-rate dynamics even in stationary problems \cite{ReddiKK18}, underscoring the importance of understanding such memory effects. Adaptive methods under changing comparators have also been studied through dynamic-regret formulations \cite{nazari2024dynamic}.

Despite these advances, a theory linking Shampoo's matrix geometry to nonstationary tracking remains largely undeveloped. Existing analyses do not jointly quantify the effects of first-moment memory, evolving two-sided preconditioning, and ridge stabilization on the tracking error. Our work fills this gap, revealing how momentum and matrix geometry shape adaptation
under drift and stochastic noise, and establishing asymptotic minimax-optimal guarantees.

\section{Preliminaries}

Throughout the paper, lowercase letters denote scalars and vectors, uppercase letters matrices, and calligraphic letters sets and classes. For $x,y\in\mathbb R^d$, $\|x\|$ and $\langle x,y\rangle$ denote the Euclidean norm and inner product. For matrices $A,B$ of the same size,
\(
\langle A,B\rangle_F:=\operatorname{tr}(A^{\mathsf T}B),
\,
\|A\|_F:=\sqrt{\langle A,A\rangle_F},
\)
and $\|A\|_{\mathrm{op}}$ denotes the operator norm. For symmetric matrices, $A\succeq B$ denotes the Loewner order and $A\succ0$ positive definiteness; $\lambda_{\min}(A)$ and $\lambda_{\max}(A)$ denote the smallest and largest eigenvalues, respectively. Denote \([x]_+:=\max\{x,0\}\), and for $x>0$, $\log_+x:=\max\{\log x,0\}$, $\mathbf1\{A\}$ denotes the indicator of an event or condition $A$. For a real-valued random variable $X$ and $1\le p<\infty$,
\(
\|X\|_{L^p}:=(\mathbb E|X|^p)^{1/p},
\)
with Euclidean and Frobenius norms replacing $|X|$ for vector- and matrix-valued variables, respectively.

We write $\mathbb E_P,\mathbb P_P$ for expectation and probability under a law $P$, and $\mathbb E_v$ for expectation under a law indexed by hypothesis $v$. For probability laws $P,Q$, $P\otimes Q$ denotes their product law.
If $A\succ0$, we write
\(
\|x\|_A:=\sqrt{x^{\mathsf T}Ax},
\,
\Pi_\Theta^A(z):=\arg\min_{y\in\Theta}\frac12\|y-z\|_A^2.
\)
We use $\Pi_\Theta$ for Euclidean projection and similarly  $\Pi_{\mathcal W}^{F}$ for Frobenius projection onto a matrix set. We write $\operatorname{vec}(W)$ for columnwise vectorization and $\otimes$ for the Kronecker product, so that
\(
\operatorname{vec}(AXB)
=
(B^{\mathsf T}\otimes A)\operatorname{vec}(X).
\)
We write \(I_d\) for the \(d\times d\) identity matrix. For positive sequences \(a_m,b_m\), \(a_m=\mathcal O(b_m)\) means \(a_m\le Cb_m\) for some constant \(C>0\), while \(a_m\lesssim b_m\) (resp. \(\gtrsim,\asymp\)) denotes the corresponding one-sided (resp. reverse, two-sided) bound up to constants independent of \(m\) and the problem parameters. We use \(o_T(1)\) for any quantity converging to zero as \(T\to\infty\). 

We work on a filtered probability space $(\Omega,\mathcal F,(\mathcal F_t)_{t\ge0},\mathbb P)$ and write $\mathbb E[\cdot]$ and $\mathbb P(\cdot)$ for expectation and probability. For a $\sigma$-algebra $\mathcal G$ and $\alpha\ge1$, the conditional $\Psi_\alpha$--Orlicz norm of a real-valued random variable $X$ is
\begin{equation}
\|X\|_{\Psi_\alpha\mid\mathcal G}
:=
\inf\left\{
\begin{aligned}
u>0:\;&u\ \text{is }\mathcal G\text{-measurable and}\,\,\mathbb E\!\left[
\exp\!\left(\frac{|X|^\alpha}{u^\alpha}\right)
\middle|\mathcal G
\right]\le2
\end{aligned}
\right\}.
\end{equation}
For vector and matrix-valued random variables, we apply this definition to Euclidean and Frobenius norms, respectively. Finally, $D_{\mathrm{KL}}(P\|Q)$ denotes the Kullback--Leibler divergence between probability laws $P$ and $Q$.

\subsection{Problem Setup}

Recall the nonstationary stochastic optimization problem
\eqref{eq:nonstationary-convex-opt}. Since Matrix--Shampoo acts on
matrix-valued parameter blocks, we work with
\(W\in\mathcal W\subset\mathbb R^{m\times n}\), where
\(\mathcal W\) is nonempty, compact, and convex. Let \(d=mn\) and define
\(
\Theta:=\{\operatorname{vec}(W):W\in\mathcal W\}\subset\mathbb R^d,
\,
\theta:=\operatorname{vec}(W).
\)
We further write
\(
D:=\sup_{W,V\in\mathcal W}\|W-V\|_F<\infty
\)
for the Frobenius diameter of \(\mathcal W\). By the isometry of
vectorization, \(\Theta\) has the same diameter.

Now, we discuss the following stochastic predictability framework, adapted from \cite{sahu2026provable,sahu2026adapt}.

\begin{assumption}[Stochastic predictability framework]
\label{assump:filtered-predictable-mds}
There exists a filtered probability space
\((\Omega,\mathcal F,(\mathcal F_t)_{t\ge0},\mathbb P)\), with
\(\mathcal F_0=\{\emptyset,\Omega\}\), and an adapted data process
\((X_t)_{t\ge1}\) so that $X_t$ is
\(
\mathcal F_t
\) measurable for each \(t\ge1\). Let \(\Pi_t\) denote the regular conditional law of
\(X_t\) given \(\mathcal F_{t-1}\). Thus \(\Pi_t\) is
\(\mathcal F_{t-1}\)-measurable and
\(
\Pi_t(A)
=
\mathbb P(X_t\in A\mid\mathcal F_{t-1})
\,\text{a.s.}
\)
for every measurable set \(A\). We define the conditional population objective
\begin{equation}
G_t(W)
:=
\mathbb E\!\left[g(W,X_t)\mid\mathcal F_{t-1}\right]
=
\mathbb E_{X\sim\Pi_t}[g(W,X)],
\label{eq:conditional-risk}
\end{equation}
and let
\(
W_t^\star\in\arg\min_{W\in\mathcal W}G_t(W)
\)
denote a measurable population minimizer such that \(\theta_t^\star:=\operatorname{vec}(W_t^\star)\in\operatorname{int}(\Theta)\) lying in the interior of \(\Theta\), where \(\Theta\subset\mathbb R^d\) is compact and convex. Next, assume that, for every \(t\ge1\),

\begin{enumerate}
    \item \textbf{Predictable minimizer.} \(W_t^\star\) is \(\mathcal F_{t-1}\)-measurable.

    \item \textbf{Algorithm adaptedness.}
    The iterate \(W_t\) is \(\mathcal F_{t-1}\)-measurable; it is 
    chosen before observing \(X_t\).

    \item \textbf{Martingale-difference gradient noise.}
    We define the conditional mean gradient map,
    \(
    \mathsf H_t(W)
    :=
    \mathbb E\!\left[
    \nabla_W g(W,X_t)\mid\mathcal F_{t-1}
    \right],
    \)
    and the observed stochastic gradient
    \[
    \mathsf G_t
    :=
    \nabla_W g(W_t,X_t)
    =
    \mathsf H_t(W_t)+\Xi_t.
    \]
    Then the associated noise term \(\Xi_t\) is \(\mathcal F_t\)-measurable and satisfies
    \(
    \mathbb E[\Xi_t\mid\mathcal F_{t-1}]
    =
    0_{m\times n}
    \,\text{a.s.} 
    \) for all $t$.
\end{enumerate}
\end{assumption}
For later analysis, we use the vectorized
quantities
\(
\theta_t:=\operatorname{vec}(W_t),\,
\theta_t^\star:=\operatorname{vec}(W_t^\star),\,
g_t:=\operatorname{vec}(\mathsf G_t),\,
\xi_t:=\operatorname{vec}(\Xi_t),
\)
and define
\(
h_t(\operatorname{vec}(W))
:=
\operatorname{vec}(\mathsf H_t(W)).
\)
The stochastic oracle can therefore be written equivalently as in \cite{sahu2026adapt},
\begin{equation}
g_t=h_t(\theta_t)+\xi_t,
\qquad
\mathbb E[\xi_t\mid\mathcal F_{t-1}]=0.
\label{eq:vector-oracle}
\end{equation}

The conditional distribution $\Pi_t$ allows the data-generating mechanism to evolve with the observed history, a standard formulation in nonstationary stochastic optimization
\cite{sahu2026provable,NEURIPS2021_62e7f2e0,sahu2026adapt}. It covers settings such as policy optimization and reinforcement learning
\cite{kakade2002approximately,schulman2015trust}, contextual bandits and online recommendation
\cite{li2010contextual}, continual learning \cite{parisi2019continual}, and federated learning with time-varying client distributions
\cite{kairouz2021advances}.

\begin{algorithm}[h]
\caption{Matrix--Shampoo}
\label{alg:shampoo}
\begin{algorithmic}[1]
\Require $W_1\in\mathcal W$, horizon $T\ge1$, effective step size
$\alpha>0$, $\beta_1,\beta_2\in[0,1)$, and $\varepsilon>0$, set $\eta=\sqrt{\varepsilon}\,\alpha$,
\State Initialize
$\mathsf M_0=0_{m\times n}$,
$A_0^L=0_{m\times m}$,
$A_0^R=0_{n\times n}$
\For{$t=1,\ldots,T$}
    \State $\mathsf G_t\gets\nabla_W g(W_t,X_t)$

    \State
    \(
    \begin{aligned}[t]
    \mathsf M_t
    &\gets
    \beta_1\mathsf M_{t-1}
    +(1-\beta_1)\mathsf G_t,\,\,\,\,
    \widehat{\mathsf M}_t
    &\gets
    \frac{\mathsf M_t}{1-\beta_1^t}
    \end{aligned}
    \)

    \State
    \(
    \begin{aligned}[t]
    A_t^L
    &\gets
    \beta_2A_{t-1}^L
    +(1-\beta_2)\mathsf G_t\mathsf G_t^{\mathsf T},\\
    A_t^R
    &\gets
    \beta_2A_{t-1}^R
    +(1-\beta_2)\mathsf G_t^{\mathsf T}\mathsf G_t
    \end{aligned}
    \)

    \State
    \(
    \begin{aligned}[t]
    D_t^L
    &\gets
    \frac{A_t^L}{1-\beta_2^t},
    &
    L_t
    &\gets
    \varepsilon I_m+D_t^L,\\
    D_t^R
    &\gets
    \frac{A_t^R}{1-\beta_2^t},
    &
    R_t
    &\gets
    \varepsilon I_n+D_t^R
    \end{aligned}
    \)

    \State
    \(
    \begin{aligned}[t]
    W_{t+1}
    \gets
    \Pi_{\mathcal W}^{F}\!\left(
    W_t
    -\eta
    L_t^{-1/4}
    \widehat{\mathsf M}_t
    R_t^{-1/4}
    \right)
    \end{aligned}
    \)
\EndFor
\end{algorithmic}
\end{algorithm}

The algorithm specified above is the bias-corrected EMA variant of Matrix--Shampoo \cite{shi2023distributed}. Let
$\beta_1,\beta_2\in[0,1)$ denote the first and second-moment parameters,
$\varepsilon, \alpha > 0$ be the ridge parameter and effective step size respectively. For the analysis, we further define
\(
\nu_t:=\operatorname{vec}(\widehat{\mathsf M}_t),
\,
 Q_t(\varepsilon)
:=
\sqrt{\varepsilon}\,
\bigl(R_t^{-1/4}\otimes L_t^{-1/4}\bigr).
\)
Since
$\operatorname{vec}(AXB)=(B^{\mathsf T}\otimes A)\operatorname{vec}(X)$,
the matrix update is equivalently
\begin{equation}
\theta_{t+1}
=
\Pi_\Theta\!\left(
\theta_t-\alpha  Q_t(\varepsilon)\nu_t
\right).
\label{eq:vector-shampoo-update}
\end{equation}
We use $Q_t$ and $Q_t(\varepsilon)$ interchangeably. Since both $\nu_t$ and $Q_t$ depend on the current stochastic gradient
$\mathsf G_t$, they are $\mathcal F_t$-measurable.

Our tracking analysis is formulated directly in terms of the conditional mean-gradient map $\mathsf H_t$, under the standard Lipschitz-gradient condition commonly used in stochastic optimization \cite{ReddiKK18,defossez2022simple,li2023convergence}.

\begin{assumption}[Lipschitz continuity]
\label{assumption:operator}
There exist constants $0\le L<\infty$ such that, almost surely, for
every $t$ and all $W,V\in\mathcal W$,
\begin{align}
\|\mathsf H_t(W)-\mathsf H_t(V)\|_F
&\le
L\|W-V\|_F,
\label{eq:lipschitz-H}
\end{align}
\end{assumption}

Equivalently, under vectorization, \(
\|h_t(x)-h_t(y)\|
\le L\|x-y\|.\)
In particular, since $\mathcal W$ has diameter $D$, under mild regularity condition, $h_t(\theta_t^\star)=0$, and
\begin{equation}
\sup_{W\in\mathcal W}\|\mathsf H_t(W)\|_F
=
\sup_{\theta\in\Theta}\|h_t(\theta)\|
\le LD.
\label{eq:mean-gradient-bound}
\end{equation}
To obtain finite-horizon high-probability guarantees, we impose a
conditional light-tail condition on the stochastic gradient noise \cite{sahu2026provable,sahu2026adapt, pmlr-v99-harvey19a, NEURIPS2021_62e7f2e0}.

\begin{assumption}[Conditional Frobenius-norm sub-Gaussian noise]
\label{assump:cond-subgauss-noise}
There exists $\sigma>0$ such that, for every $t$,
\begin{equation}
\mathbb E\!\left[
\exp\!\left(
\frac{\|\Xi_t\|_F^2}{\sigma^2}
\right)
\middle|
\mathcal F_{t-1}
\right]
\le 2
\,\,\,\,\text{a.s.}
\label{eq:conditional-subgaussian}
\end{equation}
Equivalently,
\(
\mathbb E\!\left[
\exp\!\left(\frac{\|\xi_t\|^2}{\sigma^2}\right)
\middle|\mathcal F_{t-1}
\right]
\le 2
\,\,\text{a.s.}
\)
\end{assumption}

Assumption~\ref{assump:cond-subgauss-noise} permits temporally dependent gradient noise while providing the concentration needed for finite-horizon high-probability bounds. If instead only the one-dimensional projections of \(\xi_t\) are assumed to be \(\sigma\)-sub-Gaussian, then \(\mathbb E\|\xi_t\|^2\lesssim d\sigma^2\). Notably, our tracking guarantees accommodate unbounded stochastic
gradients and do not require the uniform bounded-gradient assumption
imposed in \cite{defazio2101adaptivity}.

\section{Main Results}

We first establish a finite-horizon bound for tracking the moving population optimizer. In addition to Assumptions~\ref{assumption:operator} and \ref{assump:cond-subgauss-noise}, we impose ordinary strong monotonicity directly on the conditional mean-gradient map \cite{sahu2026provable}.

\begin{assumption}[Strong monotonicity]
\label{assump:strong-monotonicity}
There exists $0 < \mu \leq L$ such that, almost surely, for every $t$,
\begin{equation}
\left\langle
\mathsf H_t(W)-\mathsf H_t(V),\,W-V
\right\rangle_F
\ge
\mu\|W-V\|_F^2,
\label{eq:strong-monotonicity-H}
\end{equation}
\end{assumption}
for  all $W,V\in\mathcal W$. We measure nonstationarity through the squared path variation
\begin{equation}
\sum_{t=1}^{T}
\|W_{t+1}^\star-W_t^\star\|_F^2
=
\sum_{t=1}^{T}
\|\theta_{t+1}^\star-\theta_t^\star\|^2
\le \mathcal D_T ,
\label{eq:squared-path-variation}
\end{equation}
without imposing a pointwise restriction on the individual increments. Our analysis does not introduce a predictable proxy preconditioner or impose adaptive strong monotonicity, unlike \cite{sahu2026adapt}. We control the dependence between the current stochastic gradient, momentum, and the preconditioner at the level of the accumulated tracking error. This leads naturally to a finite-horizon bound on the average tracking error.

We next establish a high-probability bound on the average tracking
error of the iterates \(\{W_t\}\) generated by
Algorithm~\ref{alg:shampoo}. 
\(c_{\beta_1}:=\beta_1/(1-\beta_1)\), and \(A_{\beta_1}:=1/(1-\beta_1)\).
Assume that, almost surely for every \(t\),
\(
\lambda_{\min}(D_t^L)\ge\underline\lambda_L,
\,
\lambda_{\min}(D_t^R)\ge\underline\lambda_R,
\)
for some \(\underline\lambda_L,\underline\lambda_R\ge0\),
and define
\begin{align}\label{eq:kappa}
\kappa_\varepsilon
&:=
\frac{\varepsilon}
{\sqrt{(\varepsilon+\underline\lambda_L)
       (\varepsilon+\underline\lambda_R)}},\qquad
K_{\beta_1,\varepsilon}:=4c_{\beta_1}\sqrt{\kappa_\varepsilon}
   +2A_{\beta_1}+2\kappa_\varepsilon.
\end{align}

\begin{theorem}[High-probability average tracking]
\label{thm:main-tracking}
Suppose Assumptions~\ref{assump:filtered-predictable-mds}--
\ref{assump:strong-monotonicity} hold, and suppose that the optimizer path satisfies \eqref{eq:squared-path-variation}.
Fix $\delta\in(0,1)$ and let $\varepsilon>0$ as in Algorithm~\ref{alg:shampoo}. If
\(0<\alpha<
\frac{\mu(1-\beta_1)}{384L^2},\)
the following holds with probability at least $1-\delta$,
\begin{align}
\frac1T\sum_{t=1}^T
\|W_t-W_t^\star\|_F^2
\lesssim \Bigg\{
\underbrace{
\frac{\|W_1-W_1^\star\|_F^2}{\mu\alpha T}
}_{\text{initialization}}
+&
\underbrace{
\frac{\mathcal D_T}{\mu^2\alpha^2T}
}_{\text{optimizer drift}}
\nonumber+
\underbrace{
\frac{\beta_1^2}{1-\beta_1}
\frac{\mathcal D_T}{\mu\alpha T}
}_{\substack{\text{momentum-induced}\\ \text{drift}}}
+
\underbrace{
\frac{\alpha}{\mu}
K_{\beta_1,\varepsilon}  \, \mathfrak Z_{T,\beta_1,\delta}
}_{\substack{\text{filtered stochastic}\\ \text{noise}}}
\nonumber+
\underbrace{
\frac{\sigma^2}{\mu^2T}
\log\frac8\delta
}_{\substack{\text{stochastic}\\ \text{fluctuation}}}\\
&+
\underbrace{
\frac{\beta_1}{1-\beta_1}
\frac{D\mathfrak B_\delta}{\mu T}
}_{\substack{\text{momentum boundary} \\ \text{term}}}
+
\underbrace{
\frac{\mathfrak G_\delta}{\mu^2\varepsilon^2}
}_{\text{preconditioner perturbation}}
\Bigg\}
\label{eq:main-tracking-bound}
\end{align}
\(
\mathfrak Z_{T,\beta_1,\delta} \asymp
\sigma^2\left[
\frac{1-\beta_1}{1+\beta_1}\,
+
\frac{\log(4/\delta)}{T(1-\beta_1^2)}
+
\frac{\beta_1^2\log(4/\delta)}
{T(1-\beta_1^2)^2} + o_T(1)
\right], 
\mathfrak B_\delta \asymp
LD+\sigma\sqrt{\log(4/\delta)},
\,\,
\mathfrak G_\delta
\asymp
(LD)^6+\sigma^6\log^3(4/\delta).\\
\label{eq:BG-main}
\)
The exact definitions of these terms are in \hyperref[sec:app]{Appendix}.
\end{theorem}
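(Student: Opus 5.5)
We will run a one-step Lyapunov (distance-to-optimum) recursion in the vectorized form \eqref{eq:vector-shampoo-update} and then sum over $t$. Write $e_t:=\theta_t-\theta_t^\star$. Since $\theta_t^\star\in\Theta$, nonexpansiveness of $\Pi_\Theta$ gives
\[
\|\theta_{t+1}-\theta_t^\star\|^2\le \|e_t\|^2-2\alpha\langle e_t,Q_t\nu_t\rangle+\alpha^2\|Q_t\nu_t\|^2 .
\]
The drift is then absorbed by Young's inequality, $\|e_{t+1}\|^2\le(1+\tfrac{\mu\alpha}{2})\|\theta_{t+1}-\theta_t^\star\|^2+(1+\tfrac{2}{\mu\alpha})\|\theta_{t+1}^\star-\theta_t^\star\|^2$. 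Summed against \eqref{eq:squared-path-variation}, the second term produces the $\mathcal D_T/(\mu^2\alpha^2T)$ drift term. Since $\|Q_t\|_{\op}\le 1$ (because $L_t,R_t\succeq\varepsilon I$), the quadratic term is at most $\alpha^2\|\nu_t\|^2$. For the cross term we decompose
\[
Q_t=\bar Q+(Q_t-\bar Q),
\]
where $\bar Q$ is a fixed deterministic proxy, e.g.\ $\sqrt{\kappa_\varepsilon}$-scaled identity bounds via $\lambda_{\min}$. We will not build a predictable proxy; instead we charge $Q_t-\bar Q$ directly. Using the resolvent-type identity $L_t^{-1/4}-(\varepsilon I)^{-1/4}$ and $\|D_t^L\|_{\op}\le \max_s\|\mathsf G_s\|_F^2$, the perturbation is of order $\|D_t\|/\varepsilon$. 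Combined with a Cauchy--Schwarz split $2\alpha|\langle e_t,(Q_t-\bar Q)\nu_t\rangle|\le \tfrac{\mu\alpha}{4}\|e_t\|^2+\tfrac{4\alpha}{\mu}\|Q_t-\bar Q\|^2\|\nu_t\|^2$, and after dividing by $\mu\alpha T$, this yields a term of order $\max_t\|\mathsf G_t\|_F^6/(\mu^2\varepsilon^2)$, i.e.\ the $\mathfrak G_\delta/(\mu^2\varepsilon^2)$ term. High-probability control of $\max_t\|\mathsf G_t\|_F\le LD+\|\Xi_t\|_F$ comes from Assumption~\ref{assump:cond-subgauss-noise} plus a union/Orlicz bound, which gives $\mathfrak G_\delta\asymp (LD)^6+\sigma^6\log^3(4/\delta)$ and $\mathfrak B_\delta$.

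For the main cross term $\langle e_t,\bar Q\nu_t\rangle$ we unroll the momentum. The bias-corrected $\nu_t$ is a convex combination of $g_s=h_s(\theta_s)+\xi_s$, $s\le t$, with weights $\propto(1-\beta_1)\beta_1^{t-s}$. We write
\[
\nu_t=h_t(\theta_t)+\underbrace{\textstyle\sum_s w_{t,s}\bigl(h_s(\theta_s)-h_t(\theta_t)\bigr)}_{\text{lag}}+\underbrace{\textstyle\sum_s w_{t,s}\xi_s}_{\text{filtered noise}} .
\]
Strong monotonicity (Assumption~\ref{assump:strong-monotonicity}) with $h_t(\theta_t^\star)=0$ gives $\langle e_t,h_t(\theta_t)\rangle\ge\mu\|e_t\|^2$. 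We bound the lag via Lipschitzness, $\|h_s(\theta_s)-h_t(\theta_t)\|\le L\|\theta_s-\theta_t\|+\|h_s(\theta_t)-h_t(\theta_t)\|$. The iterate increments are $\le\alpha\|\nu\|$, and the change in $h$ over time is controlled, via $h_s(\theta_s^\star)=0$ and strong monotonicity/Lipschitz, by $L\|\theta_s^\star-\theta_t^\star\|$. Summing the geometric weights produces the $c_{\beta_1}$ factors: the $\frac{\beta_1^2}{1-\beta_1}\mathcal D_T/(\mu\alpha T)$ momentum-drift term, plus $\alpha^2L^2\|\nu\|^2$ pieces that are absorbed using the stepsize condition $\alpha<\mu(1-\beta_1)/(384L^2)$ (this is where the constant $384$ arises). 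The boundary terms from summation by parts (the first $\sim(1-\beta_1)^{-1}$ steps and the last iterate, bounded by $D$ times $\max\|\mathsf G\|$) give the $\frac{\beta_1}{1-\beta_1}D\mathfrak B_\delta/(\mu T)$ term.

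The noise contributes two pieces. The linear piece $\sum_t\langle e_t,\bar Q\xi_t\rangle$ (with the momentum reorganized by swapping sums, so each $\xi_s$ multiplies an $\mathcal F_{s-1}$-measurable vector) is a martingale with sub-Gaussian increments. We will apply a Freedman-type self-normalized bound, $\sum_t\langle a_t,\xi_t\rangle\le \tfrac{\mu}{8}\sum\|a_t\|^2/\sigma^2\cdot\sigma^2+\tfrac{C\sigma^2}{\mu}\log\tfrac8\delta$, which gives the $\sigma^2\log(8/\delta)/(\mu^2T)$ fluctuation term after absorbing $\|e_t\|^2$. The quadratic piece $\alpha^2\sum\|\nu_t\|^2$ contains $\|\sum_s w_{t,s}\xi_s\|^2$, whose conditional mean is $\sigma^2(1-\beta_1)/(1+\beta_1)$ plus bias-correction transients. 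Concentration via a Hanson--Wright/Orlicz argument for martingale quadratic forms yields $\mathfrak Z_{T,\beta_1,\delta}$, and the prefactor $K_{\beta_1,\varepsilon}$ collects the contributions from the $\bar Q$ cross term, the $A_{\beta_1}$ lag, and the $\kappa_\varepsilon$ quadratic. Finally, all $\|e_t\|^2$ terms are moved to the left with net coefficient $\gtrsim\mu\alpha$, telescoped, divided by $\mu\alpha T$, and the probability budget is split across the two or three concentration events.

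We expect the main obstacle to be the noise cross term. Because $Q_t$ and $\nu_t$ both depend on $\xi_t$, $\langle e_t,Q_t\xi_t\rangle$ is not a martingale difference. The plan is to route all of this dependence into the $Q_t-\bar Q$ perturbation, paid at cost $\mathfrak G_\delta/\varepsilon^2$, so that only the deterministic $\bar Q$ multiplies the noise. If that proves too lossy, the fallback is to replace $\bar Q$ by $Q_{t-1}$ and bound $\|Q_t-Q_{t-1}\|\lesssim(1-\beta_2)\|\mathsf G_t\|^2/\varepsilon$.
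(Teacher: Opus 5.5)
Your outer skeleton matches the paper: the projected one-step recursion, comparing $Q_t$ with $I_d$ at an $\varepsilon^{-2}$ cost, and a filtered-noise energy for the quadratic term. The momentum step, however, which is the heart of the proof, fails as written, for two reasons.

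First, the noise term is not a martingale. After you swap sums in $\sum_t\langle e_t,z_t\rangle$, the coefficient of $\xi_s$ is $\sum_{t\ge s}w_{t,s}e_t$. For $t>s$ the error $e_t$ is \emph{not} $\mathcal F_{s-1}$-measurable. Already $\theta_{s+1}=\Pi_\Theta(\theta_s-\alpha Q_s\nu_s)$ with $\nu_s\ni b_s\xi_s$, so (ignoring the projection) $\langle e_{s+1},\xi_s\rangle$ contains the non-centred term $-\alpha b_s\langle Q_s\xi_s,\xi_s\rangle$. Summed over the lags, this is a systematic bias of order $\alpha\beta_1\sigma^2$ per round, not a martingale fluctuation. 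It also cannot be routed into $Q_t-\bar Q$, since it is present even when $Q_t\equiv I_d$.

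Second, your lag bound relies on $\|h_s(\theta_t)-h_t(\theta_t)\|\lesssim L\|\theta_s^\star-\theta_t^\star\|$, which the assumptions do not give. Take $h_s=\mu(\cdot-\theta^\star)$ and $h_t=L(\cdot-\theta^\star)$: then $\mathcal D_T=0$ but the difference is $(L-\mu)\|e_t\|$. The crude fallback $\|h_s(\theta_t)\|\le L\|\theta_t-\theta_s^\star\|$ puts roughly $-2L\beta_1\|e_t\|^2$ against $\mu\|e_t\|^2$, which forces $\beta_1\lesssim\mu/L$, a restriction the theorem does not impose.

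The paper resolves both issues with one summation-by-parts identity. Since $\nu_t-g_t=d_t(\nu_{t-1}-\nu_t)$ with $0\le d_t\le c_{\beta_1}$ and $d_{t+1}-d_t=\beta_1^t$,
\[
\sum_{t=1}^T\langle e_t,\nu_t\rangle=\sum_{t=1}^T\langle e_t,g_t\rangle+\sum_{t=1}^{T-1}d_{t+1}\langle e_{t+1}-e_t,\nu_t\rangle+\sum_{t=1}^{T-1}\beta_1^t\langle e_t,\nu_t\rangle-d_T\langle e_T,\nu_T\rangle .
\]
In this identity:
\begin{itemize}
\item noise enters only through $\langle e_t,\xi_t\rangle$ with predictable $e_t$;
\item strong monotonicity is used only at matched indices, $\langle e_t,h_t(\theta_t)\rangle\ge\mu\|e_t\|^2$, so no comparison of $h_s$ with $h_t$ is needed;
\item the increment term is bounded pathwise via $\|e_{t+1}-e_t\|\le\alpha\sqrt{\kappa_\varepsilon}\|\nu_t\|+\|\theta_{t+1}^\star-\theta_t^\star\|$.
\end{itemize}
That increment bound is exactly where the $4c_{\beta_1}\sqrt{\kappa_\varepsilon}$ part of $K_{\beta_1,\varepsilon}$ comes from. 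After Cauchy--Schwarz and Young, it also produces the $\frac{\beta_1^2}{1-\beta_1}\frac{\mathcal D_T}{\mu\alpha T}$ term. The last two sums form the boundary term $D\mathsf B_T$.

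Third, bounding the perturbation through $\max_{t\le T}\|\mathsf G_t\|_F^6$ cannot produce the stated $T$-free $\mathfrak G_\delta\asymp(LD)^6+\sigma^6\log^3(4/\delta)$. A union bound over $T$ rounds gives $\sigma^6\log^3(T/\delta)$, and for i.i.d.\ Gaussian noise the maximum really does grow like $\sigma\sqrt{\log T}$. The same objection applies to your boundary term bounded via $\max\|\mathsf G\|$.

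The paper never takes a maximum. It uses:
\begin{itemize}
\item $\|I_d-Q_t\|_{\mathrm{op}}\le U_t/(2\varepsilon)$, with $U_t$ the bias-corrected $\beta_2$-EMA of $\|g_s\|^2$;
\item $\|\nu_t\|^2\le V_t$, with $V_t$ the $\beta_1$-EMA of $\|g_s\|^2$;
\item $U_t^2V_t\le(2U_t^3+V_t^3)/3$;
\item a uniform $\ell_3$ bound $\sum_t|(K_\rho x)_t|^3\le6\sum_t|x_t|^3$ for bias-corrected EMA operators, obtained from Hardy's $\ell_2$ bound via Schur's test plus Riesz--Thorin.
\end{itemize}
Together these reduce $\mathsf P_T=\sum_t\|(I_d-Q_t)\nu_t\|^2$ to $\frac{3}{2\varepsilon^2}\sum_t\|g_t\|^6$. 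The average $\frac1T\sum_t\|g_t\|^6$ is then controlled by Markov's inequality applied to its $L^p$ norm with $p=\log(4/\delta)$, which is where $\log^3(4/\delta)$ comes from. The boundary quantity $\mathsf B_T$ is handled by the same $L^p$ argument, giving the $T$-free $\mathfrak B_\delta$.
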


Theorem~\ref{thm:main-tracking} separates the two principal effects of first-moment memory, controlled by \(\beta_1\), and the evolving matrix preconditioner, controlled through the ridge parameter \(\varepsilon\). The initialization term measures sensitivity to the starting point, while the direct drift term captures the difficulty of following a moving optimizer. First-moment memory contributes an additional drift penalty through
stale gradient information, while temporal averaging reduces the
leading stochastic-noise contribution. The filtered noise term
\(K_{\beta_1,\varepsilon}\mathfrak Z_{T,\beta_1,\delta}\)
makes its joint dependence on first-moment memory and ridge
stabilization explicit. The  preconditioner term $\frac{\mathfrak G_\delta}{\mu^2\varepsilon^2}$ quantifies the perturbation relative to identity, induced by the evolving matrix geometry.

The bound in \eqref{eq:main-tracking-bound} makes the drift--noise tradeoff explicit while retaining its dependence on matrix geometry. The \(\beta_1\)-dependent momentum-induced drift term
\(
\frac{\mathcal D_T}{\mu\alpha T}
\frac{\beta_1^2}{1-\beta_1}
\)
increases with \(\beta_1\), reflecting greater sensitivity to optimizer drift. For fixed \(\beta_1<1\) and
\(\delta\), the filtered-noise contribution satisfies, as
\(T\to\infty\),
\[
\begin{aligned}
K_{\beta_1,\varepsilon}
\mathfrak Z_{T,\beta_1,\delta}
\asymp \sigma^2
\Bigg[
&4\sqrt{\kappa_\varepsilon}-2\kappa_\varepsilon+\frac{2(1-2\sqrt{\kappa_\varepsilon}
                +2\kappa_\varepsilon)}
       {1+\beta_1}
\Bigg],
\end{aligned}
\]
its leading coefficient decreases with \(\beta_1\), even though
\(K_{\beta_1,\varepsilon}\) itself increases, reflecting stochastic noise suppression through temporal averaging. Thus, apart from the
\(T^{-1}\) momentum-boundary and finite-horizon noise corrections,
the leading bound favors longer memory when noise dominates and
shorter memory through faster adaptation when optimizer drift dominates. 

Theorem~\ref{thm:main-tracking} exposes the stability--geometry
tradeoff induced by ridge regularization. The filtered-noise
coefficient \(K_{\beta_1,\varepsilon}\) is nondecreasing in
\(\kappa_\varepsilon\), and hence in \(\varepsilon\).
A smaller ridge preserves stronger matrix preconditioning
and can tighten the filtered-noise bound, but enlarges the
\(\varepsilon^{-2}\) perturbation term. Conversely, a larger
ridge reduces this perturbation while driving
\(Q_t(\varepsilon)\to I\), progressively removing Shampoo's
directional rescaling and making the update approach
momentum-SGD at fixed \(\alpha\).
These stability concerns accord with empirical reports of
adaptive-optimizer instability \cite{bai2025adaptive}
and degraded adaptation under distribution shifts
\cite{lyle2023understanding,lee2024slow}.

To formalize minimax optimality, let
\(
\mathcal M_T
=
\mathcal M_T(\mu,L,\sigma,\mathcal D_T,\mathcal W)
\)
denote the class of all filtered nonstationary stochastic optimization problems satisfying Assumptions~\ref{assump:filtered-predictable-mds}--\ref{assump:strong-monotonicity} and \eqref{eq:squared-path-variation}
For \(\delta\in(0,1)\), define the high-probability minimax tracking radius
\begin{align}\label{eq:def_rtdelta}
\mathfrak R_{T,\delta}
:=\inf_{\mathcal A}\inf\Bigg\{q\ge0:
\sup_{P\in\mathcal M_T}\mathbb P_P\! &\left(
\frac1T\sum_{t=1}^T
\|W_t^{\mathcal A}-W_t^\star\|_F^2>q
\right) \le\delta\Bigg\}
\end{align}
where the outer infimum is taken over all possibly randomized causal optimization procedures \(\mathcal A\).

\begin{theorem}[Asymptotic Minimax optimality under squared path variation]
\label{thm:minimax}
Suppose there exists \(W^0\in\mathcal W\), \(U\in\mathbb R^{m\times n}\) with \(\|U\|_F=1\), and \(R>0\) such that
\begin{align}\label{eq:line_condition}
\{W^0+zU:\ |z|\le R\}\subseteq\mathcal W .
\end{align}
For $T^{-1/2}\lesssim \mathcal D_T\lesssim T$, define
\(
J_\star
:=
\left(
\frac{\mu^2\mathcal D_TT^2}{\sigma^2}
\right)^{1/3},
\,
r_T^\star
:=
\frac{\sigma^2J_\star}{\mu^2T}
=
\frac{\sigma^{4/3}\mathcal D_T^{1/3}}
{\mu^{4/3}T^{1/3}} .
\)
Then there is a constant $c_1>0$ such that, every possibly randomized causal optimization procedure \(\mathcal A\) satisfies
\[
\sup_{P\in\mathcal M_T}
\mathbb P_P\!\left\{
\frac1T\sum_{t=1}^T
\|W_t^{\mathcal A}-W_t^\star\|_F^2
\ge c_1\,r_T^\star
\right\}
\ge \frac12 
\]
 for all sufficiently large  $T > T_0$. The above lower bound combined with \eqref{eq:def_rtdelta} implies that, for every \(\delta<1/2\), we have 
\(
\mathfrak R_{T,\delta}\ge c_1\,r_T^\star .
\)
Conversely, fix \(\delta\in(0,1/2)\) and \(\bar\beta_1<1\), and suppose \(0\le\beta_1\le\bar\beta_1\), then there exists tuning choices
\((\alpha_\star,\varepsilon_\star)\), common to all
\(P\in\mathcal M_T\), and a constant \(c_2>0\) such that for all $T > T_0$ Matrix--Shampoo satisfies
\[
\sup_{P\in\mathcal M_T}
\mathbb P_P\!\left\{
\frac1T\sum_{t=1}^T
\|W_t^{\text{Shampoo}}-W_t^\star\|_F^2
\geq
c_2\,r_T^\star
\right\}
\le\delta .
\]
Consequently,
\(
\mathfrak R_{T,\delta}
\asymp
\,r_T^\star.
\)
The constants \(c_1, c_2\) may depend on the fixed problem parameters. 
If, additionally, \(T^{-1/2}=o(\mathcal D_T)\) and
\(\mathcal D_T=o(T)\), these constants can be
chosen independent of all problem parameters, for large enough $T$, obtaining the minimax optimal rate.
\end{theorem}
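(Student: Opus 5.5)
The proof has two halves: a lower bound via a block-wise two-point (Le Cam) construction along the segment in \eqref{eq:line_condition}, and an upper bound obtained by tuning Theorem~\ref{thm:main-tracking}.

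\emph{Lower bound.} I would reduce to a one-dimensional Gaussian problem on the line $\{W^0+zU:|z|\le R\}$. Take quadratic objectives $g(W,X)=\frac{\mu}{2}\|W-W^0-z_tU\|_F^2-\langle \sigma' X,W\rangle_F$, with $X\sim N(0,I)$ and $\sigma'\asymp\sigma$ chosen so that Assumption~\ref{assump:cond-subgauss-noise} holds. The minimizer is then $W_t^\star=W^0+z_tU$, and $L=\mu$ satisfies Assumptions~\ref{assumption:operator} and \ref{assump:strong-monotonicity}.

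Split $[T]$ into $J\asymp J_\star$ blocks of length $\ell=T/J$. On block $j$ set $z_t=v_j\rho$ with $v_j\in\{\pm1\}$ and amplitude
\[
\rho\asymp \frac{\sigma}{\mu\sqrt{\ell}},
\]
which is the scale at which $\ell$ noisy gradients cannot distinguish $\pm\rho$. The path variation is then at most $4J\rho^2\asymp J\sigma^2/(\mu^2\ell)=J^2\sigma^2/(\mu^2T)$. Setting this equal to $\mathcal D_T$ gives $J\asymp(\mu^2\mathcal D_T T/\sigma^2)^{1/2}$.

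Here I must reconcile with the stated $J_\star=(\mu^2\mathcal D_TT^2/\sigma^2)^{1/3}$. I would instead fix the block length by the squared-variation budget and check which choice maximizes $\rho^2$ subject to $J\rho^2\lesssim\mathcal D_T$ and $\rho^2\lesssim\sigma^2 J/(\mu^2T)$. Taking both equalities gives $\rho^4\asymp\sigma^2\mathcal D_T/(\mu^2T)$. That is not the claimed rate either, so this step needs the exact scaling worked out carefully. The target $r_T^\star=\sigma^2J_\star/(\mu^2T)$ suggests $\rho^2\asymp\sigma^2 J_\star/(\mu^2T)$, and I would adjust $J$ so that this matches $r_T^\star$.

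Given a valid scaling, the argument proceeds by Assouad. Because the iterate $W_t$ is $\mathcal F_{t-1}$-measurable, the per-step KL between hypotheses differing only in $v_j$ is $\mu^2(2\rho)^2/(2\sigma'^2)$ per step of block $j$. Summing over the block keeps the total KL $O(1)$. Assouad then forces a constant fraction of blocks to incur error $\gtrsim\ell\rho^2$, so the expected average error is $\gtrsim\rho^2\asymp r_T^\star$. To convert this into the probability-$\tfrac12$ statement, I would use a Paley--Zygmund / bounded-loss argument, using that the loss is bounded by $D^2$. The conditions $T^{-1/2}\lesssim\mathcal D_T\lesssim T$ guarantee $1\le J\le T$ and $\rho\le R$ for large $T$. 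I expect this scaling bookkeeping to be the main obstacle.

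\emph{Upper bound.} I would apply Theorem~\ref{thm:main-tracking} with $\beta_1\le\bar\beta_1$, so that $A_{\beta_1}$, $c_{\beta_1}$ and $K_{\beta_1,\varepsilon}\le 4c_{\bar\beta_1}+2A_{\bar\beta_1}+2$ are all constants.

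Choose $\varepsilon_\star$ growing with $T$, so that $\mathfrak G_\delta/(\mu^2\varepsilon_\star^2)\lesssim r_T^\star$, for instance $\varepsilon_\star\asymp\sqrt{\mathfrak G_\delta/(\mu^2r_T^\star)}$. Then the remaining dominant terms are drift $\mathcal D_T/(\mu^2\alpha^2T)$ and noise $\alpha\sigma^2/\mu$. Balancing these gives $\alpha_\star\asymp(\mathcal D_T/(\mu\sigma^2T))^{1/3}$ and value $\sigma^{4/3}\mathcal D_T^{1/3}/(\mu^{4/3}T^{1/3})=r_T^\star$.

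It remains to check that the other terms are lower order:
\begin{itemize}[nosep]
\item The initialization term $D^2/(\mu\alpha_\star T)$ and the boundary term $D\mathfrak B_\delta/(\mu T)$ are $o(r_T^\star)$ precisely when $\mathcal D_T\gg T^{-1/2}$.
\item The momentum drift term $\mathcal D_T/(\mu\alpha_\star T)$ is $\lesssim\mathcal D_T/(\mu^2\alpha_\star^2T)$ because $\alpha_\star\lesssim 1/\mu$.
\item The step-size condition $\alpha_\star<\mu(1-\bar\beta_1)/(384L^2)$ holds for large $T$ since $\mathcal D_T\lesssim T$. When $\mathcal D_T=o(T)$ this holds automatically; otherwise constants depending on the parameters absorb it.
\end{itemize}
Combining the two halves yields $\mathfrak R_{T,\delta}\asymp r_T^\star$.
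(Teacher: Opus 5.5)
\textbf{Upper bound.} This half is essentially the paper's argument. You take a growing ridge $\varepsilon_\star\gtrsim\sqrt{\mathfrak G_\delta}/(\mu\sqrt{r_T^\star})$ and balance $\mathcal D_T/(\mu^2\alpha^2T)$ against $\alpha\sigma^2/\mu$. When $\mathcal D_T\asymp T$ you must cap $\alpha$ at $\alpha_{\max}/2$; this costs only a parameter-dependent constant, and you should make the cap explicit.

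\textbf{Lower bound: step functions cannot reach $r_T^\star$.} This half has a genuine gap, and it is not bookkeeping. Piecewise-constant hypotheses $z_t=v_j\rho$ cannot reach $r_T^\star$ for any choice of $J$, so your plan to ``adjust $J$'' cannot succeed.
\begin{itemize}
\item Each sign change at a block boundary costs $\asymp\rho^2$ of squared path variation, and typical hypotheses have $\asymp J$ sign changes. The budget therefore forces $J\rho^2\lesssim\mathcal D_T$.
\item Combined with your indistinguishability scale $\rho^2\asymp\sigma^2J/(\mu^2T)$, this gives at best $\rho^2\asymp\sigma\mathcal D_T^{1/2}/(\mu T^{1/2})$. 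That is smaller than $r_T^\star$ by the factor $(\mu^2\mathcal D_T/(\sigma^2T))^{1/6}=(J_\star/T)^{1/2}$, which tends to $0$ whenever $\mathcal D_T=o(T)$.
\item Equivalently, steps with $J=J_\star$ and $\rho^2=r_T^\star$ use squared variation $\asymp J_\star r_T^\star=(T/J_\star)\mathcal D_T$. This overshoots the budget by the block length.
\end{itemize}

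\textbf{The missing idea: tents instead of steps.} On block $B_j$ set $W_t^{\star,v}=\widetilde W^0+a v_j\phi_j(t)U$, where $\phi_j$ is the triangular function of Lemma~\ref{lem:triangles}. It vanishes at both endpoints, so there is no movement between blocks.
\begin{itemize}
\item Its increments are $O(a/h)$, so a block costs only $O(a^2/h)$ of squared variation and the whole path costs $O(a^2J^2/T)$.
\item Since $\sum_{t\in B_j}\phi_j(t)^2\gtrsim h$, the per-block KL stays $\asymp\mu^2a^2h/\sigma^2=O(1)$ and the per-block separation stays $\gtrsim a^2h$.
\item Solving $a^2J^2/T\asymp\mathcal D_T$ together with $a^2\asymp\sigma^2J/(\mu^2T)$ gives exactly $J\asymp J_\star$ and $a^2\asymp r_T^\star$.
\end{itemize}

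\textbf{Expected loss to probability $\tfrac12$.} Your conversion step fails as written. With $Z\le D^2$ and $\E Z\asymp r_T^\star$, Paley--Zygmund yields only $\Pp(Z\ge\E Z/2)\ge\E Z/(4D^2)\to0$. The paper avoids this with Fano's inequality, the Gilbert--Varshamov packing of Lemma~\ref{lem:VG}, and nearest-path decoding. That directly gives decoding-error probability at least $3/4$, and each decoding error forces tracking error $\gtrsim a^2$. If you prefer Assouad, use instead that under a uniform prior on $\{\pm1\}^J$ the blocks of data are independent and the posterior factorizes. The number of misdecoded blocks then concentrates at a constant fraction of $J$.

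\textbf{Noise model.} Isotropic noise $\sigma'X$ with $X\sim N(0,I_{mn})$ satisfies Assumption~\ref{assump:cond-subgauss-noise} only when $\sigma'\lesssim\sigma/\sqrt{mn}$. This puts a dimension factor into your lower bound and breaks the parameter-free-constants claim. Put all the noise along $U$, as the paper does: $\Xi_t=\zeta_tU$ with $\zeta_t\sim N(0,\sigma^2/4)$, so that $\E e^{\zeta_t^2/\sigma^2}=\sqrt2\le2$. Also encode the moving minimizer in the data $X_t$ rather than in a time-indexed loss, since the framework fixes $g$ and lets only $\Pi_t$ vary.
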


The line-segment condition in \eqref{eq:line_condition} follows from the compact convex geometry of \(\mathcal W\) together with the existence of an interior optimizer. Theorem~\ref{thm:minimax} shows that the rate
\(
r_T^\star
=
\frac{\sigma^{4/3}\mathcal D_T^{1/3}}
{\mu^{4/3}T^{1/3}}
\)
is minimax sharp in its joint dependence on the horizon \(T\), drift budget \(\mathcal D_T\), noise level \(\sigma\), and the strong-monotonicity parameter \(\mu\). In particular, larger noise and greater optimizer variation fundamentally increase the difficulty of tracking, whereas stronger curvature improves trackability. Since appropriately tuned Matrix--Shampoo attains the matching upper bound, no causal optimization procedure can improve this worst-case order uniformly over \(\mathcal M_T\), up to universal constants. Under sub-linear variation, the tuning used to certify the worst-case rate has diverging ridge factor, driving Shampoo toward momentum-SGD, consistent with the large-damping behavior \cite{savarese2021domain,silva2024finetuningadaptivestochasticoptimizers}.

We next characterize burn-in under constant stepsize relative to
\(\overline{\mathcal R}_\delta\) (defined in Theorem~\ref{thm:burnin}), the persistent contribution to the tracking
bound determined by optimizer drift, stochastic noise,
first-moment memory, and preconditioner perturbations.
Burn-in measures the time required for the initialization and
momentum transients to decay to this scale.

\begin{theorem}[High-probability burn-in time for Matrix--Shampoo]
\label{thm:burnin}
Suppose Assumptions~\ref{assump:filtered-predictable-mds}--
\ref{assump:strong-monotonicity} hold, in addition,
\(
\|W_{t+1}^\star-W_t^\star\|_F\leq\Delta,
\) for each $t\geq1$.
Let
\[
0<\alpha\leq\frac{\mu}{8L^2},
\qquad
\rho:=1-\frac{3\alpha\mu}{16},
\]
and let \(\overline{\mathcal R}_\delta\) denote the  high-probability post-transient tracking radius
\[
\begin{aligned}
\overline{\mathcal R}_\delta
:={}&
\frac{1}{\delta}
\Bigg[
\frac{\beta_1(1+\beta_1)}{(1-\beta_1)^2}
\left(1+\frac{L^2}{\mu^2}\right)
\left\{
\alpha^2\kappa_{\varepsilon}\bigl((LD)^2+\sigma^2\bigr)+\Delta^2
\right\}
+
\kappa_{\varepsilon} \frac{\sigma^2}{\mu^2}
\frac{1-\beta_1}{1+\beta_1}
+
\frac{(LD)^6+\sigma^6}{\mu^2\varepsilon^2}
\Bigg] + 
\frac{\Delta^2}{\alpha^2\mu^2},
\end{aligned}
\]
$q = \max\{\rho,\beta_1\}$. Then, for every fixed \(T\geq T_{\mathrm{burn}}\) and $\delta \in (0,1)$,
\(
\mathbb P\!\left(
\|W_T-W_T^\star\|_F^2
\lesssim
\overline{\mathcal R}_\delta
\right)
\geq1-\delta,
\)
where
\[
T_{\mathrm{burn}}(\delta)
= 1 \,+ \,
\mathcal{O}\!\left(
\max\left\{
\begin{aligned}
&\frac{1}{1-\rho}
\log_+\!\left(
\frac{\|W_1-W_1^\star\|_F^2}
{\overline{\mathcal R}_\delta}
\right)
\\[0.5em]
&\frac{1}{1-q}
\log_+\!\left(
\frac{\alpha \kappa_{\varepsilon}\sigma^2}
{\mu\,\delta \,(1-q)\,\overline{\mathcal R}_\delta}
\right)
\end{aligned}
\right\}
\right).
\]
\end{theorem}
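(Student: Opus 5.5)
We prove Theorem~\ref{thm:burnin} through a one-step contraction inequality for the tracking error $e_t:=\|\theta_t-\theta_t^\star\|^2$, unrolled over time. The momentum error is handled as an auxiliary state that decays geometrically at rate $\beta_1$.

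\textbf{Step 1: decompose the update direction.} Write
\[
Q_t\nu_t=h_t(\theta_t)+\bigl(Q_t-I\bigr)h_t(\theta_t)+Q_t\bigl(\nu_t-h_t(\theta_t)\bigr).
\]
Since $\theta_t^\star\in\operatorname{int}(\Theta)$, nonexpansiveness of $\Pi_\Theta$ gives $\|\theta_{t+1}-\theta_t^\star\|^2\le\|\theta_t-\theta_t^\star-\alpha Q_t\nu_t\|^2$. Expanding this square, strong monotonicity (Assumption~\ref{assump:strong-monotonicity}) with $h_t(\theta_t^\star)=0$, and the Lipschitz bound with $\alpha\le\mu/(8L^2)$ yield
\[
\|\theta_{t+1}-\theta_t^\star\|^2\le(1-\tfrac{\alpha\mu}{2})e_t+C\tfrac{\alpha}{\mu}\|(Q_t-I)h_t(\theta_t)\|^2+C\alpha^2\kappa_\varepsilon\|\nu_t-h_t(\theta_t)\|^2+\text{(cross term)}.
\]
Here we use $\|Q_t\|_{\op}\le\sqrt{\kappa_\varepsilon}\le1$, which follows from the assumed eigenvalue lower bounds. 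The cross term $-2\alpha\langle\theta_t-\theta_t^\star,Q_t(\nu_t-h_t)\rangle$ is split by Young's inequality into $\tfrac{\alpha\mu}{8}e_t$ plus $\tfrac{C\alpha}{\mu}\kappa_\varepsilon\|\nu_t-h_t\|^2$.

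\textbf{Step 2: absorb the drift.} Apply Young's inequality to $\|\theta_{t+1}-\theta_{t+1}^\star\|^2$ with weight $\alpha\mu/16$. This gives $e_{t+1}\le\rho\,e_t+\frac{C\Delta^2}{\alpha\mu}+\dots$, with $\rho=1-3\alpha\mu/16$, and accounts for the term $\Delta^2/(\alpha^2\mu^2)$ after dividing by $1-\rho$.

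\textbf{Step 3: preconditioner perturbation.} Bound $\|Q_t-I\|_{\op}\lesssim\|D_t^L\|_{\op}/\varepsilon+\|D_t^R\|_{\op}/\varepsilon$ via the Taylor estimate $|1-(1+x)^{-1/4}|\le x/4$. Then $\|D_t\|_{\op}\lesssim\max_{s\le t}\|\mathsf G_s\|_F^2\lesssim(LD)^2+\|\xi_s\|^2$. Its square times $\|h_t\|^2\le(LD)^2$ produces a moment of order $((LD)^6+\sigma^6)/\varepsilon^2$ in expectation, using $\mathbb E\|\xi\|^6\lesssim\sigma^6$ from Assumption~\ref{assump:cond-subgauss-noise}.

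\textbf{Step 4: momentum error recursion.} Set $b_t:=\nu_t-h_t(\theta_t)$. Ignoring bias correction, we have $m_t-h_t(\theta_t)=\beta_1(m_{t-1}-h_{t-1}(\theta_{t-1}))+\beta_1(h_{t-1}(\theta_{t-1})-h_t(\theta_t))+(1-\beta_1)\xi_t$; the $1/(1-\beta_1^t)$ correction only contributes transients that decay like $\beta_1^t$. The increment $h_{t-1}(\theta_{t-1})-h_t(\theta_t)$ is controlled as follows. Since $h_t(\theta_t^\star)=0$, $\|h_t(\theta_t)\|\le L\|\theta_t-\theta_t^\star\|$. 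Moreover $\|\theta_t-\theta_{t-1}\|\le\alpha\sqrt{\kappa_\varepsilon}\|\nu_{t-1}\|$, and the optimizer moves by at most $\Delta$. This explains the factor $(1+L^2/\mu^2)\{\alpha^2\kappa_\varepsilon((LD)^2+\sigma^2)+\Delta^2\}$. Unrolling the linear recursion shows that the momentum bias has second moment $\lesssim\frac{\beta_1(1+\beta_1)}{(1-\beta_1)^2}(\cdots)$ and the filtered noise has variance $\sigma^2\frac{1-\beta_1}{1+\beta_1}$, plus a transient of order $\beta_1^{t}\sigma^2/(1-\beta_1)$.

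\textbf{Step 5: unroll and conclude.} Unroll $\mathbb E e_T\le\rho^{T-1}e_1+\sum_s\rho^{T-1-s}(\text{forcing}_s)$. Forcing terms carrying the transient $\beta_1^s$ give a convolution bounded by $T q^{T}/(\cdot)\lesssim q^{T}/(1-q)$ with $q=\max\{\rho,\beta_1\}$. The stationary terms sum to $(1-\rho)^{-1}\times$ forcing, i.e.\ $\mu^{-1}\alpha^{-1}$ times the above, which matches $\overline{\mathcal R}_\delta$ up to constants once Markov's inequality supplies the $1/\delta$. Finally, choose $T$ so that $\rho^{T-1}e_1\le\overline{\mathcal R}_\delta$ and $\frac{\alpha\kappa_\varepsilon\sigma^2}{\mu(1-q)}q^{T-1}\le\delta\overline{\mathcal R}_\delta$. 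Using $\log(1/\rho)\ge1-\rho$, this yields $T_{\mathrm{burn}}$.

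\textbf{Main obstacle.} The cross term in Step 1 is the difficult part: $Q_t$ and $\nu_t$ both depend on $\xi_t$, so $\mathbb E\langle\theta_t-\theta_t^\star,Q_t\xi_t\rangle\neq0$. I would not attempt a martingale argument. Instead I would accept the crude Young bound above, charging this term to the $\kappa_\varepsilon\sigma^2$-type contributions, which Markov's inequality tolerates. The Markov step is also why the theorem carries a $1/\delta$ rather than a $\log(1/\delta)$ dependence.
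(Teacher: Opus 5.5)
Your overall plan matches the paper's: a pathwise one-step contraction, a crude Young or triangle bound that charges $\frac{\alpha}{\mu}\kappa_\varepsilon\|z_t\|^2$ per step, sixth-moment control of $I_d-Q_t$, Markov on the unrolled forcing, and a geometric convolution for the transients. Step 4, however, does not go through. You measure the momentum error against the \emph{current} mean gradient, $\nu_t-h_t(\theta_t)$, so you must control $h_{t-1}(\theta_{t-1})-h_t(\theta_t)$, which is a difference of two \emph{different} maps. The assumptions give no time-regularity of $h_t$ beyond its root moving by at most $\Delta$. The facts you list yield only $\|h_{t-1}(\theta_{t-1})-h_t(\theta_t)\|\le L\|\theta_{t-1}-\theta_{t-1}^\star\|+L\|\theta_t-\theta_t^\star\|$. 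The bound $L(\|\theta_t-\theta_{t-1}\|+\Delta)$ that you need holds only if $h_{t-1}$ and $h_t$ are translates of a single map.

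A concrete failure: take $\Delta=0$ and $h_t(\theta)=a_t(\theta-\theta^\star)$ with $a_t$ alternating between $\mu$ and $L$; all assumptions hold. Let $u_t:=\sum_{s\le t}w_{t,s}h_s(\theta_s)$. For small $\alpha$ and $t\gg(1-\beta_1)^{-1}$, one gets $\|u_t-h_t(\theta_t)\|\approx\frac{\beta_1(L-\mu)}{1+\beta_1}\|\theta_t-\theta_t^\star\|$. This does not vanish as $\alpha\to0$, so your claimed bias bound $\frac{\beta_1(1+\beta_1)}{(1-\beta_1)^2}(1+L^2/\mu^2)\{\alpha^2\kappa_\varepsilon(\cdot)+\Delta^2\}$ is false. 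Charging this bias through your Young split instead costs $\frac{C\alpha}{\mu}\kappa_\varepsilon\beta_1^2(L-\mu)^2\|\theta_t-\theta_t^\star\|^2$. The $\alpha\mu$ contraction absorbs that only if $\sqrt{\kappa_\varepsilon}\,\beta_1(L-\mu)\lesssim\mu$, which is not assumed.

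The paper never compares $h_s$ with $h_t$. With $e_t:=\theta_t-\theta_t^\star$, it writes $Q_t\nu_t=u_t+Q_tz_t-(I_d-Q_t)u_t$. For each $s\le t$ it uses strong monotonicity of $h_s$ about its \emph{own} root: $\langle e_t,h_s(\theta_s)\rangle=\langle e_s,h_s(\theta_s)\rangle+\langle e_t-e_s,h_s(\theta_s)\rangle\ge\mu\|e_s\|^2-L\|e_t-e_s\|\|e_s\|\ge\frac{\mu}{4}\|e_t\|^2-\frac{\mu^2+L^2}{2\mu}\|e_t-e_s\|^2$. It pairs this with $\|u_t\|^2\le2L^2\|e_t\|^2+2L^2\mathcal L_t$, where $\mathcal L_t:=\sum_{s\le t}w_{t,s}\|e_t-e_s\|^2$. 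Only the lag $\|e_t-e_s\|\le\alpha\sqrt{\kappa_\varepsilon}\sum_{j=s}^{t-1}\|\nu_j\|+(t-s)\Delta$ then enters. The geometric lag moment $\sum_s w_{t,s}(t-s)^2\le\beta_1(1+\beta_1)/(1-\beta_1)^2$ produces exactly the first term of $\overline{\mathcal R}_\delta$, including its factor $1+L^2/\mu^2$. Replacing your Steps 1 and 4 with this pairing repairs the proof.

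Smaller points:
\begin{itemize}
\item The convolution satisfies $\sum_{t\le N}\rho^{N-t}\beta_1^t\le Nq^N\le\frac{2}{1-q}e^{-(1-q)N/2}$, not $q^N/(1-q)$. This is harmless inside the $\mathcal O$.
\item Apply Markov only to the random forcing sum, since your recursion is pathwise. Otherwise $\rho^{T-1}\|e_1\|^2$ and $\Delta^2/(\alpha^2\mu^2)$ pick up a factor $1/\delta$, which the stated bound does not have.
\item Bound $\|D_t^L\|_{\op}$ and $\|D_t^R\|_{\op}$ by the $\beta_2$-EMA of $\|\mathsf G_s\|_F^2$, not by $\max_{s\le t}\|\mathsf G_s\|_F^2$. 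The moments of the maximum grow with $t$.
\end{itemize}
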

Both the high-probability burn-in time and tracking radius remain \(O(1)\) in \(T\), as does the iterative bound \eqref{eq:tracking-error} in the \hyperref[sec:app]{Appendix}. For comparison, existing adaptive-method guarantees may retain horizon-dependent tracking floors; Adam's high-probability bound holds simultaneously over a finite horizon and contains an \(O(\log^2 T)\) contribution \cite{sahu2026adapt}.

\section{Nonconvex projected stationarity}\label{sec:projected-stationarity}

The preceding analysis establishes tracking guarantees under strong monotonicity. We now drop this condition and instead study first-order stationarity for general smooth time-varying objectives. We write
\(F_t(\operatorname{vec}(W)):=G_t(W)\), with
\(\nabla F_t=h_t\), and define the Euclidean projected-gradient
mapping
\begin{align}\label{eq:def_grad}
\mathcal G_{\alpha,t}(\theta)
:=
\frac{1}{\alpha}
\left[
\theta-\Pi_{\Theta}
\bigl(\theta-\alpha\nabla F_t(\theta)\bigr)
\right].
\end{align}
This is the natural stationarity measure for the Euclidean-projected Matrix--Shampoo. In particular,
\(\mathcal G_{\alpha,t}(\theta)=0\) if and only if
\(-\nabla F_t(\theta)\) belongs to the normal cone of \(\Theta\) at \(\theta\).
Unlike the tracking analysis, the result below does not involve
the optimizer trajectory or require a path-variation budget. However, we impose the standard lower-boundedness condition used in nonconvex stochastic optimization  \cite{ReddiKK18, defossez2022simple, li2023convergence}.

\begin{assumption}[Uniform lower boundedness]
\label{assump:lower-bounded}
There exists a constant \(F_\star>-\infty\) such that
\(
F_t(\theta)\geq F_\star
\)
for every \(t\geq1\) and every \(\theta\in\Theta\).
\end{assumption}
To quantify nonstationarity, define the pathwise objective variation
\(\label{eq:def_V_T}
\mathcal V_T
:=
\sum_{t=1}^{T-1}
\left[
F_{t+1}(\theta_{t+1})
-
F_t(\theta_{t+1})
\right]_+ .
\)
This quantity measures the cumulative increase in the objective induced by temporal changes in the optimization problem, evaluated along the realized iterates. We now state the high-probability average projected-stationarity guarantee for Matrix--Shampoo.

\begin{theorem}[High-probability projected-gradient rate]
\label{thm:nonconvex-shampoo-hp}
Suppose Assumptions~\ref{assump:filtered-predictable-mds}--\ref{assump:cond-subgauss-noise}, \ref{assump:lower-bounded} hold. Let
\(\theta_1,\ldots,\theta_{T+1}\) be generated by the Euclidean-projected
Matrix--Shampoo update. If
\(
0<\alpha\leq\frac{1}{2L},
\)
then, for every \(T\geq1\) and every \(\delta\in(0,1)\), with probability at least \(1-\delta\),
\begin{align}\label{thm:thm_4}
\frac1T\sum_{t=1}^T
\left\|
\mathcal G_{\alpha,t}(\theta_t)
\right\|^2
\lesssim\,
\frac{
\bigl(
F_1(\theta_1)-F_\star+\mathcal V_T
\bigr)
}{
\alpha T
}
+
\kappa_{\varepsilon}L^2D^2\beta_1^2
+
\kappa_{\varepsilon}\mathfrak Z_{T,\beta_1,\delta}
+
\frac{\mathfrak G_\delta}{\varepsilon^2},
\end{align}
where $\mathfrak G_\delta$ and $\mathfrak Z_{T,\beta_1,\delta}$ are as in Theorem~\ref{thm:main-tracking}.
\end{theorem}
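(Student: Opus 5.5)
The argument is a descent-lemma analysis of the projected update \eqref{eq:vector-shampoo-update}. We compare the realized step with an idealized projected gradient step and charge every discrepancy to one of the error terms in \eqref{thm:thm_4}. Write $\theta_{t+1}=\Pi_\Theta(\theta_t-\alpha Q_t\nu_t)$ and $\bar\theta_{t+1}:=\Pi_\Theta(\theta_t-\alpha\nabla F_t(\theta_t))$, so that $\mathcal G_{\alpha,t}(\theta_t)=(\theta_t-\bar\theta_{t+1})/\alpha$.

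\textbf{Descent step.} Since $F_t$ is $L$-smooth by Assumption~\ref{assumption:operator},
\[
F_t(\theta_{t+1})\le F_t(\theta_t)+\langle\nabla F_t(\theta_t),\theta_{t+1}-\theta_t\rangle+\tfrac L2\|\theta_{t+1}-\theta_t\|^2 .
\]
Decompose the direction as $Q_t\nu_t=\nabla F_t(\theta_t)+e_t$. The projection inequality $\langle \theta_t-\alpha u-\theta_{t+1},\theta_t-\theta_{t+1}\rangle\le 0$ with $u=Q_t\nu_t$, together with nonexpansiveness $\|\theta_{t+1}-\bar\theta_{t+1}\|\le\alpha\|e_t\|$ and $\alpha\le 1/(2L)$, then gives the standard perturbed projected-gradient inequality
\[
F_t(\theta_{t+1})\le F_t(\theta_t)-c\,\alpha\|\mathcal G_{\alpha,t}(\theta_t)\|^2+C\alpha\|e_t\|^2 .
\]
Adding $F_{t+1}(\theta_{t+1})-F_t(\theta_{t+1})\le[\cdot]_+$ and telescoping over $t\le T$ with Assumption~\ref{assump:lower-bounded} yields
\[
\frac1T\sum_t\|\mathcal G_{\alpha,t}(\theta_t)\|^2\lesssim\frac{F_1(\theta_1)-F_\star+\mathcal V_T}{\alpha T}+\frac1T\sum_t\|e_t\|^2 .
\]
The whole problem therefore reduces to a high-probability bound on $\frac1T\sum_t\|e_t\|^2$.

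\textbf{Splitting $e_t$.} Write
\[
e_t=(Q_t-I)\nabla F_t(\theta_t)+Q_t(\nu_t-\nabla F_t(\theta_t)).
\]
The first piece is the preconditioner perturbation. Using $\|L_t^{-1/4}\sqrt[4]{\varepsilon}-I\|_{\op}\lesssim\|D_t^L\|_{\op}/\varepsilon$, and likewise on the right, we get $\|Q_t-I\|_{\op}\lesssim(\|D_t^L\|_{\op}+\|D_t^R\|_{\op})/\varepsilon$. Since $\|D_t\|_{\op}\lesssim\max_{s\le t}\|\mathsf G_s\|_F^2$ and $\|\mathsf G_s\|_F\le LD+\|\Xi_s\|_F$, a sub-Gaussian maximal bound (Assumption~\ref{assump:cond-subgauss-noise}) combined with $\|\nabla F_t\|\le LD$ controls this piece by $\mathfrak G_\delta/\varepsilon^2$. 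I will reuse exactly the estimate from the proof of Theorem~\ref{thm:main-tracking}.

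For the second piece we use $\|Q_t\|_{\op}^2\le\kappa_\varepsilon$. The bias-corrected momentum satisfies
\[
\nu_t-\nabla F_t(\theta_t)=\sum_{s\le t}w_{t,s}\bigl[(h_s(\theta_s)-h_t(\theta_t))+\xi_s\bigr],\qquad w_{t,s}\propto(1-\beta_1)\beta_1^{t-s}.
\]
The mean-bias part is where $\kappa_\varepsilon L^2D^2\beta_1^2$ should come from. The crude bound $\|h_s(\theta_s)-h_t(\theta_t)\|\le 2LD$ weighted by total stale mass $\sum_{s<t}w_{t,s}\le\beta_1$ gives $\kappa_\varepsilon(LD)^2\beta_1^2$ up to constants. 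This avoids any path-length assumption, which is consistent with $\mathcal V_T$ being the only drift quantity in \eqref{thm:thm_4}. The noise part is the exponentially filtered martingale $\sum_s w_{t,s}\xi_s$, and its averaged squared norm is exactly what $\mathfrak Z_{T,\beta_1,\delta}$ controls in Theorem~\ref{thm:main-tracking}: leading term $\sigma^2(1-\beta_1)/(1+\beta_1)$, plus deviation terms from a Freedman/sub-exponential bound on the quadratic form of the martingale differences. I will invoke that bound directly.

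\textbf{Main obstacle.} The preconditioner $Q_t$ depends on $\xi_t$, so $\langle\nabla F_t,Q_t\xi_t\rangle$ has no martingale structure. The split above sidesteps this by moving everything into $\|e_t\|^2$ and paying $\|Q_t\|_{\op}^2\le\kappa_\varepsilon$, so no cross term with nonpredictable $Q_t$ ever needs to be centered. The cost is that only a Young-type bound is available, not a cancellation. I need to check that the constants in the descent inequality with the perturbed direction stay uniform for $\alpha\le 1/(2L)$; I would follow the standard inexact projected-gradient lemma and verify this. Finally, a union bound over the two high-probability events (the max-gradient event for $\mathfrak G_\delta$ and the filtered-noise event for $\mathfrak Z$), each at level $\delta/2$, completes the argument.
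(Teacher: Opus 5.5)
Your overall architecture is the paper's. You use the inexact projected-gradient descent inequality, compare with $\Pi_\Theta(\theta_t-\alpha h_t)$, telescope with $\mathcal V_T$ and $F_\star$, and split the direction error as $Q_t(\nu_t-h_t)+(Q_t-I_d)h_t$. The stale-mean part gets the crude bound $2LD\sum_{s<t}w_{t,s}\le 2LD\beta_1$, and the noise part is handled by the filtered-noise energy lemma.

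The gap is in the preconditioner-perturbation step. You bound $\|D_t^L\|_{\op},\|D_t^R\|_{\op}$ by $\max_{s\le t}\|\mathsf G_s\|_F^2$ and then apply a sub-Gaussian maximal inequality. A union bound over $T$ rounds only gives $\max_{s\le T}\|\Xi_s\|_F^2\lesssim\sigma^2\log(T/\delta)$. Your route therefore produces a perturbation term of order $(LD)^2\{(LD)^2+\sigma^2\log(T/\delta)\}^2/\varepsilon^2$, which grows like $\log^2 T$ for fixed $\delta$. That is not $\lesssim\mathfrak G_\delta/\varepsilon^2$, because $\mathfrak G_\delta=C_6\{(LD)^6+\sigma^6\log^3(4/\delta)\}$ is independent of $T$. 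It is also not what the proof of Theorem~\ref{thm:main-tracking} does. So ``reusing that estimate'' is inconsistent with the max-gradient event in your final union bound.

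The missing idea is to average rather than take a maximum, as follows.
\begin{itemize}
\item Keep the EMA structure: $0\preceq D_t^L\preceq U_tI_m$ and $0\preceq D_t^R\preceq U_tI_n$, with $U_t:=\sum_{s\le t}q^{(2)}_{t,s}\|g_s\|^2$. This gives $\|I_d-Q_t\|_{\op}\le U_t/(2\varepsilon)$.
\item Hence $\|(I_d-Q_t)h_t\|^2\le U_t^2(LD)^2/(4\varepsilon^2)\le(2U_t^3+(LD)^6)/(12\varepsilon^2)$.
\item The bias-corrected EMA operator is bounded on $\ell_3$ uniformly in $T$ and $\beta_2$. A Schur-test/Hardy argument gives $\ell_2$ norm at most $\sqrt6$, the $\ell_\infty$ norm is at most $1$, and Riesz--Thorin interpolates. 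Thus $\sum_t U_t^3\le 6\sum_t\|g_t\|^6$.
\item Finally, $\frac1T\sum_t\|g_t\|^6\le\mathfrak G_\delta$ with probability at least $1-\delta/4$. Use $(\E\|g_t\|^{q})^{1/q}\le LD+C\sigma\sqrt q$ with $q=6p$ and $p=\log(4/\delta)$, then Minkowski in $L^p$, then Markov applied to the $p$-th power.
\end{itemize}
This bound is $T$-free precisely because it controls an average, not a maximum. (Alternatively, the $\ell_2$ Hardy bound with fourth moments of $\|g_t\|$ also works.) With this replacement, and the sixth-moment event replacing your max-gradient event in the union bound, the rest of your argument goes through and coincides with the paper's proof.
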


Theorem~\ref{thm:nonconvex-shampoo-hp} separates objective variation, momentum memory, filtered noise, and preconditioner perturbations, retaining the explicit \(\beta_1\) and \(\kappa_\varepsilon\) dependence hidden by the generic bound \(\|\mathcal G_{\alpha,t}(\theta_t)\|^2\le L^2D^2\). At fixed \(\alpha\), the objective-variation term vanishes when \(\mathcal V_T=o(T)\). For fixed \(\beta_1<1\) and \(\delta\), the filtered noise contribution, \[\kappa_{\varepsilon} \mathfrak Z_{T,\beta_1,\delta}, \asymp \frac{\varepsilon}
{\sqrt{(\varepsilon+\underline\lambda_L)
       (\varepsilon+\underline\lambda_R)}}\sigma^2(1-\beta_1)/(1+\beta_1),\] whereas the memory penalty \(\kappa_\varepsilon L^2D^2\beta_1^2\) captures discrepancies between past and current mean gradients. Their opposing dependence on \(\beta_1\) makes the momentum tradeoff explicit. Also, larger \(\varepsilon\) drives \(\kappa_\varepsilon\) toward one,
making both the memory and filtered-noise contributions
increasing, while reducing \(\mathfrak G_\delta/\varepsilon^2\)
and weakening Shampoo's directional rescaling. Thus, the qualitative stability--geometry tradeoff from Theorem~\ref{thm:main-tracking} persists.

\section{Experiments}
\label{sec:numerics}

We perform synthetic nonstationary experiments to examine the main phenomena established in theory. Across experiments, the target evolves over time according to,
\(
W_{t+1}^\star
=
W_t^\star
+
\Delta_t\frac{U_t}{\|U_t\|_F},
\,
(U_t)_{ij}\sim N(0,1),
\)
allowing the effects of drift and stochastic-gradient noise to be varied separately.  First, we study the effect of $\beta_1$ on three nonstationary matrix problems: convex  Kronecker quadratic, Huber Regression, and nonconvex phase retrieval.

\begin{figure}[H]
    \centering
    \includegraphics[width=10cm]
    {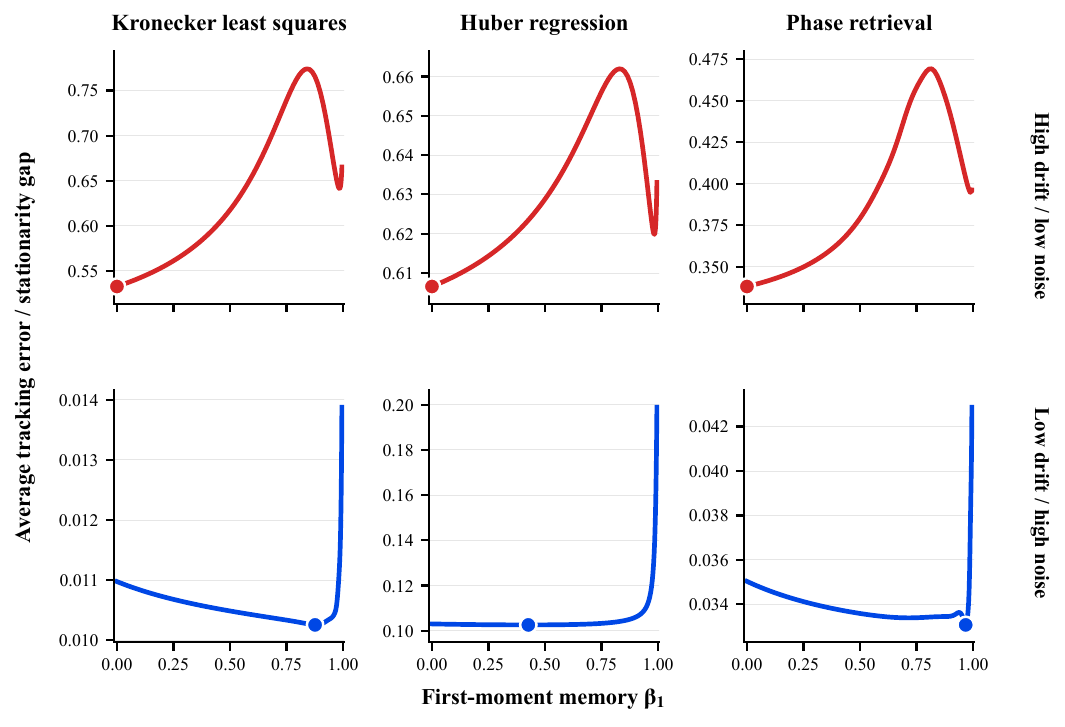}
    \caption{\textbf{Momentum drift--noise tradeoff.} Results for \textbf{Kronecker} (left), \textbf{Huber} (center), and \textbf{phase retrieval} (right). High-drift/Low-noise regime, $(\Delta,\sigma^2)=(0.15,10^{-2})$ (top), Low-drift/High-noise regime, $(\Delta,\sigma^2)=(0.003,1)$ (bottom). Enlarged markers indicate the empirically optimal value of $\beta_1$ over $T = 500$ iterations. Across all three objectives, the optimal $\beta_1$ shifts from zero in the drift-dominated regime towards right in the noise-dominated regime, illustrating that short memory facilitates tracking while longer memory suppresses stochastic noise.}
    \label{fig:beta1-tradeoff}
\end{figure}
We next test the minimax scaling \(r_T^\star\) predicted by
Theorem~\ref{thm:minimax} for appropriately tuned Matrix--Shampoo.
\begin{figure}[htbp]
    \centering
    \includegraphics[width=10cm]
    {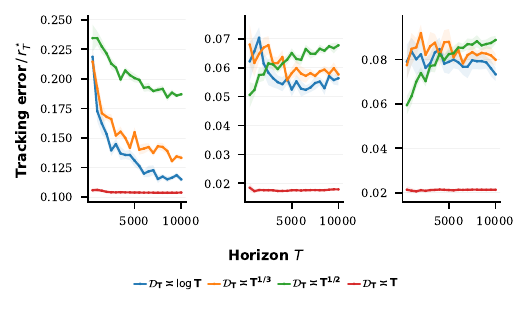}
    \caption{\textbf{Empirical validation of the minimax tracking rate.} Results for the \textbf{Kronecker} (left), \textbf{Huber} (center), and \textbf{soft-label logistic} (right) problems under the path-variation regimes $\mathcal D_T\asymp \log T,\;T^{1/3},\;T^{1/2}$, and $T$.
Across all three problems, the normalized errors remain overall stable over the horizon, supporting the minimax scaling.}
    \label{fig:minimax-scaling}
\end{figure}

Finally, we examine the stability--geometry tradeoff induced
by \(\varepsilon\), comparing tracking error and stationarity
gaps with the realized
preconditioned filtered-noise energy
\(
\mathfrak Z_T^Q(\varepsilon)
:=T^{-1}\sum_{t=1}^T\|Q_t(\varepsilon)z_t\|^2,
\)
where \(z_t\) is the bias-corrected exponential average of
gradient noise. This diagnostic captures how the evolving matrix geometry
acts on stochastic noise along the optimization,
complementing the worst-case coefficient
\(K_{\beta_1,\varepsilon}\), which is nondecreasing in
\(\varepsilon\).

\begin{figure}[htbp]
    \centering
    \includegraphics[width=10cm]
    {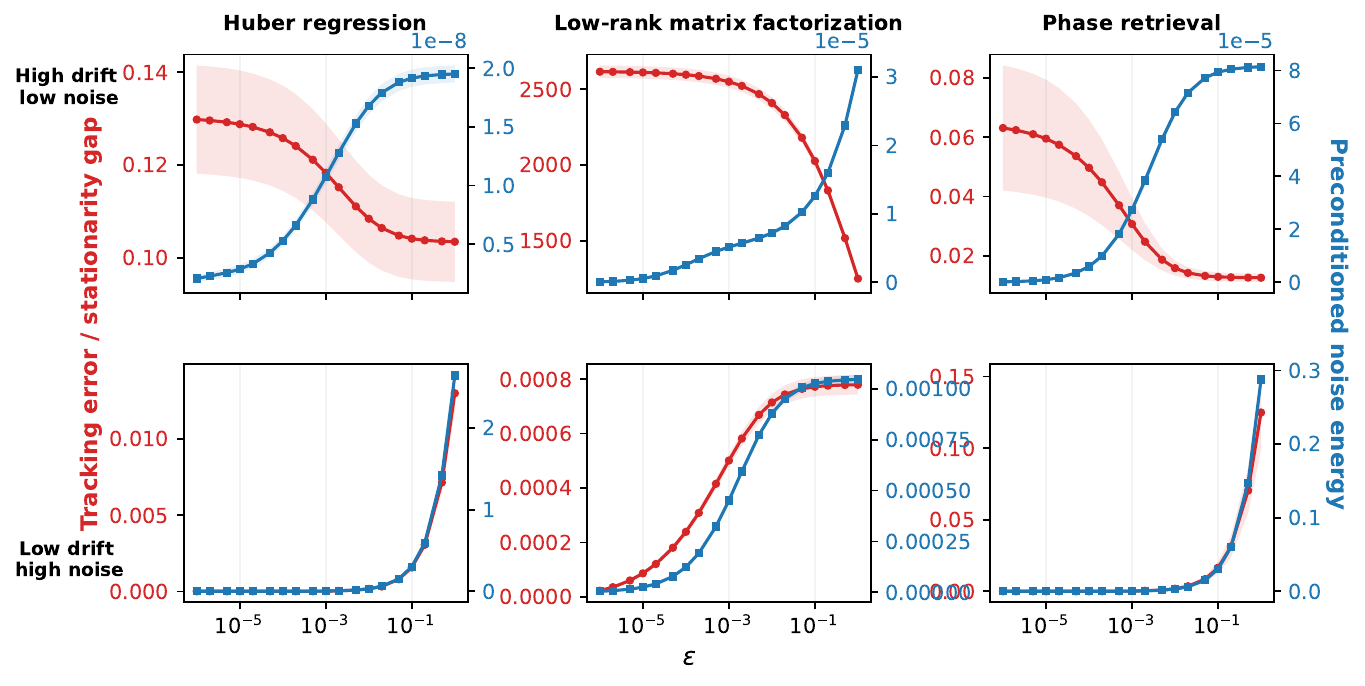}
   \caption{\textbf{Ridge stability--geometry tradeoff.} Average tracking error for convex and average projected-stationarity gap for non-convex objectives. In high-drift, low-noise regime (top), the realized preconditioned filtered-noise energy is small, so the preconditioner-perturbation term  $\mathfrak G_\delta/\varepsilon^2$  dominates, decreasing tracking error as $\varepsilon$ increases. In the other regime,  $\mathfrak Z_T^Q(\varepsilon)$ becomes dominant as $\varepsilon$ increases, exhibiting the complementary behavior.}
    \label{fig:ridge-geometry}
\end{figure}
\begin{figure}[htbp]
    \centering
    \includegraphics[width=10cm]
    {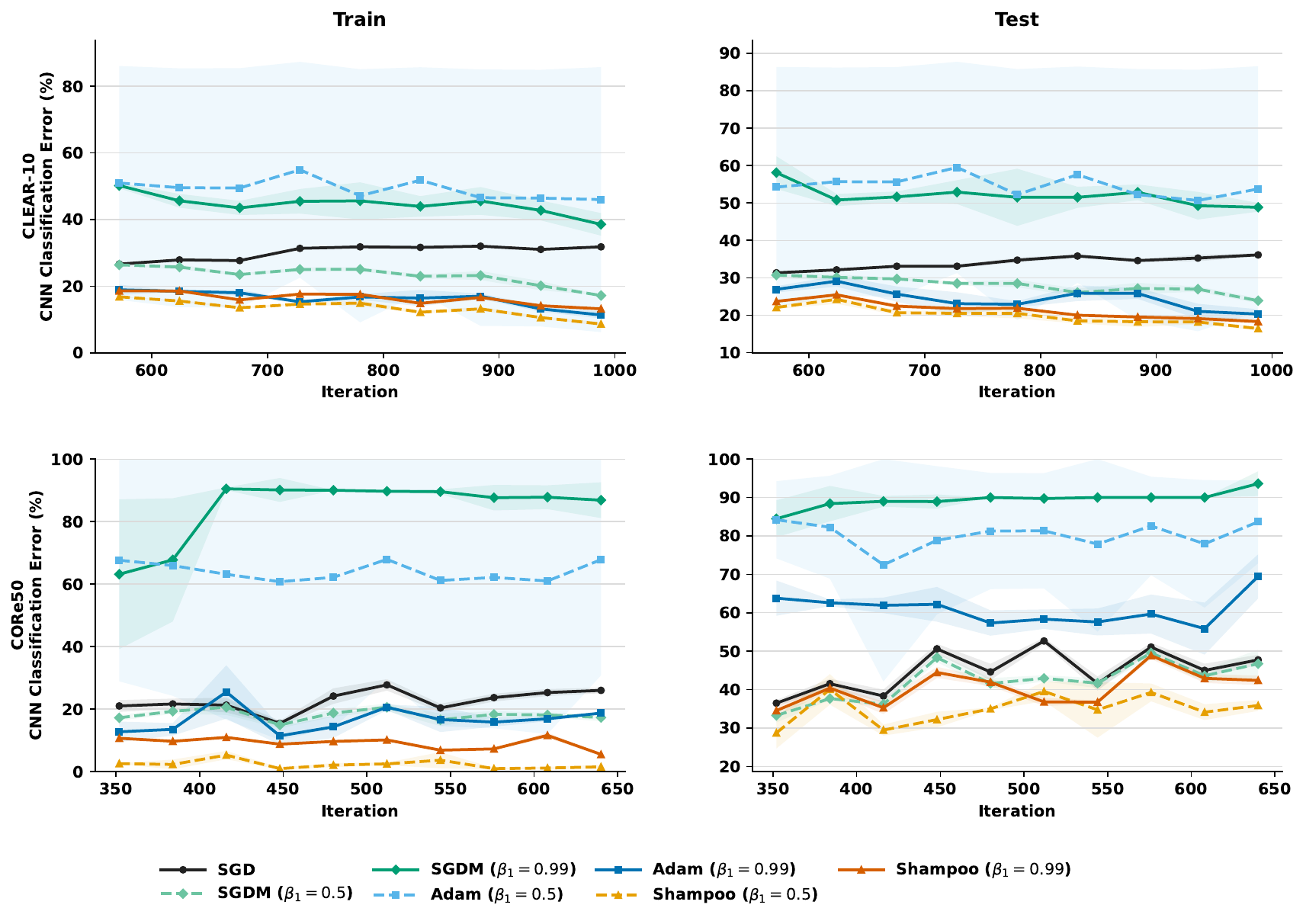}
    \caption{\textbf{CNN performance on CLEAR-10 and CORe50.}
    Mean classification error at bucket-end iterations on the training (left) and test (right) splits for SGD and for momentum-SGD, Adam, and Shampoo with $\beta_1\in\{0.5,0.99\}$. Shampoo performs consistently across the $\beta_1$ values, whereas Adam exhibits substantial instability with shorter memory and SGDM deteriorates under longer memory \cite{sahu2026provable}.}
    \label{fig:Clear10}
\end{figure}

\subsection{Real-Data Experiments}

We use CLEAR-10 and CORe50 to evaluate optimizer adaptation
under natural distribution shifts, with ResNet-18 weight blocks
providing a practical setting for Shampoo's two-sided matrix
preconditioning. CLEAR-10 organizes timestamped YFCC100M imagery
from 2004--2014 into chronological buckets that capture the
evolution of real-world visual concepts
\cite{lin2021clear,10.1145/2812802}.
Sequential training across these buckets produces a nonstationary
learning problem with evolving gradient statistics.
Using the predefined training and test splits, we trained
ResNet-18 end to end with ridge-regularized multiclass
cross-entropy and compared SGD, momentum SGD, Adam, and EMA-Shampoo.

CORe50 complements this temporal evolution with distribution
shifts across 11 acquisition sessions, characterized by changes
in illumination, background, viewpoint, object pose, and
occlusion \cite{pmlr-v78-lomonaco17a}.
The same objects recur across sessions, allowing us to examine
adaptation to changing visual conditions while keeping the
training objects fixed. We retained 50 frames per object per
session and constructed a custom session-ordered, object-disjoint
protocol with 40 training objects and 10 held-out test objects.
We trained ResNet-18 end to end using the same optimizers,
comparing their behavior as the acquisition conditions evolve.

\section{Conclusion}

We developed a finite-horizon theory for Matrix--Shampoo in
nonstationary stochastic optimization. Our high-probability
bounds separate optimizer drift, stochastic noise, first-moment
memory, and evolving matrix geometry, making their tracking
tradeoffs explicit. Under the stated squared path-variation
conditions, we established a matching minimax lower bound and
showed that appropriately tuned Matrix--Shampoo attains the
optimal tracking rate over all causal optimization procedures.

Beyond strong monotonicity, we obtained high-probability
projected-stationarity guarantees for smooth time-varying
nonconvex objectives, revealing analogous momentum and ridge
effects. Synthetic experiments support the predicted tradeoffs
and minimax scaling, while experiments on CLEAR-10 and CORe50
illustrate Shampoo's stability across the tested momentum
settings under natural distribution shifts. Future work should
incorporate stepsize grafting and derive tracking bounds with
explicit dependence on second-moment memory, bringing the
theory closer to practical Shampoo implementations.


\bibliography{references}

\clearpage
\appendix
\thispagestyle{empty}

\onecolumn

\title{
Supplementary Material:
Minimax Optimality of Shampoo for
Nonstationary Stochastic Optimization
}

\section{Appendix}\label{sec:app}
We shall repeatedly use the following notation in the proofs.
Throughout the proofs, we will write $\beta:=\beta_1\in[0,1)$ and define
\[
c_\beta:=\frac{\beta}{1-\beta},
\qquad
A_\beta:=1+c_\beta=\frac1{1-\beta}, \qquad \kappa_\varepsilon
:=
\frac{\varepsilon}
{\sqrt{(\varepsilon+\underline\lambda_L)
       (\varepsilon+\underline\lambda_R)}},  \qquad
K_{\beta,\varepsilon}
:=4c_{\beta}\sqrt{\kappa_\varepsilon}
   +2A_{\beta}+2\kappa_\varepsilon.
\]
We also define
\begin{equation}
 e_t:=\theta_t-\theta_t^\star,
 \qquad
 \mathsf E_T:=\sum_{t=1}^T\norm{e_t}^2,
 \qquad
 \mathsf N_T:=\sum_{t=1}^T\norm{\nu_t}^2.
 \label{eq:energies}
\end{equation}
The vectorized bias-corrected first moment satisfies
\begin{equation}
 \nu_t=a_t\nu_{t-1}+b_tg_t
 =\sum_{s=1}^tw_{t,s}g_s,
 \qquad
 a_t:=\frac{\beta(1-\beta^{t-1})}{1-\beta^t},
 \qquad
 b_t:=\frac{1-\beta}{1-\beta^t},
 \qquad
 w_{t,s}:=\frac{(1-\beta)\beta^{t-s}}{1-\beta^t},
 \label{eq:ab-recursion}
\end{equation}
with $\nu_0=0$. For the summation-by-parts arguments, define
\begin{equation}
 d_t:=\frac{a_t}{b_t}
 =\frac{\beta(1-\beta^{t-1})}{1-\beta}.
 \label{eq:d-definition}
\end{equation}
These coefficients satisfy
\[
d_1=0,
\qquad
0\leq d_t\leq c_\beta,
\qquad
d_{t+1}-d_t=\beta^t.
\]
Finally, separate the bias-corrected first moment into its filtered
mean-field and noise components:
\begin{equation}
 \nu_t=u_t+z_t,
 \qquad
 u_t:=\sum_{s=1}^tw_{t,s}h_s(\theta_s),
 \qquad
 z_t:=\sum_{s=1}^tw_{t,s}\xi_s,
 \qquad
 \mathsf Z_T:=\sum_{t=1}^T\norm{z_t}^2.
 \label{eq:signal-noise-split}
\end{equation}

\subsection{Proof of Theorem~\ref{thm:main-tracking}}

\begin{lemma}[Consequences of conditional norm-sub-Gaussianity]
\label{lem:subg-consequences}
There is a universal numerical constant $C_0>0$ with the following properties.
    \begin{enumerate}[label=(\roman*)]
        \item For every $t$, \(\E[\norm{\xi_t}^2\mid\F_{t-1}]\leq\sigma^2\log2\). Additionally, for every $u\geq0$, \(\Pp\!\left(\norm{\xi_t}>\sigma\sqrt u\mid\F_{t-1}\right) \leq2e^{-u}\).
        \item If $v_t$ is $\F_{t-1}$-measurable, then for every deterministic $a>0$ and $\delta\in(0,1)$, with probability at least $1-\delta$,
        \begin{equation}
         \left|\sum_{t=1}^T\ip{v_t}{\xi_t}\right| \leq a\sum_{t=1}^T\norm{v_t}^2 +\frac{C_0\sigma^2}{a}\log\frac2\delta
         \label{eq:flexible-martingale}.
        \end{equation}
    \end{enumerate}
\end{lemma}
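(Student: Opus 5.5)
For part (i), I would work conditionally on $\F_{t-1}$ and use Assumption~\ref{assump:cond-subgauss-noise}, $\E[\exp(\norm{\xi_t}^2/\sigma^2)\mid\F_{t-1}]\le 2$.

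\emph{Second moment.} Apply conditional Jensen to the convex map $x\mapsto e^x$:
\[
\exp\bigl(\E[\norm{\xi_t}^2\mid\F_{t-1}]/\sigma^2\bigr)\le 2 .
\]
Taking logarithms gives $\E[\norm{\xi_t}^2\mid\F_{t-1}]\le\sigma^2\log 2$.

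\emph{Tail bound.} Use the conditional Markov inequality on $\exp(\norm{\xi_t}^2/\sigma^2)$:
\[
\Pp(\norm{\xi_t}>\sigma\sqrt u\mid\F_{t-1})
=\Pp\bigl(e^{\norm{\xi_t}^2/\sigma^2}>e^{u}\mid\F_{t-1}\bigr)\le 2e^{-u}.
\]

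For part (ii), I would use a scalar exponential-supermartingale argument in three steps.

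\emph{Step 1 (conditional mgf bound).} Set $Y_t:=\ip{v_t}{\xi_t}$. It is a martingale difference, since $v_t$ is predictable and $\E[\xi_t\mid\F_{t-1}]=0$. By Cauchy--Schwarz, $|Y_t|\le\norm{v_t}\norm{\xi_t}$. Together with the conditional Orlicz bound on $\norm{\xi_t}$, this gives that $Y_t$ is conditionally sub-Gaussian with variance proxy $c\,\sigma^2\norm{v_t}^2$. Concretely, I would show there is a universal $c$ such that for all real $\lambda$,
\[
\E[e^{\lambda Y_t}\mid\F_{t-1}]\le\exp\bigl(c\lambda^2\sigma^2\norm{v_t}^2\bigr).
\]
This is the standard centered-$\psi_2$ mgf bound. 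I would prove it by expanding $e^{x}\le 1+x+x^2e^{|x|}$ and bounding the moments $\E[\norm{\xi_t}^{2k}\mid\F_{t-1}]\le 2\,k!\,\sigma^{2k}$, which follow from the assumption. This is the step where I expect the only real care. The point is that the variance proxy uses $\norm{v_t}$ rather than a direction-dependent quantity, which is fine because Cauchy--Schwarz reduces everything to the norm $\norm{\xi_t}$.

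\emph{Step 2 (supermartingale and Ville/Markov).} Define
\[
S_k:=\exp\Bigl(\lambda\sum_{t\le k}Y_t-c\lambda^2\sigma^2\sum_{t\le k}\norm{v_t}^2\Bigr).
\]
By Step 1, $S_k$ is a nonnegative supermartingale with $\E S_T\le1$. Markov's inequality then gives, with probability at least $1-\delta/2$,
\[
\lambda\sum_{t=1}^TY_t\le c\lambda^2\sigma^2\sum_{t=1}^T\norm{v_t}^2+\log(2/\delta).
\]

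\emph{Step 3 (choice of $\lambda$ and two-sidedness).} Choose the deterministic value $\lambda:=a/(c\sigma^2)>0$ and divide by $\lambda$. This yields
\[
\sum_{t=1}^T Y_t\le a\sum_{t=1}^T\norm{v_t}^2+\frac{c\sigma^2}{a}\log\frac2\delta .
\]
Applying the same argument to $-Y_t$ gives the matching lower bound, and a union bound over the two events gives \eqref{eq:flexible-martingale} with $C_0=c$. The $a\sum\norm{v_t}^2$ term absorbs the random quadratic variation, so no bound on the predictable $v_t$ is required. The only thing needed is that $a$ is deterministic, so that $\lambda$ is a fixed number and the supermartingale argument is valid.
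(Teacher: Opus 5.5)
Your proposal is correct and follows essentially the same route as the paper: Jensen and conditional Markov for (i); then, for (ii), a conditional centered sub-Gaussian mgf bound for $\ip{v_t}{\xi_t}$ obtained via Cauchy--Schwarz and moment bounds on $\norm{\xi_t}$, followed by an exponential supermartingale, Markov at time $T$ applied to $\pm\lambda$, and the choice $\lambda=a/(C_0\sigma^2)$. The only difference is cosmetic: the paper gets the moment bounds from the layer-cake formula and sums the exponential series using Stirling's bound, whereas you use $e^x\le 1+x+x^2e^{|x|}$ together with even-moment bounds read off directly from the Orlicz condition.
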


\begin{proof}
Conditional Jensen and Assumption~\ref{assump:cond-subgauss-noise} give 
\(\exp\!\left(\frac{\E[\norm{\xi_t}^2\mid\F_{t-1}]}{\sigma^2}\right) \leq \E\!\left[ e^{\norm{\xi_t}^2/\sigma^2}\mid\F_{t-1} \right]\leq2\). 
Taking logarithms proves the conditional second moment bound. Conditional Markov inequality gives
\(\Pp(\norm{\xi_t}^2>\sigma^2u\mid\F_{t-1}) \leq e^{-u} \E[e^{\norm{\xi_t}^2/\sigma^2}\mid\F_{t-1}] \leq2e^{-u}\),
which proves (i).

Using the layer-cake identity, $|\ip{v}{\xi_t}|\leq\norm v\norm{\xi_t}$, and (i) imply, for every integer $p\geq1$,
\begin{align*}
 \E[|\ip{v}{\xi_t}|^p\mid\F_{t-1}] = p\int_0^\infty x^{p-1}
  \Pp\!\left(|\ip{v}{\xi_t}|>x \mid\F_{t-1} \right )\,dx \leq2p\int_0^\infty x^{p-1}
  \exp\!\left(-\frac{x^2}{\sigma^2\norm v^2}\right)\,dx \le \left(C\sigma\norm v\sqrt p\right)^p
\end{align*}
for a universal $C$ (the assertion is trivial for $v=0$). Put $X:=\ip{v}{\xi_t}$ and $z:=C|\lambda|\sigma\norm v$. Conditional centering removes the linear term from the exponential series, and hence
\[
    \E[e^{\lambda X}\mid\F_{t-1}]\leq1+\sum_{p=2}^{\infty}\frac{(z\sqrt p)^p}{p!}
    \leq \exp(C_0\lambda^2\sigma^2\norm v^2),\qquad \lambda\in\R.
\]
The second inequality is due to Stirling's lower bound $p!\geq(p/e)^p$ which makes the $p$th summand at most $(ez/\sqrt p)^p$. For $0\le z\le(2e)^{-1}$, we get $1+\sum_{p=2}^{\infty}(z\sqrt p)^p/p!\le 1+\sum_{p=2}^{\infty}(ez)^p\le e^{2e^2z^2}$. For \(z>(2e)^{-1}\), split at $N:=\lceil4e^2z^2\rceil$. For $p>N$ the summand is at most $2^{-p}$, while the first $N=O(z^2)$ terms are together at most $e^{C_1z^2}$. The same conditional inequality holds pointwise for a predictable $v_t$. Hence
\[
 \exp\!\left\{
  \lambda\sum_{t=1}^k\ip{v_t}{\xi_t}
  -C_0\lambda^2\sigma^2\sum_{t=1}^k\norm{v_t}^2
 \right\},
 \qquad 0\leq k\leq T,
\]
is a nonnegative supermartingale with initial value one. Markov's inequality at $k=T$, applied to $\lambda$ and $-\lambda$, gives with probability at least $1-\delta$,
\[
 \left|\sum_{t=1}^T\ip{v_t}{\xi_t}\right|
 \leq
 C_0\lambda\sigma^2\sum_{t=1}^T\norm{v_t}^2
 +\frac1\lambda\log\frac2\delta.
\]
Choose $\lambda=a/(C_0\sigma^2)$ and absorb universal numerical factors into $C_0$.
\end{proof}

\begin{lemma}[High-probability sixth-moment average]
\label{lem:sixth-moment}
Let \(\ell_\delta:=\log\dfrac4\delta\). There is a universal numerical constant $C_6\geq 1$ such that, with probability at least $1-\delta/4$, \(\dfrac1T\sum_{t=1}^T\norm{g_t}^6  \leq   \mathfrak G_\delta :=C_6\left\{(LD)^6+\sigma^6\ell_\delta^3\right\}\).
\end{lemma}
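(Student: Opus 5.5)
The plan is to split each stochastic gradient into its mean and noise parts and bound the two contributions separately. Using $g_t=h_t(\theta_t)+\xi_t$ and the elementary inequality $\|x+y\|^6\le 32(\|x\|^6+\|y\|^6)$, we get
\[
\frac1T\sum_{t=1}^T\norm{g_t}^6\le \frac{32}{T}\sum_{t=1}^T\norm{h_t(\theta_t)}^6+\frac{32}{T}\sum_{t=1}^T\norm{\xi_t}^6 .
\]
The mean-field part is deterministic given the iterates. By \eqref{eq:mean-gradient-bound}, $\norm{h_t(\theta_t)}\le LD$ almost surely, so the first sum is at most $32(LD)^6$. Everything therefore reduces to a high-probability bound of order $\sigma^6\ell_\delta^3$ on the empirical sixth moment of the noise norms.

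For the noise, I would set $Y_t:=\norm{\xi_t}^2/\sigma^2$. Assumption~\ref{assump:cond-subgauss-noise} gives $\E[e^{Y_t}\mid\F_{t-1}]\le 2$ a.s. The key device is a sublinear transform that turns the cube $Y_t^3$ into something with a conditionally bounded exponential moment. Take $\phi(y):=y^{1/3}$, so that $\phi(Y_t^3)=Y_t$; more generally, use any $\lambda\le1$ with $\E[\exp(\lambda Y_t)\mid\F_{t-1}]\le 2^\lambda$. Then
\[
M_k:=\exp\Big(\sum_{t\le k}Y_t-k\log 2\Big)
\]
is a nonnegative supermartingale with $M_0=1$. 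Markov's inequality then gives $\sum_t Y_t\le T\log2+\log(4/\delta)$ with probability at least $1-\delta/4$.

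Passing from the sum of the $Y_t$ to the sum of the $Y_t^3$ is the delicate step, because $\sum Y_t^3\le(\sum Y_t)^3$ loses a factor $T^2$. I would instead work with a truncation combined with a union bound. By Lemma~\ref{lem:subg-consequences}(i), $\Pp(Y_t>u\mid\F_{t-1})\le 2e^{-u}$. The plan then has three parts:
\begin{enumerate}[label=(\alph*)]
\item Decompose $Y_t^3=Y_t^3\mathbf 1\{Y_t\le \ell\}+Y_t^3\mathbf 1\{Y_t>\ell\}$ with $\ell\asymp\ell_\delta$.
\item The bounded part is at most $\ell^3$ per term, so its average is at most $\ell_\delta^3$. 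This gives the $\sigma^6\ell_\delta^3$ term.
\item For the tail part, use $y^3\le C e^{y/2}$ and note that $\E[e^{Y_t/2}\mathbf 1\{Y_t>\ell\}\mid\F_{t-1}]\le \sqrt{2}\,\Pp(Y_t>\ell\mid\F_{t-1})^{1/2}\lesssim e^{-\ell/2}$ by Cauchy--Schwarz. Then apply a Freedman/Bernstein-type or Markov-supermartingale argument to the nonnegative variables $e^{Y_t/4}\mathbf 1\{Y_t>\ell\}$, which have conditionally bounded second exponential moments. This should show that their average is $O(1)$ with probability at least $1-\delta/4$.
\end{enumerate}

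An alternative avoids step (c) altogether. Apply conditional Markov to the running sum directly: $\E[Y_t^3\mid\F_{t-1}]\le 6\E[e^{Y_t}\mid\F_{t-1}]\le 12$. Then bound $\sum_t(Y_t^3-\E[Y_t^3\mid\F_{t-1}])$ via the $\Psi_{1/3}$ Orlicz tail of $Y_t^3$, using a martingale concentration inequality for heavy-tailed but stretched-exponential increments. This yields an average of $O(1)+O(\ell_\delta^3/T)$, which is at most $C(1+\ell_\delta^3)$.

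I expect the main obstacle to be exactly this concentration step. $Y_t^3$ is only $\Psi_{1/3}$, so standard sub-exponential martingale bounds do not apply directly, and the truncation level and union bound must be arranged so that the cube of the logarithm, $\ell_\delta^3$, rather than a $\log T$ factor, is what appears. The cleanest route I would try first is the truncation at level $\ell_\delta$ combined with the conditional moment bound $\E[Y_t^3\mid\F_{t-1}]\le 12$ and a simple Markov bound on the sum. Combining the noise bound with the $(LD)^6$ mean-field bound and absorbing constants gives $\mathfrak G_\delta=C_6\{(LD)^6+\sigma^6\ell_\delta^3\}$.
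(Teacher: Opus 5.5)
The truncation split is sound, but the step you flag as the obstacle is not closed by the tools you name. Write $Y_t=\norm{\xi_t}^2/\sigma^2$ and $V_t:=e^{Y_t/4}\mathbf 1\{Y_t>\ell\}$.

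\emph{Exponential-moment tools fail here.} A Bernstein, Freedman or exponential-supermartingale argument for $V_t$ or $V_t^2$ is unavailable. The assumption only gives $\Pp(e^{Y_t/2}>s)\le 2s^{-2}$, and e.g.\ exponentially distributed $Y_t$ produces a genuinely Pareto-type tail. So neither $V_t$ nor $V_t^2$ has exponential moments, contrary to the ``conditionally bounded second exponential moments'' you invoke.

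\emph{The wrong quantity is controlled.} Even an $O(1)$ bound on the average of $V_t$ would not control the average of $V_t^2=e^{Y_t/2}\mathbf 1\{Y_t>\ell\}$. That average is what dominates $Y_t^3\mathbf 1\{Y_t>\ell\}$.

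\emph{Your ``cleanest route'' loses the logarithm.} Markov with the untruncated bound $\E[Y_t^3\mid\F_{t-1}]\le 12$ only gives $\Pp(\frac1T\sum_tY_t^3>s)\le 12/s$. That yields a level of order $\delta^{-1}$, not $\ell_\delta^3$.

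The supermartingale for $\sum_tY_t$ is never used, and the $\Psi_{1/3}$ martingale inequality in your alternative is unnecessary.

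\emph{How to close the gap.} No concentration inequality is needed; your own Cauchy--Schwarz estimate plus plain Markov suffices. Let $C_\ast:=\sup_{y\ge0}y^3e^{-y/2}$. By the tower property,
\[
\E\Big[\sum_{t=1}^T Y_t^3\mathbf 1\{Y_t>\ell\}\Big]\le C_\ast\sum_{t=1}^T\E\big[e^{Y_t/2}\mathbf 1\{Y_t>\ell\}\big]\le 2C_\ast T e^{-\ell/2}.
\]
Hence $\Pp\big(\frac1T\sum_t Y_t^3\mathbf 1\{Y_t>\ell\}>1\big)\le 2C_\ast e^{-\ell/2}$. Choose $\ell:=2\log(8C_\ast/\delta)$, which is $\lesssim\ell_\delta$ because $\delta<1$; this failure probability is then exactly $\delta/4$. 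On the complement, $\frac1T\sum_tY_t^3\le\ell^3+1\lesssim\ell_\delta^3$. There is no union bound over $t$, hence no $\log T$.

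\emph{Comparison with the paper.} The repaired argument is a first-moment route, different from the paper's. The paper uses Markov at a high moment order:
\begin{enumerate}[label=(\roman*)]
\item From the tail bound it derives $(\E\norm{\xi_t}^q)^{1/q}\le C\sigma\sqrt q$.
\item It takes $q=6p$ with $p=\ell_\delta$, so that $\big\|\norm{g_t}^6\big\|_{L^p}\lesssim (LD)^6+\sigma^6p^3$.
\item Minkowski in $L^p$ passes this to $S:=\frac1T\sum_t\norm{g_t}^6$.
\item It concludes with $\Pp(S>e\norm{S}_{L^p})\le e^{-p}=\delta/4$.
\end{enumerate}
In the paper, $\ell_\delta^3$ is $p^3$, coming from the $\sqrt q$ moment growth; in your route it is the truncation level cubed. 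Both arguments use only marginal tail bounds, not the martingale structure. The paper's is shorter and handles the mean part in the same Minkowski step; yours needs only first moments and makes the origin of $\ell_\delta^3$ transparent.
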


\begin{proof}
Fix $q\geq1$. By Lemma~\ref{lem:subg-consequences}(i), the layer-cake identity gives, conditionally on $\F_{t-1}$,
\begin{equation}
     \E[\norm{\xi_t}^q\mid\F_{t-1}] = q\int_0^\infty x^{q-1} \Pp(\norm{\xi_t}>x\mid\F_{t-1})\,dx  
 \leq2q\int_0^\infty x^{q-1}\exp\!\left(-\frac{x^2}{\sigma^2}\right)\,dx =q\sigma^q\Gamma(q/2).
 \label{eq:q-moment}
\end{equation}
The standard Gamma-function bound $(q\Gamma(q/2))^{1/q}\leq C\sqrt q$ therefore implies $\left(\E\norm{\xi_t}^q\right)^{1/q}\leq C\sigma\sqrt q$. The deterministic mean-field bound \eqref{eq:mean-gradient-bound} and Minkowski's inequality yield \(\left(\E\norm{g_t}^q\right)^{1/q}\leq LD+C\sigma\sqrt q\). Now let $p:=\ell_\delta >1$. Taking $q=6p$, using $(a+b)^6\leq32(a^6+b^6)$, and then using Minkowski in $L^p$ gives
\[
 \left\|
  \frac1T\sum_{t=1}^T\norm{g_t}^6\right\|_{L^p}
  \leq\frac1T\sum_{t=1}^T \left\|\norm{g_t}^6\right\|_{L^p}
  =\frac1T\sum_{t=1}^T\left(\E\norm{g_t}^{6p}\right)^{1/p}
  \leq C\left\{(LD)^6+\sigma^6p^3\right\}.
\]
If $Y:=T^{-1}\sum_t\norm{g_t}^6$, Markov's inequality applied to $Y^p$ gives \(\Pp\{Y>e\norm{Y}_{L^p}\}\leq e^{-p}=\delta/4\). Absorbing the numerical factor $e$ into $C_6$ proves the claim.
\end{proof}

For the later burn-in result, we retain a uniform-in-time event.
Define $H_{T,\delta}:=LD+\sigma\sqrt{\log(8T/\delta)}$
Lemma~\ref{lem:subg-consequences}(i), a union bound, and
\eqref{eq:mean-gradient-bound} show that
\begin{equation}
 \max_{t\leq T}\norm{g_t}
 =\max_{t\leq T}\norm{\mathsf G_t}_F
 \leq H_{T,\delta}
 \label{eq:gradient-event}
\end{equation}
with probability at least $1-\delta/4$.

\begin{lemma}[Uniform Shampoo control for later one-step arguments]
\label{lem:Q-control}
For every realization and every $t\leq T$,
\begin{equation}
 0\prec Q_t\preceq I_d.
 \label{eq:Q-pathwise-spectral}
\end{equation}
On the event \eqref{eq:gradient-event},
\begin{equation}
 \norm{I_d-Q_t}_{\op}
 \leq\frac{H_{T,\delta}^2}{2\varepsilon}
 \qquad(1\leq t\leq T).
 \label{eq:Q-uniform-bound}
\end{equation}
\end{lemma}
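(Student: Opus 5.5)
Both claims follow from the Kronecker structure of $Q_t=\sqrt{\varepsilon}\,(R_t^{-1/4}\otimes L_t^{-1/4})$ together with the spectral description of $L_t$ and $R_t$.

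\textbf{Step 1: spectral decomposition.} Since $A_t^L=\sum_s(\cdot)\mathsf G_s\mathsf G_s^{\mathsf T}\succeq0$ with nonnegative EMA weights, we have $D_t^L\succeq0$, and likewise $D_t^R\succeq0$. Hence $L_t=\varepsilon I_m+D_t^L\succeq\varepsilon I_m\succ0$ and $R_t\succeq\varepsilon I_n\succ0$, so the inverse fourth roots are well defined, symmetric, and positive definite.

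Diagonalize $L_t=U\,\mathrm{diag}(\varepsilon+\lambda_i)\,U^{\mathsf T}$ and $R_t=V\,\mathrm{diag}(\varepsilon+\gamma_j)\,V^{\mathsf T}$ with $\lambda_i,\gamma_j\ge0$. Then $Q_t$ is symmetric and diagonalized by $V\otimes U$, with eigenvalues
\[
q_{ij}=\varepsilon^{1/2}(\varepsilon+\lambda_i)^{-1/4}(\varepsilon+\gamma_j)^{-1/4}=\bigl((1+\lambda_i/\varepsilon)(1+\gamma_j/\varepsilon)\bigr)^{-1/4}\in(0,1].
\]
This gives $0\prec Q_t\preceq I_d$ for every realization, which is \eqref{eq:Q-pathwise-spectral}.

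\textbf{Step 2: bound on $\norm{I_d-Q_t}_{\op}$.} Because $I_d-Q_t$ is diagonalized by the same basis, we get $\norm{I_d-Q_t}_{\op}=\max_{i,j}(1-q_{ij})$. For $x,y\ge0$, the elementary inequality $1-(1+x)^{-1/4}(1+y)^{-1/4}\le \tfrac14(x+y)$ holds: the function $\phi(x,y)=(1+x)^{-1/4}(1+y)^{-1/4}$ is convex, so $\phi\ge 1-\tfrac14(x+y)$ by its tangent plane at the origin. Therefore
\[
\norm{I_d-Q_t}_{\op}\le \frac{\lambda_{\max}(D_t^L)+\lambda_{\max}(D_t^R)}{4\varepsilon}.
\]

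\textbf{Step 3: controlling the second-moment factors.} On the event \eqref{eq:gradient-event} every gradient satisfies $\norm{\mathsf G_s}_F\le H_{T,\delta}$ for $s\le t\le T$. Since $\norm{\mathsf G_s\mathsf G_s^{\mathsf T}}_{\op}=\norm{\mathsf G_s}_{\op}^2\le\norm{\mathsf G_s}_F^2$, the same bound applies to each rank-one summand. The bias-corrected weights $(1-\beta_2)\beta_2^{t-s}/(1-\beta_2^t)$ sum to one, so $D_t^L$ is a convex combination of the matrices $\mathsf G_s\mathsf G_s^{\mathsf T}$. Hence $\lambda_{\max}(D_t^L)\le H_{T,\delta}^2$, and similarly $\lambda_{\max}(D_t^R)\le H_{T,\delta}^2$. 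Combining with Step 2 gives $\norm{I_d-Q_t}_{\op}\le H_{T,\delta}^2/(2\varepsilon)$, which is \eqref{eq:Q-uniform-bound}.

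\textbf{Main obstacle.} The only real care is in Steps 1 and 2: one must use that $L_t$ and $R_t$ commute with their own functions, so that the Kronecker eigenstructure applies, and one needs a clean scalar inequality for the fourth roots.
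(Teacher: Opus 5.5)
Your proof is correct and follows essentially the same route as the paper. Both arguments diagonalize $Q_t$ via its Kronecker structure to get eigenvalues in $(0,1]$, and both use $0\preceq D_t^L\preceq H_{T,\delta}^2 I_m$ and $0\preceq D_t^R\preceq H_{T,\delta}^2 I_n$ from the convex-combination form. The only difference is the scalar step. The paper lower-bounds $\lambda_{\min}(Q_t)\ge(1+H_{T,\delta}^2/\varepsilon)^{-1/2}$ and applies $1-(1+x)^{-1/2}\le x/2$, while your tangent-plane bound for the convex $\phi$ reaches the same constant through a slightly finer left/right-separated intermediate bound.

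One small inaccuracy: $\mathsf G_s\mathsf G_s^{\mathsf T}$ is not rank one in general. This does not matter, since you only use $\mathsf G_s\mathsf G_s^{\mathsf T}\succeq 0$ and $\|\mathsf G_s\mathsf G_s^{\mathsf T}\|_{\op}\le\|\mathsf G_s\|_F^2$.
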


\begin{proof}
The bias-corrected second-moment factors satisfy $D_t^L=\sum_{s=1}^tq_{t,s}^{(2)} \mathsf G_s\mathsf G_s^{\mathsf T}$ and $D_t^R=\sum_{s=1}^tq_{t,s}^{(2)}\mathsf G_s^{\mathsf T}\mathsf G_s$ where the coefficients $q_{t,s}^{(2)}:=\frac{(1-\beta_2)\beta_2^{t-s}}{1-\beta_2^t}$ are nonnegative and sum to one. Hence, $D_t^L,D_t^R\succeq0$. Since $\varepsilon>0$, every eigenvalue of $Q_t=\left(I_n+\dfrac{D_t^R}{\varepsilon}\right)^{-1/4} \otimes \left(I_m+\dfrac{D_t^L}{\varepsilon}\right)^{-1/4}$ lies in $(0,1]$, proving $0\prec Q_t\preceq I_d$.

On \eqref{eq:gradient-event}, for every matrix $G$, $GG^{\mathsf T}\preceq\norm G_F^2I$ and $G^{\mathsf T}G\preceq\norm G_F^2I$, so $0\preceq D_t^L\preceq H_{T,\delta}^2I_m$ and $0\preceq D_t^R\preceq H_{T,\delta}^2I_n$. Thus every eigenvalue of $Q_t$ is at least $(1+H_{T,\delta}^2/\varepsilon)^{-1/2}$. Therefore
\[
 \norm{I_d-Q_t}_{\op}
 \leq1-\left(1+\frac{H_{T,\delta}^2}{\varepsilon}\right)^{-1/2}
 \leq\frac{H_{T,\delta}^2}{2\varepsilon}.
\]
\end{proof}

\begin{lemma}[Ridge control of the Shampoo geometry]
    \label{lem:ridge-preconditioner}
    Suppose that, for every \(t\),
\(
\lambda_{\min}(D_t^L)\ge \underline\lambda_L,
\,
\lambda_{\min}(D_t^R)\ge \underline\lambda_R,
\) for some $\underline\lambda_L, \underline\lambda_R \geq 0$ where $D_t^L, D_t^R$ are explained in Algorithm~\ref{alg:shampoo}.
Then, for every \(z\in\mathbb R^{mn}\),
\[
\|Q_t(\varepsilon)z\|^2
\le
\frac{\varepsilon}
{\sqrt{
(\varepsilon+\underline\lambda_R)
(\varepsilon+\underline\lambda_L)
}}
\|z\|^2 = \kappa_\varepsilon \|z\|^2.
\]
\end{lemma}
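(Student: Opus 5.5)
The argument is a spectral computation on the Kronecker factor of $Q_t(\varepsilon)$. Since $Q_t(\varepsilon)$ is symmetric, I will bound its operator norm and use
\[
\|Q_t(\varepsilon)z\|^2 \le \lambda_{\max}\bigl(Q_t(\varepsilon)^2\bigr)\|z\|^2 .
\]

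\textbf{Step 1: factor and rescale.} As in the proof of Lemma~\ref{lem:Q-control}, write
\[
Q_t(\varepsilon)=\sqrt{\varepsilon}\,\bigl(R_t^{-1/4}\otimes L_t^{-1/4}\bigr),
\qquad
L_t=\varepsilon I_m+D_t^L,\quad R_t=\varepsilon I_n+D_t^R,
\]
with $D_t^L,D_t^R\succeq 0$. Both $L_t$ and $R_t$ are symmetric positive definite, so their inverse fourth roots are well defined and symmetric positive definite.

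\textbf{Step 2: read off the eigenvalues.} The eigenvalues of a Kronecker product of symmetric matrices are the pairwise products of the factors' eigenvalues. The eigenvalues of $L_t^{-1/4}$ are $(\varepsilon+\lambda_i(D_t^L))^{-1/4}$, and those of $R_t^{-1/4}$ are $(\varepsilon+\lambda_j(D_t^R))^{-1/4}$. Hence every eigenvalue of $Q_t(\varepsilon)$ has the form
\[
\sqrt{\varepsilon}\,(\varepsilon+\lambda_i(D_t^L))^{-1/4}(\varepsilon+\lambda_j(D_t^R))^{-1/4}.
\]

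\textbf{Step 3: bound from the minimum eigenvalues.} Each such eigenvalue is positive and decreasing in $\lambda_i$ and $\lambda_j$. By the hypotheses $\lambda_{\min}(D_t^L)\ge\underline\lambda_L$ and $\lambda_{\min}(D_t^R)\ge\underline\lambda_R$, the largest eigenvalue satisfies
\[
\lambda_{\max}\bigl(Q_t(\varepsilon)\bigr)\le \sqrt{\varepsilon}\,\bigl[(\varepsilon+\underline\lambda_L)(\varepsilon+\underline\lambda_R)\bigr]^{-1/4}.
\]

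\textbf{Step 4: conclude.} Squaring gives
\[
\lambda_{\max}\bigl(Q_t(\varepsilon)^2\bigr)\le \frac{\varepsilon}{\sqrt{(\varepsilon+\underline\lambda_L)(\varepsilon+\underline\lambda_R)}}=\kappa_\varepsilon .
\]
Combining this with the inequality in the opening paragraph gives $\|Q_t(\varepsilon)z\|^2\le\kappa_\varepsilon\|z\|^2$ for every $z\in\mathbb R^{mn}$.

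The only step that needs care is the spectral identity for Kronecker products of symmetric matrices. I would justify it with a common eigenbasis: if $u_i$ are eigenvectors of $L_t$ and $v_j$ are eigenvectors of $R_t$, then the vectors $v_j\otimes u_i$ form an orthonormal eigenbasis of $R_t^{-1/4}\otimes L_t^{-1/4}$. There is no real obstacle beyond this.
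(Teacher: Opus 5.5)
Your proposal is correct and follows essentially the same route as the paper. Both arguments diagonalize $Q_t(\varepsilon)$ in the orthonormal Kronecker eigenbasis $\{v_{j,t}\otimes u_{i,t}\}$, read off the eigenvalues $\sqrt{\varepsilon}\,(\varepsilon+\lambda_{i,t}^L)^{-1/4}(\varepsilon+\lambda_{j,t}^R)^{-1/4}$, and bound them using the minimum-eigenvalue hypotheses; the paper expands $\|Q_t(\varepsilon)z\|^2$ by Parseval in that basis, whereas you bound by $\lambda_{\max}(Q_t(\varepsilon))^2\|z\|^2$, which is equivalent.
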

\begin{proof}
For each $t$, let \(D_t^L:=U_t\Lambda_t^L U_t^\top\) and \(D_t^R := V_t\Lambda_t^R V_t^\top\) where \(\Lambda_t^L=\operatorname{diag}(\lambda_{1,t}^L,\ldots,\lambda_{m,t}^L)\) and \(\Lambda_t^R=\operatorname{diag} (\lambda_{1,t}^R,\ldots,\lambda_{n,t}^R)\). The vectors \(\left\{v_{j,t}\otimes u_{i,t}: 1\le i\le m,\; 1\le j\le n \right\}\) form an orthonormal basis of $\mathbb R^{mn}$. In this basis, the eigenvalues of $Q_t(\varepsilon)$ are
\[
    q_{ij,t}(\varepsilon)= \left(1+\frac{\lambda_{j,t}^R}{\varepsilon}\right)^{-1/4}
        \left(1+\frac{\lambda_{i,t}^L}{\varepsilon}\right)^{-1/4} = 
        \left(\frac{\varepsilon}{\sqrt{(\varepsilon+\lambda_{j,t}^R)(\varepsilon+\lambda_{i,t}^L)}}\right)^{1/2}.
\]
Parseval’s identity gives, for every $z\in\mathbb R^{mn}$,
\[
\|Q_t(\varepsilon)z\|^2 =
    \sum_{i=1}^m\sum_{j=1}^n\frac{\left|\left\langle z,v_{j,t}\otimes u_{i,t}\right\rangle\right|^2}{\sqrt{\left(1+\lambda_{j,t}^R/\varepsilon\right)\left(1+\lambda_{i,t}^L/\varepsilon\right)}}
    \leq \frac{\varepsilon}{\sqrt{(\varepsilon+\underline\lambda_{R})(\varepsilon+\underline\lambda_{L})}}\|z\|^2 = \kappa_\varepsilon \|z\|^2.
\]
\end{proof}

\begin{lemma}[Uniform $\ell_2$ Hardy bound for bias-corrected EMA]
\label{lem:hardy}
For every finite Hilbert-space-valued sequence $x_1,\ldots,x_T$,
\begin{equation}
 \sum_{t=1}^T
 \left\|\sum_{s=1}^tw_{t,s}x_s\right\|^2
 \leq6\sum_{s=1}^T\norm{x_s}^2.
 \label{eq:hardy}
\end{equation}
\end{lemma}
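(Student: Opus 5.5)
The plan is to bound each inner norm by a scalar convolution and then apply a Schur/Young-type bound, splitting the weights into an ``early'' part where the bias-correction denominator $1-\beta^t$ is small and a ``late'' part where it is bounded below. The triangle inequality gives
\[
\Bigl\|\sum_{s=1}^t w_{t,s}x_s\Bigr\|\le \sum_{s=1}^t w_{t,s}\norm{x_s}=:y_t,
\]
so it suffices to prove $\sum_t y_t^2\le 6\sum_s a_s^2$ with $a_s:=\norm{x_s}\ge0$. This reduces the Hilbert-space statement to a statement about nonnegative scalars.

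Next I would use the Cauchy--Schwarz (Schur test) inequality with respect to the probability weights $w_{t,\cdot}$. Since $\sum_{s\le t}w_{t,s}=1$, Jensen gives $y_t^2\le\sum_{s\le t}w_{t,s}a_s^2$. Summing over $t$ yields
\[
\sum_t y_t^2\le\sum_s a_s^2\, C_s,\qquad C_s:=\sum_{t\ge s}w_{t,s}=\sum_{t\ge s}\frac{(1-\beta)\beta^{t-s}}{1-\beta^t}.
\]
So the whole lemma reduces to the uniform column-sum bound $\sup_s C_s\le 6$, uniformly in $\beta\in[0,1)$.

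The main obstacle is that $C_s$ is \emph{not} uniformly bounded by a constant for small $s$. For $s=1$ we get $C_1=\sum_{t\ge1}(1-\beta)\beta^{t-1}/(1-\beta^t)$. When $\beta\to1$ this behaves like $\sum_{t\lesssim (1-\beta)^{-1}}1/t\approx\log\frac1{1-\beta}$, which is unbounded. The crude Jensen step therefore loses too much, and a genuine Hardy-type argument is needed. I would instead run the Schur test with nonconstant weights $p_s$: bound
\[
y_t^2\le\Bigl(\sum_{s\le t}w_{t,s}p_s^{-1}\Bigr)\Bigl(\sum_{s\le t}w_{t,s}p_s a_s^2\Bigr),
\]
and choose $p_s$ adapted to the bias correction, for example $p_s\propto\sqrt{1-\beta^s}$, mirroring the classical choice $p_s=s^{1/2}$ in Hardy's inequality.

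I would then verify two estimates. First, $\sum_{s\le t}w_{t,s}(1-\beta^s)^{-1/2}\le c_1(1-\beta^t)^{-1/2}$. Second, $\sum_{t\ge s}w_{t,s}(1-\beta^t)^{-1/2}\le c_2(1-\beta^s)^{-1/2}$. Both follow by comparing $1-\beta^s$ with $\min\{1,s(1-\beta)\}$ up to the factor $1-e^{-1}$. In the regime $t(1-\beta)\lesssim1$ the weights are about $1/t$, and the sums are the classical Hardy sums with constant $2$. In the regime $s(1-\beta)\gtrsim1$ the weights are geometric with total mass at most $(1-\beta^s)^{-1}$-bounded, giving a constant. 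Combining gives $\sum_t y_t^2\le c_1c_2\sum_s a_s^2$.

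The remaining work is to track the constants so that $c_1c_2\le6$. A fallback, if they come out slightly larger, is to split $x$ into the blocks $s\le(1-\beta)^{-1}$ and $s>(1-\beta)^{-1}$, apply the discrete Hardy inequality (constant $4$) on the first block and Young's convolution inequality (constant $1$) on the second. This would give the claim with an absolute constant, to be tightened to $6$.
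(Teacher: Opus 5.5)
Your reduction to the scalars $a_s=\norm{x_s}$ matches the paper, as does your observation that the unweighted Jensen/column-sum bound fails (the first column sum grows like $\log\frac{1}{1-\beta}$) and your switch to a weighted Schur test. Your row-sum weight $(1-\beta^s)^{-1/2}$ differs from the paper's $\Lambda_s^{-1/2}$ (notation below) only by the factor $(1-\beta)^{-1/2}\beta^{-(s-1)/2}$, and it works just as well.

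The gap is the constant. The lemma asserts $6$, but the verification you sketch cannot deliver it. Comparing $1-\beta^s$ with $\min\{1,s(1-\beta)\}$ up to the factor $1-e^{-1}$, and splitting into regimes, loses a multiplicative factor at each comparison. It therefore only gives $c_1c_2\le C$ for some absolute $C$ visibly larger than $6$. Your fallback has the same problem: it starts from Hardy's constant $4$, inflated by the same comparison factors, and adds a separate convolution piece. So ``to be tightened to $6$'' is a missing argument, not bookkeeping. For the downstream lemmas any absolute constant would suffice, at the cost of changing numerical constants such as $384$, but that is not what the statement claims.

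The missing idea is an exact rewriting under which both of your estimates telescope, with constants $2$ and $3$. The case $\beta=0$ is trivial; for $0<\beta<1$ set $\lambda_s:=\beta^{-(s-1)}$ and $\Lambda_t:=\sum_{j\le t}\lambda_j$. Then $1-\beta^t=(1-\beta)\beta^{t-1}\Lambda_t$, so $w_{t,s}=\lambda_s/\Lambda_t$, with $\lambda_s$ increasing in $s$.

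Your row estimate becomes $\sum_{s\le t}\lambda_s^{3/2}\Lambda_s^{-1/2}\le c_1\lambda_t^{1/2}\Lambda_t^{1/2}$. This holds with $c_1=2$, using $\lambda_s\le\lambda_t$ and $\lambda_s\Lambda_s^{-1/2}\le2(\Lambda_s^{1/2}-\Lambda_{s-1}^{1/2})$.

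Your column estimate becomes $\lambda_s^{1/2}\sum_{t\ge s}\lambda_t^{1/2}\Lambda_t^{-3/2}\le c_2\Lambda_s^{-1/2}$. This holds with $c_2=3$, using three facts:
\begin{enumerate}
\item $\lambda_s^{1/2}\lambda_t^{1/2}\le\lambda_t$;
\item $\lambda_s\Lambda_s^{-3/2}\le\Lambda_s^{-1/2}$ for the $t=s$ term;
\item $\lambda_t\Lambda_t^{-3/2}\le2(\Lambda_{t-1}^{-1/2}-\Lambda_t^{-1/2})$ for $t>s$.
\end{enumerate}
This gives exactly $c_1c_2=6$. The paper runs the same two telescoping sums with weights $p_s=\Lambda_s^{-1/2}$.
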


\begin{proof}
For $\beta=0$, the averaging operator is the identity. Suppose $0<\beta<1$, and put $q:=\beta^{-1}$, $\lambda_s:=q^{s-1}$, and $\Lambda_t:=\sum_{j=1}^t\lambda_j$. Then, $w_{t,s}=\frac{\lambda_s}{\Lambda_t}\mathbf1\{s\leq t\}$. Choose the Schur weights $p_s:=\Lambda_s^{-1/2}$. Since $\lambda_s=\Lambda_s-\Lambda_{s-1}$ and $\Lambda_0=0$, the row sums telescope:
\begin{equation}
 \sum_{s=1}^tw_{t,s}p_s =\frac1{\Lambda_t}\sum_{s=1}^t\frac{\lambda_s}{\sqrt{\Lambda_s}}   \leq \frac1{\Lambda_t}\sum_{s=1}^t 2(\sqrt{\Lambda_s}-\sqrt{\Lambda_{s-1}}) \leq2p_t.
 \label{eq:schur-row}
\end{equation}
For a fixed column $s$,
\[
 \sum_{t=s}^{\infty}w_{t,s}p_t
 =\lambda_s\sum_{t=s}^{\infty}\Lambda_t^{-3/2} 
 \leq p_s + \sum_{t=s+1}^\infty \frac{\Lambda_t-\Lambda_{t-1}}{\Lambda_t^{3/2}} 
 \leq p_s + \sum_{t=s+1}^\infty 2\left(
  \frac1{\sqrt{\Lambda_{t-1}}}-\frac1{\sqrt{\Lambda_t}}
 \right)   \leq 3p_s
\]
To complete the argument, we use Schur's test. Suppose that
$K_{t,s}\geq0$ and that positive weights $p_s$ satisfy
\(
\sum_s K_{t,s}p_s\leq Rp_t,
\,
\sum_t K_{t,s}p_t\leq Cp_s.
\)
For each fixed $t$, insert the weights $p_s$ and apply
Cauchy--Schwarz:
\[
\begin{aligned}
\left|\sum_s K_{t,s}x_s\right|^2
&=
\left|
\sum_s
\sqrt{K_{t,s}p_s}\,
\sqrt{\frac{K_{t,s}}{p_s}}\,x_s
\right|^2 \leq
\left(\sum_s K_{t,s}p_s\right)
\left(\sum_s K_{t,s}\frac{|x_s|^2}{p_s}\right) \leq
Rp_t\sum_s K_{t,s}\frac{|x_s|^2}{p_s}.
\end{aligned}
\]
Summing over $t$, interchanging the finite sums, and then using
the column bound gives
\[
\begin{aligned}
\sum_t\left|\sum_s K_{t,s}x_s\right|^2
&\leq
R\sum_t p_t\sum_s K_{t,s}\frac{|x_s|^2}{p_s}=
R\sum_s\frac{|x_s|^2}{p_s}
\sum_t K_{t,s}p_t\leq
R\sum_s\frac{|x_s|^2}{p_s}(Cp_s)
=RC\sum_s|x_s|^2.
\end{aligned}
\]
Taking square roots shows that
$\|Kx\|_{\ell_2}\leq\sqrt{RC}\,\|x\|_{\ell_2}$,
which is precisely the asserted operator-norm bound.

Apply this result with $K_{t,s}=w_{t,s}$ for $s\leq t$
and $K_{t,s}=0$ otherwise. The established row and column
bounds give $R=2$ and $C=3$, respectively, so $RC=6$. Finally, for Hilbert-space-valued $x_s$, nonnegativity of the
weights and the triangle inequality give
\(
\left\|\sum_s K_{t,s}x_s\right\|
\leq\sum_s K_{t,s}\norm{x_s}.
\)
Applying the scalar bound above to the sequence
$(\norm{x_s})_{s=1}^T$ therefore yields
\(
\sum_{t=1}^T
\left\|\sum_{s=1}^t w_{t,s}x_s\right\|^2
\leq6\sum_{s=1}^T\norm{x_s}^2,
\)
as required.
\end{proof}

\begin{corollary}[Uniform $\ell_3$ bound for bias-corrected EMA]
\label{cor:hardy-l3}
For $\rho\in[0,1)$, define the scalar bias-corrected EMA operator
\begin{equation}
 (K_\rho x)_t
 :=\sum_{s=1}^t
 \frac{(1-\rho)\rho^{t-s}}{1-\rho^t}x_s,
 \qquad 1\leq t\leq T.
 \label{eq:K-rho}
\end{equation}
Then
\(
 \sum_{t=1}^T |(K_\rho x)_t|^3
 \leq6\sum_{t=1}^T|x_t|^3.
 \label{eq:hardy-l3}
\)
The constant is uniform over $T$ and $\rho\in[0,1)$.
\end{corollary}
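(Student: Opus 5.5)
Our plan is to reuse the Schur-test argument from the proof of Lemma~\ref{lem:hardy}, in its $\ell_p$ form with $p=3$ and conjugate exponent $p'=3/2$. The operator $K_\rho$ has kernel $K_{t,s}=w_{t,s}=\lambda_s/\Lambda_t\,\mathbf 1\{s\le t\}$, where $\lambda_s=\rho^{-(s-1)}$ and $\Lambda_t=\sum_{j\le t}\lambda_j$. If $\rho=0$, then $K_\rho$ is the identity and there is nothing to prove, so we assume $\rho\in(0,1)$. Since the kernel is nonnegative, $|(K_\rho x)_t|\le (K_\rho|x|)_t$, and it suffices to treat $x_s\ge0$.

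\textbf{Weighted H\"older step.} Fix positive weights $p_s$. For each $t$, write $K_{t,s}x_s=(K_{t,s}p_s^{3/2})^{2/3}\cdot (K_{t,s}p_s^{-3})^{1/3}x_s$ and apply H\"older with exponents $(3/2,3)$. This gives
\[
|(K_\rho x)_t|^3\le\Bigl(\sum_s K_{t,s}p_s^{3/2}\Bigr)^2\sum_s K_{t,s}\frac{x_s^3}{p_s^3}.
\]
So we need two bounds. The row bound is $\sum_s K_{t,s}p_s^{3/2}\le R\,p_t^{3/2}$. The column bound is $\sum_t K_{t,s}p_t^{3}\le C\,p_s^3$. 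Given these, summing over $t$ and exchanging sums yields $\sum_t|(K_\rho x)_t|^3\le R^2C\sum_s x_s^3$.

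\textbf{Choice of weights.} Following Lemma~\ref{lem:hardy}, we take $p_s=\Lambda_s^{-a}$ with an exponent $a>0$ to be tuned.

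For the row bound, set $b=3a/2$. Then $\lambda_s\Lambda_s^{-b}\le\int_{\Lambda_{s-1}}^{\Lambda_s}u^{-b}\,du$, and for $b<1$ the sum telescopes to
\[
\sum_{s\le t}\frac{\lambda_s\Lambda_s^{-b}}{\Lambda_t}\le \frac{\Lambda_t^{1-b}}{(1-b)\Lambda_t}=\frac{p_t^{3/2}}{1-b}.
\]
Hence $R=1/(1-b)$.

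For the column bound, set $c=3a$. Write $\lambda_s\sum_{t\ge s}\Lambda_t^{-1-c}$ as the $t=s$ term plus the tail. The first term is at most $\Lambda_s^{-c}$, because $\lambda_s\le\Lambda_s$. The tail is bounded by $\lambda_s\int_{\Lambda_s}^\infty u^{-1-c}\,du/\lambda_{\min}$ via the increments $\Lambda_t-\Lambda_{t-1}=\lambda_t\ge\lambda_s$. More carefully, $\sum_{t>s}\lambda_t\Lambda_t^{-1-c}\le\Lambda_s^{-c}/c$, and $\lambda_s\le\lambda_t$ for $t\ge s$ because $\lambda$ is increasing. Together these give $C=1+1/c$.

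All of these bounds hold uniformly in $T$ and in $\rho\in(0,1)$.

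\textbf{Optimizing the constant.} We need $R^2C=(1-3a/2)^{-2}(1+1/(3a))\le6$. A choice near $a=2/9$ gives $b=1/3$, $R=3/2$ and $C=5/2$, so $R^2C=45/8<6$, which suffices.

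The main obstacle is this constant bookkeeping. The elementary integral comparisons have to give clean, uniform constants for both the row and the column; the H\"older splitting itself is routine. If the crude bound $\lambda_s\le\Lambda_s$ on the diagonal term turns out too lossy for the stated constant, we would instead bound the column sum entirely by the integral $\lambda_s\Lambda_s^{-1-c}+\int_{\Lambda_s}^\infty u^{-1-c}\,du$ and re-tune $a$.
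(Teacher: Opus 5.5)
Your proof is correct, but it reaches $\ell_3$ by a different route than the paper. The paper does not redo any Schur-type estimate at $p=3$. It takes the bound $\|K_\rho\|_{\ell_2\to\ell_2}\le\sqrt6$ from Lemma~\ref{lem:hardy}, pairs it with the trivial bound $\|K_\rho\|_{\ell_\infty\to\ell_\infty}\le1$ (each row is a probability vector), and applies Riesz--Thorin with $\vartheta=1/3$ to get $\|K_\rho\|_{\ell_3\to\ell_3}\le6^{1/3}$.

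You instead run the $\ell_3$ form of Schur's test directly, with weights $p_s=\Lambda_s^{-a}$. Each of your steps checks out:
\begin{enumerate}
\item The H\"older splitting with exponents $(3/2,3)$ is valid.
\item The telescoping row bound gives $R=1/(1-b)$.
\item The column bound $C=1+1/c$ holds: the diagonal term uses $\lambda_s\le\Lambda_s$, and the tail uses monotonicity of $\lambda_t$ plus the integral comparison.
\item With $a=2/9$ you get $R^2C=45/8<6$, so the crude diagonal estimate you worried about is not a problem.
\end{enumerate}
The optimal choice is actually $a=1/6$, i.e.\ $b=1/4$ and $c=1/2$. This gives $R=4/3$ and $C=3$, the same column bound as in Lemma~\ref{lem:hardy}, and $R^2C=16/3$.

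The two routes buy different things. The paper's argument is shorter given the existing lemma, and in one stroke yields $\|K_\rho x\|_{\ell_p}^p\le6\|x\|_{\ell_p}^p$ for every $p\in[2,\infty)$; the price is reliance on an interpolation theorem. Your argument is self-contained and uses only H\"older and integral comparison. It also gives a slightly sharper constant. It extends to any $p\in(1,\infty)$ by taking $0<a<1-1/p$, which covers the range $p\in(1,2)$ that interpolating between $\ell_2$ and $\ell_\infty$ cannot reach. The cost is redoing the row and column estimates for each exponent.
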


\begin{proof}
For a finite sequence $x=(x_1,\ldots,x_T)$, write
\(
\|x\|_{\ell_p}
:=\left(\sum_{t=1}^T|x_t|^p\right)^{1/p}
\,
\|x\|_{\ell_\infty}:=\max_{t\leq T}|x_t|.
\)
The induced operator norm is
$\|K\|_{\ell_p\to\ell_p}
:=\sup_{x\ne0}\|Kx\|_{\ell_p}/\|x\|_{\ell_p}$. Applying Lemma~\ref{lem:hardy} with EMA parameter $\beta=\rho$
gives
\(
\sum_{t=1}^T|(K_\rho x)_t|^2
\leq6\sum_{t=1}^T|x_t|^2.
\)
Taking square roots shows that
$\|K_\rho x\|_{\ell_2}\leq\sqrt6\,\|x\|_{\ell_2}$ for every $x$,
and hence $\|K_\rho\|_{\ell_2\to\ell_2}\leq\sqrt6$. Next, the weights defining $K_\rho$ in \eqref{eq:K-rho}
are nonnegative and sum to one in each row. Therefore,
\[
|(K_\rho x)_t|
\leq
\sum_{s=1}^t
\frac{(1-\rho)\rho^{t-s}}{1-\rho^t}|x_s|
\leq\|x\|_{\ell_\infty}.
\]
Taking the maximum over $t$ gives
$\|K_\rho\|_{\ell_\infty\to\ell_\infty}\leq1$. Riesz--Thorin interpolation combines these two operator bounds.
The interpolation parameter $\vartheta$ corresponding to
$\ell_3$ satisfies $\vartheta=1/3$. Consequently,
\(
\|K_\rho\|_{\ell_3\to\ell_3}
\leq
\|K_\rho\|_{\ell_2\to\ell_2}^{\,2/3}
\|K_\rho\|_{\ell_\infty\to\ell_\infty}^{\,1/3}
\leq(\sqrt6)^{2/3}=6^{1/3}.
\)
Thus, for every $x$,
\(
\sum_{t=1}^T|(K_\rho x)_t|^3
=\|K_\rho x\|_{\ell_3}^3
\leq6\|x\|_{\ell_3}^3
=6\sum_{t=1}^T|x_t|^3.
\)
\end{proof}

\begin{lemma}[Accumulated Shampoo perturbation]
\label{lem:accumulated-preconditioner}
Define $\mathsf P_T:=\sum_{t=1}^T\norm{(I_d-Q_t)\nu_t}^2$, where $\nu_t$ is as defined in \eqref{eq:ab-recursion}. Then, $\mathsf P_T\leq\dfrac{3}{2\varepsilon^2}\sum_{t=1}^T\norm{g_t}^6$. Consequently, on the event in Lemma~\ref{lem:sixth-moment}, $\dfrac{\mathsf P_T}{T}\leq\dfrac{3\mathfrak G_\delta}{2\varepsilon^2}$.
\end{lemma}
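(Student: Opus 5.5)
The plan is to bound each summand of $\mathsf P_T$ pathwise by a product of two bias-corrected EMAs of $\norm{g_s}^2$. The cubic sums of these EMAs are then controlled by Corollary~\ref{cor:hardy-l3}. First we use the operator-norm split
\[
\norm{(I_d-Q_t)\nu_t}^2\le\norm{I_d-Q_t}_{\op}^2\norm{\nu_t}^2 .
\]
For the first factor we work in the Kronecker eigenbasis from the proof of Lemma~\ref{lem:ridge-preconditioner}. The eigenvalues of $Q_t$ are $p\,q$ with $p=(1+\lambda^R_j/\varepsilon)^{-1/4}$ and $q=(1+\lambda^L_i/\varepsilon)^{-1/4}$, both in $(0,1]$. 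Since $1-pq\le(1-p)+(1-q)$ and $1-(1+x)^{-1/4}\le x/4$, we obtain
\[
\norm{I_d-Q_t}_{\op}\le\frac{\lambda_{\max}(D_t^L)+\lambda_{\max}(D_t^R)}{4\varepsilon}.
\]
As in Lemma~\ref{lem:Q-control}, $\mathsf G_s\mathsf G_s^{\mathsf T}\preceq\norm{g_s}^2I_m$ and $\mathsf G_s^{\mathsf T}\mathsf G_s\preceq\norm{g_s}^2I_n$. Writing $a_s:=\norm{g_s}^2$, this yields
\[
\norm{I_d-Q_t}_{\op}\le\frac{1}{2\varepsilon}\,y_t,\qquad y_t:=(K_{\beta_2}a)_t ,
\]
with $K_\rho$ the operator from \eqref{eq:K-rho}.

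For the second factor we use the representation $\nu_t=\sum_s w_{t,s}g_s$ from \eqref{eq:ab-recursion}. The weights are nonnegative and sum to one, so the triangle inequality followed by Jensen gives
\[
\norm{\nu_t}^2\le\Bigl(\sum_s w_{t,s}\norm{g_s}\Bigr)^2\le x_t,\qquad x_t:=(K_{\beta_1}a)_t .
\]
Combining the two factors,
\[
\mathsf P_T\le\frac{1}{4\varepsilon^2}\sum_{t=1}^T y_t^2x_t .
\]

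The only step that needs care is decoupling the two different EMAs, with parameters $\beta_1$ and $\beta_2$, inside one sum. We handle it with Young's inequality $y^2x\le\tfrac23y^3+\tfrac13x^3$ for nonnegative $x,y$. Corollary~\ref{cor:hardy-l3} gives $\sum_ty_t^3\le6\sum_ta_t^3$ and likewise $\sum_tx_t^3\le6\sum_ta_t^3$, uniformly in $\beta_1,\beta_2$. Hence $\sum_ty_t^2x_t\le6\sum_t\norm{g_t}^6$, and therefore
\[
\mathsf P_T\le\frac{6}{4\varepsilon^2}\sum_t\norm{g_t}^6=\frac{3}{2\varepsilon^2}\sum_t\norm{g_t}^6 .
\]
This is exactly the claimed constant. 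Dividing by $T$ and intersecting with the event of Lemma~\ref{lem:sixth-moment} gives $\mathsf P_T/T\le3\mathfrak G_\delta/(2\varepsilon^2)$.
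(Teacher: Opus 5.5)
Your proof is correct and follows the paper's argument step for step. The shared steps are: the operator-norm split, the pathwise bound $\norm{I_d-Q_t}_{\op}\le U_t/(2\varepsilon)$ with $U_t$ the $\beta_2$-EMA of $\norm{g_s}^2$, Jensen's inequality for $\norm{\nu_t}^2$, Young's inequality $y^2x\le\frac23y^3+\frac13x^3$, and Corollary~\ref{cor:hardy-l3} applied to both EMAs.

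The only cosmetic difference is how the operator norm is bounded. You bound each factor separately via $1-pq\le(1-p)+(1-q)$ and $1-(1+x)^{-1/4}\le x/4$. The paper instead uses $\lambda_{\min}(Q_t)\ge(1+U_t/\varepsilon)^{-1/2}$ together with $1-(1+x)^{-1/2}\le x/2$. Both routes give the same constant.
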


\begin{proof}
Set $X_s:=\norm{g_s}^2$ and define the two scalar EMA sequences $U_t:=\sum_{s=1}^tq_{t,s}^{(2)}X_s$ and $V_t:=\sum_{s=1}^tw_{t,s}X_s$. Because $D_t^L=\sum_{s=1}^tq_{t,s}^{(2)} \mathsf G_s\mathsf G_s^{\mathsf T}$, $D_t^R=\sum_{s=1}^tq_{t,s}^{(2)} \mathsf G_s^{\mathsf T}\mathsf G_s$, and $GG^{\mathsf T}\preceq\norm G_F^2I$, $G^{\mathsf T}G\preceq\norm G_F^2I$, we have pathwise $0\preceq D_t^L\preceq U_tI_m$, $0\preceq D_t^R\preceq U_tI_n$. Each normalized factor in $Q_t$ has eigenvalues in
$[(1+U_t/\varepsilon)^{-1/4},1]$. Since the eigenvalues of a
Kronecker product are pairwise products of the factor eigenvalues,
\(
(1+U_t/\varepsilon)^{-1/2}
\leq\lambda_{\min}(Q_t)\leq\lambda_{\max}(Q_t)\leq1.
\)
Thus, $I_d-Q_t\succeq0$, and its eigenvalues are
$1-\lambda_i(Q_t)$. Its operator norm therefore equals its largest
eigenvalue, so that
\[
\norm{I_d-Q_t}_{\op}
=\lambda_{\max}(I_d-Q_t)
=1-\lambda_{\min}(Q_t)
\leq1-(1+U_t/\varepsilon)^{-1/2}.
\]
Using \(1-(1+x)^{-1/2}\leq x/2\) for \(x\geq0\) gives \(\norm{I_d-Q_t}_{\op}\leq U_t/(2\varepsilon)\). Next, Jensen's inequality gives the squared norm momentum bound. We should note that, \(
     \norm{\nu_t}^2
     =\left\|\sum_{s=1}^tw_{t,s}g_s\right\|^2
     \leq\sum_{s=1}^tw_{t,s}\norm{g_s}^2
     =V_t
\)
Combining these two, and Corollary~\ref{cor:hardy-l3},
\[
 \mathsf P_T \leq \frac1{4\varepsilon^2}\sum_{t=1}^TU_t^2V_t
 \leq\frac1{4\varepsilon^2} \left\{\frac23\sum_{t=1}^TU_t^3 + \frac13\sum_{t=1}^TV_t^3\right\}
 \leq \frac6{4\varepsilon^2} \sum_{t=1}^TX_t^3  = \frac3{2\varepsilon^2} \sum_{t=1}^T \norm{g_t}^6.
\]
\end{proof}

\begin{lemma}[Momentum energy with a $\beta$-free signal constant]
\label{lem:energy-split}
Define the cumulative squared momentum, tracking error, and
filtered-noise energies, respectively, by
\[
\mathsf N_T:=\sum_{t=1}^T\norm{\nu_t}^2,
\qquad
\mathsf E_T:=\sum_{t=1}^T\norm{e_t}^2,
\qquad
\mathsf Z_T:=\sum_{t=1}^T\norm{z_t}^2,
\]
where $\nu_t$, $e_t$, and $z_t$ are defined in
\eqref{eq:ab-recursion}, \eqref{eq:energies}, and
\eqref{eq:signal-noise-split}, respectively.
If each $h_t$ is $L$-Lipschitz and satisfies
$h_t(\theta_t^\star)=0$, then, pathwise,
\(
\mathsf N_T\leq12L^2\mathsf E_T+2\mathsf Z_T.
\)
\end{lemma}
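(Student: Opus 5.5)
We split the momentum and bound each piece with the Hardy inequality. By \eqref{eq:signal-noise-split}, $\nu_t=u_t+z_t$, so pathwise
\[
\norm{\nu_t}^2\le 2\norm{u_t}^2+2\norm{z_t}^2 .
\]
Summing over $t$ gives $\mathsf N_T\le 2\sum_t\norm{u_t}^2+2\mathsf Z_T$. This already produces the $2\mathsf Z_T$ term, so it remains to show $\sum_t\norm{u_t}^2\le 6L^2\mathsf E_T$.

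For the signal part, $u_t=\sum_{s\le t}w_{t,s}h_s(\theta_s)$ is exactly the bias-corrected EMA of the vector sequence $x_s:=h_s(\theta_s)$. Lemma~\ref{lem:hardy}, applied pathwise to this finite sequence in $\mathbb R^d$, gives
\[
\sum_{t=1}^T\norm{u_t}^2\le 6\sum_{s=1}^T\norm{h_s(\theta_s)}^2 .
\]
The lemma is deterministic, so the dependence of $\theta_s$ on the noise causes no difficulty. Its constant $6$ is uniform in $\beta$, which is why the signal constant is $\beta$-free.

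Next we use $h_s(\theta_s^\star)=0$ together with $L$-Lipschitzness. This gives
\[
\norm{h_s(\theta_s)}=\norm{h_s(\theta_s)-h_s(\theta_s^\star)}\le L\norm{e_s},
\]
so $\sum_s\norm{h_s(\theta_s)}^2\le L^2\mathsf E_T$. Combining the three displays yields
\[
\mathsf N_T\le 2\cdot 6L^2\mathsf E_T+2\mathsf Z_T=12L^2\mathsf E_T+2\mathsf Z_T .
\]

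There is no real obstacle here. The only point needing care is that the Hardy bound is invoked pathwise for arbitrary (random) sequences, which Lemma~\ref{lem:hardy} permits. One also avoids a cruder bound such as $\norm{u_t}^2\le\sum_s w_{t,s}\norm{h_s}^2$ followed by column sums, because that would produce column sums of $w_{t,s}$ that could depend on $\beta$ through bias correction. The Schur-test constant $6$ sidesteps this.
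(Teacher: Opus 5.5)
Your proposal is correct and follows the paper's proof essentially step for step. You split $\nu_t=u_t+z_t$ with $\norm{x+y}^2\le2\norm{x}^2+2\norm{y}^2$, apply the $\beta$-uniform Hardy bound (Lemma~\ref{lem:hardy}) pathwise to $x_s=h_s(\theta_s)$, and finish with the root property plus Lipschitz continuity. Your side remark is also accurate: the cruder Jensen-plus-column-sum route would introduce a $\beta$-dependent factor.
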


\begin{proof}
Since $\nu_t=u_t+z_t$, the inequality
$\norm{x+y}^2\leq2\norm{x}^2+2\norm{y}^2$ gives
\(
\mathsf N_T
=\sum_{t=1}^T\norm{u_t+z_t}^2
\leq2\sum_{t=1}^T\norm{u_t}^2+2\mathsf Z_T.
\)
For each realization, apply Lemma~\ref{lem:hardy} to the vector
sequence $x_s=h_s(\theta_s)$. This yields
\[
\sum_{t=1}^T\norm{u_t}^2
=
\sum_{t=1}^T
\left\|\sum_{s=1}^t w_{t,s}h_s(\theta_s)\right\|^2
\leq6\sum_{s=1}^T\norm{h_s(\theta_s)}^2.
\]
The root property and Lipschitz continuity imply, for every $s$,
\(
\norm{h_s(\theta_s)}
=
\norm{h_s(\theta_s)-h_s(\theta_s^\star)}
\leq L\norm{\theta_s-\theta_s^\star}
=L\norm{e_s}.
\)
Squaring and summing therefore gives
\(
\sum_{t=1}^T\norm{u_t}^2
\leq6L^2\mathsf E_T.
\)
Substituting into the first inequality proves
\[
\mathsf N_T
\leq2(6L^2\mathsf E_T)+2\mathsf Z_T
=12L^2\mathsf E_T+2\mathsf Z_T.
\]
\end{proof}

\begin{lemma}[High-probability filtered-noise energy]
\label{lem:filtered-energy}
Let $\Psi_{T,\beta}:=\dfrac1{T(1-\beta^2)}\sum_{t=1}^Tb_t^2$, where $b_t$ is defined in \eqref{eq:ab-recursion}. For every $\delta\in(0,1)$, with probability at least $1-\delta$,
\begin{equation}
 \frac{\mathsf Z_T}{T}
 \leq{} \mathfrak Z_{T,\beta,\delta} := 2\sigma^2(\log2)\Psi_{T,\beta}
 +\frac{2\sigma^2}{T(1-\beta^2)}\log\frac4\delta
 +\frac{16C_0\sigma^2\beta^2}{T(1-\beta^2)^2} \log\frac4\delta.
 \label{eq:Z-highprob}
\end{equation}
\end{lemma}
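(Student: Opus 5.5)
The plan is to write $\mathsf Z_T$ as a quadratic form in the noise sequence and split it into a diagonal part, concentrated through conditional second moments, and an off-diagonal martingale part, controlled by Lemma~\ref{lem:subg-consequences}(ii). Using the recursion \eqref{eq:ab-recursion} for the noise component, $z_t=a_tz_{t-1}+b_t\xi_t$ with $z_0=0$. Expanding the square gives
\[
\norm{z_t}^2=a_t^2\norm{z_{t-1}}^2+2a_tb_t\ip{z_{t-1}}{\xi_t}+b_t^2\norm{\xi_t}^2 .
\]
Since $a_t\le\beta$, unrolling this (or summing over $t$ and rearranging) gives
\[
(1-\beta^2)\mathsf Z_T\le\sum_t b_t^2\norm{\xi_t}^2+2\sum_t a_tb_t\ip{z_{t-1}}{\xi_t} .
\]
An alternative route is to write $\mathsf Z_T=\sum_{s,r}\ip{\xi_s}{\xi_r}\sum_{t\ge s\vee r}w_{t,s}w_{t,r}$ and note that the diagonal coefficients satisfy $\sum_{t\ge s}w_{t,s}^2\lesssim b_s^2/(1-\beta^2)$. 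I would keep the recursive form because the cross term is then visibly a martingale transform with predictable multiplier $v_t:=a_tb_tz_{t-1}$.

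\textbf{Diagonal term.} Decompose
\[
\sum_tb_t^2\norm{\xi_t}^2=\sum_tb_t^2\E[\norm{\xi_t}^2\mid\F_{t-1}]+\sum_tb_t^2\bigl(\norm{\xi_t}^2-\E[\norm{\xi_t}^2\mid\F_{t-1}]\bigr).
\]
By Lemma~\ref{lem:subg-consequences}(i), the first sum is at most $\sigma^2\log2\sum_tb_t^2$. After dividing by $T(1-\beta^2)$ this becomes $\sigma^2(\log2)\Psi_{T,\beta}$, which is half of the first term of \eqref{eq:Z-highprob}. The second sum is a martingale with sub-exponential increments. Rather than prove a Bernstein bound, I would use a cruder exponential-moment argument: from $\E[e^{\norm{\xi_t}^2/\sigma^2}\mid\F_{t-1}]\le2$ and $b_t\le1$, convexity gives $\E[e^{b_t^2\norm{\xi_t}^2/\sigma^2}\mid\F_{t-1}]\le 2^{b_t^2}\le e^{b_t^2\log2}$. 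The supermartingale/Markov argument then yields, with probability $1-\delta/2$,
\[
\sum_tb_t^2\norm{\xi_t}^2\le\sigma^2\log2\sum_tb_t^2+\sigma^2\log(2/\delta).
\]
This already matches the first two terms of \eqref{eq:Z-highprob}, up to the factors of $2$ that appear later.

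\textbf{Cross term.} This is the main obstacle, because the predictable multiplier $v_t$ depends on $z_{t-1}$ and hence on $\mathsf Z_T$ itself. I would apply Lemma~\ref{lem:subg-consequences}(ii) with confidence $\delta/2$ and a deterministic $a>0$:
\[
2\Bigl|\sum_t\ip{v_t}{\xi_t}\Bigr|\le 2a\beta^2\mathsf Z_T+\frac{2C_0\sigma^2}{a}\log\frac4\delta,
\]
using $\norm{v_t}^2\le\beta^2\norm{z_{t-1}}^2$ and $\sum_t\norm{z_{t-1}}^2\le\mathsf Z_T$. Choose $a=(1-\beta^2)/(4\beta^2)$, or handle $\beta=0$ trivially. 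Then $2a\beta^2\mathsf Z_T=\tfrac12(1-\beta^2)\mathsf Z_T$ is absorbed into the left-hand side. What remains is $8C_0\sigma^2\beta^2\log(4/\delta)/(1-\beta^2)$.

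\textbf{Combining.} After absorption, $\tfrac12(1-\beta^2)\mathsf Z_T$ is bounded by the diagonal bound plus the cross-term remainder. Multiplying by $2/(T(1-\beta^2))$ produces exactly the three terms of \eqref{eq:Z-highprob} with constants $2$, $2$ and $16C_0$. A union bound over the two events of probability $\delta/2$ each gives total probability at least $1-\delta$. The point to check carefully is that $a$ is deterministic, which it is since it depends only on $\beta$; this is why absorption, rather than a data-dependent choice of $a$, is the right device.
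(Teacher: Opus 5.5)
Your proposal is correct and follows the paper's proof essentially step for step. It uses the same expansion of $z_t=a_tz_{t-1}+b_t\xi_t$ with $\sum_t a_t^2\norm{z_{t-1}}^2\le\beta^2\mathsf Z_T$, and the same bound $\E[e^{b_t^2\norm{\xi_t}^2/\sigma^2}\mid\F_{t-1}]\le 2^{b_t^2}$ (Jensen for the concave map $x\mapsto x^{b_t^2}$) followed by a supermartingale/Markov step for the diagonal term. For the cross term it applies Lemma~\ref{lem:subg-consequences}(ii) with a deterministic $\beta$-dependent parameter, so that $\tfrac12(1-\beta^2)\mathsf Z_T$ can be absorbed; your $v_t=a_tb_tz_{t-1}$ with $a=(1-\beta^2)/(4\beta^2)$ is just a rescaling of the paper's $v_t=2a_tb_tz_{t-1}$ with $a=(1-\beta^2)/(8\beta^2)$ and gives the same constants $2$, $2$, $16C_0$.
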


\begin{proof}
The filtered noise satisfies $z_t=a_tz_{t-1}+b_t\xi_t$ where $z_0 = 0$. Squaring and summing gives
\begin{equation}
 \mathsf Z_T
 =\sum_{t=1}^Ta_t^2\norm{z_{t-1}}^2
  +\sum_{t=1}^Tb_t^2\norm{\xi_t}^2
  +2\sum_{t=1}^Ta_tb_t\ip{z_{t-1}}{\xi_t}.
 \label{eq:z-square-sum}
\end{equation}
Because $0\leq a_t\leq\beta$ and $0<b_t^2\leq1$, conditional Jensen applied to the concave map $x\mapsto x^{b_t^2}$ yields 
\(\E\!\left[\exp\!\left(\dfrac{b_t^2\norm{\xi_t}^2}{\sigma^2}\right)\middle|\F_{t-1}\right]
 \leq \left(\E[e^{\norm{\xi_t}^2/\sigma^2}\mid\F_{t-1}]\right)^{b_t^2}
 \leq2^{b_t^2}\)
Iterating conditional expectations and applying Markov's inequality shows that, with probability at least $1-\delta/2$,
\begin{equation}
 \sum_{t=1}^Tb_t^2\norm{\xi_t}^2\leq\sigma^2\left\{
  (\log2)\sum_{t=1}^Tb_t^2+\log\frac2\delta
 \right\}.
 \label{eq:V-bound}
\end{equation}

Suppose first that $\beta>0$. The vector
$2a_tb_tz_{t-1}$ is $\F_{t-1}$-measurable. Moreover, $\sum_{t=1}^T\norm{2a_tb_tz_{t-1}}^2 \leq4\beta^2\mathsf Z_T$. Apply Lemma~\ref{lem:subg-consequences}(ii) with tuning parameter $(1-\beta^2)/(8\beta^2)$ and failure probability $\delta/2$. Then
\begin{equation}
 \Bigg|2\sum_{t=1}^Ta_tb_t\ip{z_{t-1}}{\xi_t}\Bigg|
 \leq\frac{1-\beta^2}{2}\mathsf Z_T
 +\frac{8C_0\sigma^2\beta^2}{1-\beta^2}\log\frac4\delta.
 \label{eq:M-bound}
\end{equation}
Combining \eqref{eq:V-bound}, and \eqref{eq:M-bound}, and noting $\sum_{t=1}^Ta_t^2\norm{z_{t-1}}^2\leq\beta^2\mathsf Z_T$ and enlarging $\log(2/\delta)$ to $\log(4/\delta)$, gives
\[
 \mathsf Z_T
 \leq\frac{2\sigma^2}{1-\beta^2}\left\{
  (\log2)\sum_{t=1}^Tb_t^2+\log\frac4\delta
 \right\}
 +\frac{16C_0\sigma^2\beta^2}{(1-\beta^2)^2}\log\frac4\delta.
\]
Division by $T$ proves the result.  If $\beta=0$, then $z_t=\xi_t$ and
the cross term vanishes; the stated bound follows directly from \eqref{eq:V-bound} after a
factor-two relaxation.
\end{proof}

\begin{lemma}[High-probability summation-by-parts boundary]
\label{lem:boundary}
Define $\mathsf B_T:=\sum_{t=1}^{T-1}\beta^t\norm{\nu_t} + d_T\norm{\nu_T}$ and $\mathfrak B_\delta:=C_B\left\{LD+\sigma\sqrt{\ell_\delta}\right\}$. There is a universal numerical constant $C_B>0$ such that, with probability at least $1-\delta/4$, $\mathsf B_T \leq c_\beta\mathfrak B_\delta$. For $\beta=0$, both sides are zero.
\end{lemma}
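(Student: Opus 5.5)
The plan is to reduce $\mathsf B_T$ to a convex combination of gradient norms. Then I control the noise part with one application of the Orlicz bound in Assumption~\ref{assump:cond-subgauss-noise}, using Jensen's inequality. No union bound over $t$ is needed, which is why only $\ell_\delta$ appears and not $\log T$.

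\textbf{Step 1: total weight.} First, the case $\beta=0$ is trivial, since then $\beta^t=0$ and $d_T=0$. For $\beta>0$, the coefficients multiplying the $\norm{\nu_t}$ in $\mathsf B_T$ are $\beta^t$ for $t<T$ and $d_T$ for $t=T$. Their total mass is
\[
\sum_{t=1}^{T-1}\beta^t+d_T\le c_\beta+c_\beta=2c_\beta,
\]
using $\sum_{t\ge1}\beta^t=c_\beta$ and $0\le d_T\le c_\beta$.

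\textbf{Step 2: reduction to a weighted average of noise norms.} Next, by \eqref{eq:ab-recursion} and the triangle inequality,
\[
\norm{\nu_t}\le\sum_{s=1}^t w_{t,s}\norm{g_s}\le LD+\sum_{s=1}^t w_{t,s}\norm{\xi_s}.
\]
Here I use \eqref{eq:mean-gradient-bound} and the fact that the $w_{t,s}$ are nonnegative and sum to one. Substituting into $\mathsf B_T$ gives
\[
\mathsf B_T\le 2c_\beta LD+\sum_{s=1}^T \gamma_s\norm{\xi_s},
\]
where $\gamma_s\ge0$ collects the weights and satisfies $\Gamma:=\sum_s\gamma_s\le 2c_\beta$. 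Define the convex combination $Y:=\sum_s(\gamma_s/\Gamma)\norm{\xi_s}$ when $\Gamma>0$.

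\textbf{Step 3: concentration of $Y$.} This is the main step, and it is short. By convexity of $x\mapsto e^{x^2/\sigma^2}$ on $[0,\infty)$ and Jensen's inequality,
\[
\exp(Y^2/\sigma^2)\le\sum_s(\gamma_s/\Gamma)\exp(\norm{\xi_s}^2/\sigma^2).
\]
Taking expectations and applying the tower property to each term with Assumption~\ref{assump:cond-subgauss-noise} gives $\E\exp(Y^2/\sigma^2)\le2$. Markov's inequality then yields $\Pp(Y>\sigma\sqrt u)\le 2e^{-u}$. Choosing $u=\log(8/\delta)\le 2\ell_\delta$ gives $Y\le\sigma\sqrt{2\ell_\delta}$ with probability at least $1-\delta/4$.

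\textbf{Step 4: conclusion.} On this event,
\[
\mathsf B_T\le 2c_\beta LD+2c_\beta\sigma\sqrt{2\ell_\delta}\le c_\beta C_B\{LD+\sigma\sqrt{\ell_\delta}\},
\]
with, for example, $C_B=2\sqrt2$. The only delicate point is to keep the weights deterministic, so that Jensen's inequality applies to a fixed convex combination; this holds because $\beta^t$, $d_T$ and $w_{t,s}$ are all nonrandom.
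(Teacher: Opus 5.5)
Your proof is correct, but it uses a different concentration mechanism from the paper.

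The paper works in $L^p$ with $p=\ell_\delta$. It takes the layer-cake moment bound \eqref{eq:q-moment} to get $\|\norm{\xi_s}\|_{L^p}\le C\sigma\sqrt p$. It then pushes this through $g_s$, $\nu_t$ and $\mathsf B_T$ by repeated Minkowski inequalities, reaching $\|\mathsf B_T\|_{L^p}\le 2c_\beta(LD+C\sigma\sqrt p)$. Finally it applies Markov to $\mathsf B_T^p$, so that $\mathsf B_T\le e\|\mathsf B_T\|_{L^p}$ with probability at least $1-e^{-p}=1-\delta/4$.

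You instead remove the deterministic $LD$ part pathwise and gather the noise into a single deterministic convex combination $Y$ of the $\norm{\xi_s}$. Convexity of $x\mapsto e^{x^2/\sigma^2}$, which is in effect the triangle inequality for the $\Psi_2$ Orlicz norm, then passes $\E\, e^{\norm{\xi_s}^2/\sigma^2}\le 2$ directly to $Y$, and one exponential Markov step finishes. The weight bookkeeping ($\Gamma\le 2c_\beta$), the choice $u=\log(8/\delta)\le 2\ell_\delta$, and the resulting $C_B=2\sqrt2$ all check out.

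Your route is more elementary. It skips the Gamma-function moment estimate, gives an explicit constant, and shows plainly why no union bound over $t$ (and hence no $\log T$) is needed.

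The paper's route matches its sixth-moment lemma (Lemma~\ref{lem:sixth-moment}), reusing the same moment bound and the same Markov-at-$p=\ell_\delta$ device. It would also carry over unchanged to any tail assumption stated only through moment growth, where an exponential-Jensen argument needs reworking.

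One small point: for $T=1$ you get $\Gamma=0$ even when $\beta>0$. Then $\mathsf B_T=d_1\norm{\nu_1}=0$, so that case is trivial, as your ``when $\Gamma>0$'' implicitly assumes.
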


\begin{proof}
If $\beta=0$, then $d_t=0$ and $\mathsf B_T=0$ identically,
so the claim is immediate. Assume henceforth that $\beta>0$,
and let $p:=\ell_\delta=\log(4/\delta)>1$. Taking expectations in \eqref{eq:q-moment} gives
\(
\left(\E\norm{\xi_s}^p\right)^{1/p}\leq C\sigma\sqrt p.
\)
Since $g_s=h_s(\theta_s)+\xi_s$ and
$\norm{h_s(\theta_s)}\leq LD$ by
\eqref{eq:mean-gradient-bound}, Minkowski's inequality yields
\[
\|\norm{g_s}\|_{L^p}
\leq
\|\norm{h_s(\theta_s)}\|_{L^p}
+\|\norm{\xi_s}\|_{L^p}
\leq LD+C\sigma\sqrt p.
\]
This bound is uniform in $s$. Next, by \eqref{eq:ab-recursion},
$\nu_t=\sum_{s=1}^t w_{t,s}g_s$, where the weights are
deterministic, nonnegative, and sum to one. The triangle
inequality followed by Minkowski's inequality therefore gives
\[
\begin{aligned}
\|\norm{\nu_t}\|_{L^p}
&\leq
\left\|\sum_{s=1}^t w_{t,s}\norm{g_s}\right\|_{L^p}\leq
\sum_{s=1}^t w_{t,s}\|\norm{g_s}\|_{L^p}
\leq LD+C\sigma\sqrt p.
\end{aligned}
\]
Applying Minkowski once more to the definition of $\mathsf B_T$,
we obtain,
\[
\begin{aligned}
\|\mathsf B_T\|_{L^p}
&\leq
\sum_{t=1}^{T-1}\beta^t\|\norm{\nu_t}\|_{L^p}
+d_T\|\norm{\nu_T}\|_{L^p}\leq
\left(\sum_{t=1}^{T-1}\beta^t+d_T\right)
\left(LD+C\sigma\sqrt p\right)\leq
2c_\beta\left(LD+C\sigma\sqrt p\right),
\end{aligned}
\]
where the last step uses
$\sum_{t=1}^{T-1}\beta^t\leq c_\beta$ and $d_T\leq c_\beta$.

If $\|\mathsf B_T\|_{L^p}=0$, the conclusion is immediate.
Otherwise, Markov's inequality applied to the nonnegative
random variable $\mathsf B_T^p$ gives
\(
\Pp\!\left\{\mathsf B_T>e\|\mathsf B_T\|_{L^p}\right\}
\leq
\frac{\E[\mathsf B_T^p]}
     {e^p\|\mathsf B_T\|_{L^p}^p}
=e^{-p}
=\frac{\delta}{4}.
\)
Consequently, with probability at least $1-\delta/4$,
\[
\mathsf B_T
\leq2e\,c_\beta\left(LD+C\sigma\sqrt{\ell_\delta}\right)
\leq c_\beta\mathfrak B_\delta,
\]
where the final inequality follows by choosing the universal
constant $C_B$ sufficiently large.
\end{proof}

\begin{lemma}[Refined accumulated momentum pairing under total squared path variation]
\label{lem:pairing}
Suppose \eqref{eq:squared-path-variation} holds. Then, pathwise,
\begin{align}
 \sum_{t=1}^T\ip{e_t}{Q_t\nu_t}
 \geq{}\frac{7\mu}{8}\mathsf E_T
 +\sum_{t=1}^T\ip{e_t}{\xi_t}
 -\alpha \left(c_\beta\sqrt{\kappa_\varepsilon} + \frac{A_\beta}{2}\right)\mathsf N_T
 -\frac{c_\beta^2}{2\alpha A_\beta}\mathcal D_T - D\mathsf B_T
 -\frac{2}{\mu}\mathsf P_T.
 \label{eq:pairing-final}
\end{align}
\end{lemma}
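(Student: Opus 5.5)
The plan is to write $Q_t\nu_t=\nu_t-(I_d-Q_t)\nu_t$ and treat the two pieces separately.

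\emph{Perturbation piece.} By Cauchy--Schwarz and Young,
\[
-\ip{e_t}{(I_d-Q_t)\nu_t}\ \ge\ -\tfrac{\mu}{8}\norm{e_t}^2-\tfrac{2}{\mu}\norm{(I_d-Q_t)\nu_t}^2 .
\]
Summed over $t$, this produces the $-\frac{2}{\mu}\mathsf P_T$ term and costs $\mu/8$ of the curvature. This is why the final curvature constant is $7\mu/8$ rather than $\mu$.

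\emph{Momentum piece: recovering the raw gradient.} The key identity comes from inverting the recursion \eqref{eq:ab-recursion}. From $b_tg_t=\nu_t-a_t\nu_{t-1}$ and $1/b_t=1+d_t$, I get
\[
\nu_t=g_t+d_t(\nu_{t-1}-\nu_t).
\]
Hence
\[
\sum_t\ip{e_t}{\nu_t}=\sum_t\ip{e_t}{h_t(\theta_t)}+\sum_t\ip{e_t}{\xi_t}+\sum_t d_t\ip{e_t}{\nu_{t-1}-\nu_t}.
\]
Since $h_t(\theta_t^\star)=0$, Assumption~\ref{assump:strong-monotonicity} gives $\ip{e_t}{h_t(\theta_t)}\ge\mu\norm{e_t}^2$. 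The noise inner products appear exactly as in \eqref{eq:pairing-final}.

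\emph{Summation by parts on the memory term.} Shift the index in $\sum_t d_t\ip{e_t}{\nu_{t-1}}$, using $\nu_0=0$ and $d_1=0$, and use $d_{t+1}-d_t=\beta^t$. This rewrites the memory term as
\[
\sum_{t=1}^{T-1}d_{t+1}\ip{e_{t+1}-e_t}{\nu_t}+\sum_{t=1}^{T-1}\beta^t\ip{e_t}{\nu_t}-d_T\ip{e_T}{\nu_T}.
\]
The last two sums are bounded in absolute value by $D\big(\sum_{t<T}\beta^t\norm{\nu_t}+d_T\norm{\nu_T}\big)=D\mathsf B_T$, since $\norm{e_t}\le D$.

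\emph{Increment term.} Split $e_{t+1}-e_t=(\theta_{t+1}-\theta_t)-(\theta_{t+1}^\star-\theta_t^\star)$.
\begin{itemize}
\item \emph{Iterate part.} Because $\theta_t\in\Theta$ and $\Pi_\Theta$ is nonexpansive, \eqref{eq:vector-shampoo-update} gives $\norm{\theta_{t+1}-\theta_t}\le\alpha\norm{Q_t\nu_t}\le\alpha\sqrt{\kappa_\varepsilon}\norm{\nu_t}$ by Lemma~\ref{lem:ridge-preconditioner}. With $d_{t+1}\le c_\beta$, this part is at least $-\alpha c_\beta\sqrt{\kappa_\varepsilon}\,\mathsf N_T$.
\item \emph{Drift part.} Apply Young in the form
\[
c_\beta\norm{\theta_{t+1}^\star-\theta_t^\star}\norm{\nu_t}\le\tfrac{\alpha A_\beta}{2}\norm{\nu_t}^2+\tfrac{c_\beta^2}{2\alpha A_\beta}\norm{\theta_{t+1}^\star-\theta_t^\star}^2 .
\]
Summing and invoking \eqref{eq:squared-path-variation} gives $-\frac{\alpha A_\beta}{2}\mathsf N_T-\frac{c_\beta^2}{2\alpha A_\beta}\mathcal D_T$.
\end{itemize}

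Collecting the curvature, noise, perturbation, boundary and increment contributions yields \eqref{eq:pairing-final}.

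The main obstacle is the index bookkeeping in the summation by parts. The claimed bound requires that the boundary terms collapse exactly into $\mathsf B_T$ (using $d_1=0$ and $\nu_0=0$), and that the increment of $e$ pairs with $\nu_t$ rather than $\nu_{t+1}$. Only with that pairing can the stepsize bound $\alpha\sqrt{\kappa_\varepsilon}\norm{\nu_t}$ at time $t$ be applied.
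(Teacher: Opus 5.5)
Your proposal is correct and follows the paper's proof essentially step for step. Both use the inversion identity $\nu_t=g_t+d_t(\nu_{t-1}-\nu_t)$, the same summation by parts with $d_1=0$ and $d_{t+1}-d_t=\beta^t$, the $D\mathsf B_T$ boundary bound, the nonexpansive-projection bound on the iterate increment, and the $\frac{\mu}{8}/\frac{2}{\mu}$ Young split for $(I_d-Q_t)\nu_t$. The only cosmetic difference is in the drift part: you apply Young term by term, whereas the paper first applies Cauchy--Schwarz to get $c_\beta\sqrt{\mathcal D_T\mathsf N_T}$ and then uses Young; both give identical constants.
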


\begin{proof}
Since $a_t+b_t=1$, recursion \eqref{eq:ab-recursion} implies \(\nu_t-g_t=d_t(\nu_{t-1}-\nu_t)\). Using $d_1=0$ and $d_{t+1}-d_t=\beta^t$,
\begin{align}
  \sum_{t=1}^T\ip{e_t}{\nu_t-g_t} ={}& \sum_{t=1}^Td_t\ip{e_t}{\nu_{t-1}-\nu_t} = {}\sum_{t=1}^{T-1} \ip{d_{t+1}e_{t+1}-d_te_t}{\nu_t}-d_T\ip{e_T}{\nu_T}  \notag\\
 ={}&\sum_{t=1}^{T-1}d_{t+1}\ip{e_{t+1}-e_t}{\nu_t}+\sum_{t=1}^{T-1}(d_{t+1}-d_t)\ip{e_t}{\nu_t} -d_T\ip{e_T}{\nu_T} \notag\\
 \implies  \sum_{t=1}^T\ip{e_t}{\nu_t} ={}& \sum_{t=1}^T\ip{e_t}{g_t} + \sum_{t=1}^{T-1}d_{t+1}\ip{e_{t+1}-e_t}{\nu_t}+\sum_{t=1}^{T-1}\beta^t\ip{e_t}{\nu_t}-d_T\ip{e_T}{\nu_T}. \label{eq:sbp}
\end{align}
Since $\theta_t\in\Theta$ and projection is nonexpansive, 
\(\norm{\theta_{t+1}-\theta_t} \leq\alpha\norm{Q_t\nu_t}\leq\alpha\sqrt{\kappa_\varepsilon}\norm{\nu_t}\), where the last inequality uses Lemma~\ref{lem:ridge-preconditioner}.  Also, \(e_{t+1}-e_t = (\theta_{t+1}-\theta_t) + (\theta_t^\star-\theta_{t+1}^\star)\), so \(\norm{e_{t+1}-e_t} \leq\alpha\sqrt{\kappa_\varepsilon}\norm{\nu_t}+\Delta_t\). We write $\Delta_t:=\norm{\theta_{t+1}^\star-\theta_t^\star}$. Using $d_{t+1}\leq c_\beta$ gives
\begin{align*}
 \sum_{t=1}^{T-1}d_{t+1}\ip{e_{t+1}-e_t}{\nu_t}
 \geq-c_\beta\alpha \sqrt{\kappa_\varepsilon} \mathsf N_T - c_\beta\sum_{t=1}^{T-1}\Delta_t\norm{\nu_t}
 \geq-c_\beta\alpha\sqrt{\kappa_\varepsilon}\mathsf N_T - c_\beta\sqrt{\mathcal D_T\mathsf N_T}
\end{align*}
where the last line uses Cauchy--Schwarz and $\sum_t\Delta_t^2\leq\mathcal D_T$. Since $\norm{e_t}\leq D$,
\begin{align*}
 \left|\sum_{t=1}^{T-1}\beta^t\ip{e_t}{\nu_t}\right| +|d_T\ip{e_T}{\nu_T}|
 \leq D\left\{\sum_{t=1}^{T-1}\beta^t\norm{\nu_t} +d_T\norm{\nu_T}\right\} = D\mathsf B_T.
\end{align*}
Strong monotonicity and the root property imply \(\ip{e_t}{g_t} = \ip{e_t}{h_t(\theta_t)}+\ip{e_t}{\xi_t} \geq \mu\norm{e_t}^2+\ip{e_t}{\xi_t}\). Substituting into~\eqref{eq:sbp} yields
\begin{align}
 \sum_{t=1}^T\ip{e_t}{\nu_t} \geq{} \mu\mathsf E_T+\sum_{t=1}^T\ip{e_t}{\xi_t} -\alpha c_\beta\sqrt{\kappa_\varepsilon}\mathsf N_T -c_\beta\sqrt{\mathcal D_T\mathsf N_T} -D\mathsf B_T.
 \label{eq:unpreconditioned-pairing}
\end{align}
For the preconditioner perturbation, Young's inequality gives, pathwise,
\[
    \left|\ip{e_t}{(Q_t-I_d)\nu_t}\right| \leq\norm{e_t}\norm{(I_d-Q_t)\nu_t} \leq\frac\mu8\norm{e_t}^2+\frac2\mu\norm{(I_d-Q_t)\nu_t}^2.
\]
Summing this inequality and using $c_\beta\sqrt{\mathcal D_T\mathsf N_T} \leq \frac{\alpha A_\beta}{2}\mathsf N_T + \frac{c_\beta^2}{2\alpha A_\beta}\mathcal D_T$ proves \eqref{eq:pairing-final}.
\end{proof}

\begin{lemma}[Summed projected recursion]
\label{lem:projected}
If $\mu\alpha\leq1$ and $\gamma:=\mu\alpha/8$, then pathwise
\begin{equation}
 2\alpha(1+\gamma)\sum_{t=1}^T\ip{e_t}{Q_t\nu_t}\leq{} 2\norm{e_1}^2+\gamma\mathsf E_T + (1+\gamma)\alpha^2\kappa_\varepsilon\mathsf N_T + \frac{9\mathcal D_T}{\mu\alpha}.
 \label{eq:projected-sum}
\end{equation}
\end{lemma}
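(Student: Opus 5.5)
The plan is to derive a one-step inequality from nonexpansiveness of the Euclidean projection, pass to $e_{t+1}$ through a Young-type splitting that absorbs the optimizer drift, and then telescope.

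\textbf{One-step projection bound.} Because $\theta_t^\star\in\Theta$ and $\Pi_\Theta$ is nonexpansive, \eqref{eq:vector-shampoo-update} gives
\[
\norm{\theta_{t+1}-\theta_t^\star}^2\le\norm{e_t-\alpha Q_t\nu_t}^2
=\norm{e_t}^2-2\alpha\ip{e_t}{Q_t\nu_t}+\alpha^2\norm{Q_t\nu_t}^2 .
\]
Lemma~\ref{lem:ridge-preconditioner} bounds the last term by $\alpha^2\kappa_\varepsilon\norm{\nu_t}^2$.

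\textbf{Passing to $e_{t+1}$.} Write $e_{t+1}=(\theta_{t+1}-\theta_t^\star)+(\theta_t^\star-\theta_{t+1}^\star)$. Young's inequality with parameter $\gamma$ gives
\[
\norm{e_{t+1}}^2\le(1+\gamma)\norm{\theta_{t+1}-\theta_t^\star}^2+(1+\gamma^{-1})\Delta_t^2 .
\]
With $\gamma=\mu\alpha/8\le1/8$ we have $1+\gamma^{-1}\le 9/(8\gamma)=9/(\mu\alpha)$. Combining with the first step,
\[
\norm{e_{t+1}}^2\le(1+\gamma)\norm{e_t}^2-2\alpha(1+\gamma)\ip{e_t}{Q_t\nu_t}+(1+\gamma)\alpha^2\kappa_\varepsilon\norm{\nu_t}^2+\frac{9}{\mu\alpha}\Delta_t^2 .
\]

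\textbf{Rearranging and summing.} Move the inner product to the left and sum over $t=1,\dots,T$. The $\Delta_t^2$ terms sum to at most $\mathcal D_T$ by \eqref{eq:squared-path-variation}, giving the term $9\mathcal D_T/(\mu\alpha)$. The norms telescope as
\[
\sum_{t=1}^T\Bigl[(1+\gamma)\norm{e_t}^2-\norm{e_{t+1}}^2\Bigr]=\norm{e_1}^2-\norm{e_{T+1}}^2+\gamma\sum_{t=1}^T\norm{e_t}^2\le\norm{e_1}^2+\gamma\mathsf E_T .
\]
This is already within the $2\norm{e_1}^2$ allowed in \eqref{eq:projected-sum}, so the slack absorbs any constant-factor looseness.

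\textbf{Indexing caveat.} The sum runs over $e_{T+1}$, which requires $\theta_{T+1}^\star$. This is harmless, since the path-variation sum in \eqref{eq:squared-path-variation} already includes $t=T$, and dropping $-\norm{e_{T+1}}^2\le0$ removes it from the bound. No step is a genuine obstacle. The only thing to verify is the Young constant $1+8/(\mu\alpha)\le 9/(\mu\alpha)$, which uses $\mu\alpha\le1$.
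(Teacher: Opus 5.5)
Your proposal is correct and matches the paper's proof essentially line by line. Both use nonexpansiveness of $\Pi_\Theta$ at $\theta_t^\star$, the Young splitting of $e_{t+1}=(\theta_{t+1}-\theta_t^\star)+(\theta_t^\star-\theta_{t+1}^\star)$ with parameter $\gamma$, Lemma~\ref{lem:ridge-preconditioner} for $\|Q_t\nu_t\|^2$, telescoping, and the bound $1+\gamma^{-1}\le 9/(\mu\alpha)$; your telescoping gives $\|e_1\|^2$ rather than the paper's looser $2\|e_1\|^2$, which implies the stated bound.
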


\begin{proof}
Let $s_t:=\theta_t^\star-\theta_{t+1}^\star$, define $\Delta_t := \norm{s_t}$. Since $\theta_t^\star\in\Theta$ and the projection is nonexpansive, \(\norm{\theta_{t+1}-\theta_t^\star}^2\leq\norm{e_t-\alpha Q_t\nu_t}^2\). Also $e_{t+1}=(\theta_{t+1}-\theta_t^\star)+s_t$. The inequality
$\norm{x+y}^2\leq(1+\gamma)\norm x^2+(1+\gamma^{-1})\norm y^2$ gives
\[  \norm{e_{t+1}}^2    \leq{}
    (1+\gamma)\norm{e_t}^2-2\alpha(1+\gamma)\ip{e_t}{Q_t\nu_t}+(1+\gamma)\alpha^2\norm{Q_t\nu_t}^2 +(1+\gamma^{-1})\Delta_t^2.
\]
By Lemma~\ref{lem:ridge-preconditioner}, $\norm{Q_t\nu_t}^2\leq\kappa_\varepsilon\norm{\nu_t}^2$. Summing and using \((1+\gamma)\sum_{t=1}^T\norm{e_t}^2- \sum_{t=1}^T\norm{e_{t+1}}^2 \leq2\norm{e_1}^2+\gamma\mathsf E_T\) together with \(1+\gamma^{-1}=1+\frac8{\mu\alpha}\leq\frac9{\mu\alpha}\) proves the claim.
\end{proof}

\begin{proof}[Proof of Theorem~\ref{thm:main-tracking}]
We work on the intersection of four events controlling the
random terms that appear in the tracking argument:
\begin{enumerate}[label=(\roman*)]
\item The energy of the noise retained by the momentum filter
is controlled:
\(
 \frac{\mathsf Z_T}{T}
 \leq\mathfrak Z_{T,\beta,\delta/4}
 \lesssim\mathfrak Z_{T,\beta,\delta}.
\)
This follows from Lemma~\ref{lem:filtered-energy}, applied
with failure probability $\delta/4$.

\item The accumulated pairing between the tracking error
and the current noise is controlled:
\(
 \left|\sum_{t=1}^T\ip{e_t}{\xi_t}\right|
 \leq\frac{\mu}{16}\mathsf E_T
      +\frac{16C_0\sigma^2}{\mu}\log\frac8\delta.
\)
Since $e_t$ is $\F_{t-1}$-measurable, this follows from
Lemma~\ref{lem:subg-consequences}(ii), with $v_t=e_t$,
$a=\mu/16$, and failure probability $\delta/4$.

\item The sixth-moment average of the observed gradients
is controlled:
\(
 \frac1T\sum_{t=1}^T\norm{g_t}^6\leq\mathfrak G_\delta.
\)
Lemma~\ref{lem:sixth-moment} guarantees this event with
failure probability at most $\delta/4$. Through
Lemma~\ref{lem:accumulated-preconditioner}, it also controls
the Shampoo perturbation.

\item The momentum boundary term arising from summation
by parts is controlled:
\(
 \mathsf B_T\leq c_\beta\mathfrak B_\delta.
\)
Lemma~\ref{lem:boundary} guarantees this event with failure
probability at most $\delta/4$.
\end{enumerate}
By the union bound, their intersection has probability
at least $1-\delta$. All subsequent inequalities are
considered on this intersection. Moreover, bound of $\alpha$ and $\mu\leq L$ imply $\mu\alpha<\mu^2/(384L^2)\leq1/384<1$, so the hypothesis $\mu\alpha\leq1$ of Lemma~\ref{lem:projected} is satisfied. Insert \eqref{eq:pairing-final} into \eqref{eq:projected-sum}. Since $\gamma=\mu\alpha/8$, $2\alpha(1+\gamma)\dfrac{7\mu}{8}\mathsf E_T - \gamma\mathsf E_T = \mu\alpha\left\{\dfrac74(1+\gamma)-\dfrac18\right\}\mathsf E_T \geq\dfrac32\mu\alpha\mathsf E_T$. Using $1+\gamma\leq2$ and $A_\beta\geq1$ for the remaining terms gives
\[
 \frac32\mu\alpha\mathsf E_T \leq{} 2\norm{e_1}^2
 +\left\{\frac9{\mu\alpha}
 +\frac{2c_\beta^2}{A_\beta}\right\}\mathcal D_T
 +\alpha^2K_{\beta,\varepsilon}\mathsf N_T
 +\frac{8\alpha}{\mu}\mathsf P_T
 +4\alpha D\mathsf B_T
 +4\alpha\left|\sum_{t=1}^T\ip{e_t}{\xi_t}\right|
\]
where $K_{\beta,\varepsilon} := 4c_\beta\sqrt{\kappa_\varepsilon}+ 2A_\beta + 2\kappa_\varepsilon$. Lemma~\ref{lem:energy-split} gives $\mathsf N_T\leq12L^2\mathsf E_T+2\mathsf Z_T$, so $\alpha^2K_{\beta,\varepsilon}\mathsf N_T \leq 12\alpha^2K_{\beta,\varepsilon} L^2\mathsf E_T+2\alpha^2K_{\beta,\varepsilon}\mathsf Z_T$. Because $A_\beta=(1-\beta)^{-1}$ and $\kappa_\varepsilon\le 1$, $12\alpha^2K_{\beta,\varepsilon} L^2 \leq 96\alpha\dfrac{\alpha L^2}{1-\beta}<\dfrac14\mu\alpha$. On event (ii), Lemma~\ref{lem:subg-consequences}(ii) gives $\left|\sum_{t=1}^T\ip{e_t}{\xi_t}\right| \leq\dfrac\mu{16}\mathsf E_T + \dfrac{16C_0\sigma^2}{\mu}\log\dfrac8\delta$. Using these, we obtain
\[
 \mu\alpha\mathsf E_T \leq{} 2\norm{e_1}^2
 +\left\{\frac9{\mu\alpha}
 +\frac{2c_\beta^2}{A_\beta}\right\}\mathcal D_T
 +2\alpha^2K_{\beta,\varepsilon}\mathsf Z_T
 +\frac{8\alpha}{\mu}\mathsf P_T
 +4\alpha D\mathsf B_T
 +\frac{64C_0\alpha\sigma^2}{\mu}\log\frac8\delta.
\]
Divide by $\mu\alpha T$. On event (i),  $\mathsf Z_T/T\leq\mathfrak Z_{T,\beta,\delta/4}
\leq2\mathfrak Z_{T,\beta,\delta}$. On event (iii), Lemma~\ref{lem:accumulated-preconditioner} gives $\frac{\mathsf P_T}{T}\leq\frac{3\mathfrak G_\delta}{2\varepsilon^2}$ and on event (iv), $\mathsf B_T\leq c_\beta\mathfrak B_\delta$. Hence
\[
 \frac{\mathsf E_T}{T}
 \lesssim \Bigg\{\frac{\norm{e_1}^2}{\mu\alpha T}
 +\frac{\mathcal D_T}{\mu^2\alpha^2T}
 +\frac{c_\beta^2}{A_\beta}\frac{\mathcal D_T}{\mu\alpha T}
 +\frac{\alpha K_{\beta,\varepsilon}}{\mu}  \mathfrak Z_{T,\beta,\delta}
 +\frac{\sigma^2}{\mu^2T}\log\frac8\delta
 +\frac{c_\beta D\mathfrak B_\delta}{\mu T}
 +\frac{\mathfrak G_\delta}{\mu^2\varepsilon^2}\Bigg\}.
\]
Substituting $c_\beta$ and $A_\beta$ prove the average tracking error bound.
\end{proof}

\subsection{Proof of Theorem~\ref{thm:minimax}}

\begin{corollary}[Stepsize optimization with an arbitrary finite ridge]
\label{cor:optimized-upper}
Under the assumptions of Theorem~\ref{thm:main-tracking}, fix a
deterministic $\varepsilon>0$ and let $K_{\beta,\varepsilon}$ be
the deterministic coefficient in the refined tracking bound.
Let
\[
 A_1:=\frac{\norm{W_1-W_1^\star}_F^2}{\mu T}
      +\frac{\beta^2}{1-\beta}\frac{\mathcal D_T}{\mu T},
 \qquad
 A_2:=\frac{\mathcal D_T}{\mu^2T},
 \qquad
 A_3(\varepsilon):=
 \frac{K_{\beta,\varepsilon}\mathfrak Z_{T,\beta,\delta}}{\mu}.
\]
Assume $A_1+A_2>0$ and let $\bar\alpha_\star(\varepsilon)$ be
the unique positive solution of
\[
 A_3(\varepsilon)\bar\alpha_\star(\varepsilon)^3
 -A_1\bar\alpha_\star(\varepsilon)-2A_2=0.
\]
Suppose
$\bar\alpha_\star(\varepsilon)<\alpha_{\max}(\beta)
:=\mu(1-\beta)/(384L^2)$.
Choose $\alpha=\bar\alpha_\star(\varepsilon)$. Then, with
probability at least $1-\delta$,
\begin{equation}
 \begin{aligned}
 \frac1T\sum_{t=1}^T\norm{W_t-W_t^\star}_F^2
 \leq C\Bigg\{&
 \frac{\norm{W_1-W_1^\star}_F
       \sqrt{K_{\beta,\varepsilon}
             \mathfrak Z_{T,\beta,\delta}}}
      {\mu\sqrt T}
 +\frac{\beta}{\sqrt{1-\beta}}
   \frac{\sqrt{\mathcal D_TK_{\beta,\varepsilon}
               \mathfrak Z_{T,\beta,\delta}}}
        {\mu\sqrt T}\\
 &+\frac{\mathcal D_T^{1/3}
          (K_{\beta,\varepsilon}
           \mathfrak Z_{T,\beta,\delta})^{2/3}}
        {\mu^{4/3}T^{1/3}}
 +\frac{\sigma^2}{\mu^2T}\log\frac8\delta
 +\frac{\beta}{1-\beta}
       \frac{D\mathfrak B_\delta}{\mu T}
 +\frac{\mathfrak G_\delta}{\mu^2\varepsilon^2}
 \Bigg\}.
 \end{aligned}
 \label{eq:optimized-upper}
\end{equation}
Moreover,
\begin{equation}
 \bar\alpha_\star(\varepsilon)\asymp
 \max\left\{
 \left(
  \frac{\norm{W_1-W_1^\star}_F^2
        +\dfrac{\beta^2}{1-\beta}\mathcal D_T}
       {T K_{\beta,\varepsilon}
          \mathfrak Z_{T,\beta,\delta}}
 \right)^{1/2},
 \left(
  \frac{\mathcal D_T}
       {\mu T K_{\beta,\varepsilon}
          \mathfrak Z_{T,\beta,\delta}}
 \right)^{1/3}
 \right\},
 \label{eq:alpha-rate}
\end{equation}
where the comparison constants are universal.
\end{corollary}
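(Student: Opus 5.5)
The plan is to treat the bound of Theorem~\ref{thm:main-tracking} as a deterministic function of $\alpha$ once the probability-$(1-\delta)$ event of that theorem is fixed. Since $\varepsilon$ and $\delta$ are deterministic, so is the coefficient $K_{\beta,\varepsilon}\mathfrak Z_{T,\beta,\delta}$, and hence so is $\bar\alpha_\star(\varepsilon)$. The theorem's event and its constant do not depend on $\alpha$ beyond the admissibility constraint $\alpha<\alpha_{\max}(\beta)$, which the hypothesis $\bar\alpha_\star<\alpha_{\max}$ guarantees.

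Grouping the $\alpha$-dependent terms of \eqref{eq:main-tracking-bound} gives $\phi(\alpha):=A_1/\alpha+A_2/\alpha^2+A_3\alpha$. Here $A_1$ collects the initialization and momentum-drift terms, and $A_2$ is the direct drift term. The remaining three terms (stochastic fluctuation, boundary, and perturbation) do not depend on $\alpha$ and pass through unchanged. Since $\phi'(\alpha)=-A_1/\alpha^2-2A_2/\alpha^3+A_3$, the first-order condition is exactly $A_3\alpha^3-A_1\alpha-2A_2=0$. The left side is negative at $0$ when $A_1+A_2>0$, is eventually positive, and is convex on $(0,\infty)$ (its second derivative is $6A_3\alpha>0$). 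So it has a unique positive root, and that root is the minimizer of $\phi$.

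At the root, $A_3\bar\alpha^3=A_1\bar\alpha+2A_2$, so $A_3\bar\alpha\le A_1/\bar\alpha+2A_2/\bar\alpha^2$ and therefore $\phi(\bar\alpha)\le 2(A_1/\bar\alpha+A_2/\bar\alpha^2)$. For the rate \eqref{eq:alpha-rate}, compare the root with the two pure balances $\alpha_1=(A_1/A_3)^{1/2}$ and $\alpha_2=(A_2/A_3)^{1/3}$. Since $A_3\alpha^3-A_1\alpha-2A_2$ is nonpositive at $\max(\alpha_1,\alpha_2)\cdot c$ for small $c$ and nonnegative at $C\max(\alpha_1,\alpha_2)$ for a universal $C$ (e.g. $C=2$, because then $A_3\alpha^3\ge 4A_1\alpha$ and $A_3\alpha^3\ge 8A_2$, so the cubic term dominates the sum), we get $\bar\alpha\asymp\max(\alpha_1,\alpha_2)$ with universal constants. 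Substituting the definitions of $A_i$ gives exactly \eqref{eq:alpha-rate}; the $\mu$ factors cancel in $\alpha_1$, and $\alpha_2$ is $(\mathcal D_T/(\mu TK\mathfrak Z))^{1/3}$.

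Next, bound $A_1/\bar\alpha\le A_1/\alpha_1\cdot O(1)=O(\sqrt{A_1A_3})$ and $A_2/\bar\alpha^2\lesssim A_2/\alpha_2^2=A_2^{1/3}A_3^{2/3}$. Then split $\sqrt{A_1A_3}\le\sqrt{A_3}(\sqrt{\cdot}+\sqrt{\cdot})$ using $\sqrt{a+b}\le\sqrt a+\sqrt b$. The initialization piece gives $\|W_1-W_1^\star\|_F\sqrt{K\mathfrak Z}/(\mu\sqrt T)$, and the momentum piece gives $\frac{\beta}{\sqrt{1-\beta}}\sqrt{\mathcal D_TK\mathfrak Z}/(\mu\sqrt T)$. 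Also $A_2^{1/3}A_3^{2/3}=\mathcal D_T^{1/3}(K\mathfrak Z)^{2/3}/(\mu^{4/3}T^{1/3})$. Adding back the untouched terms gives \eqref{eq:optimized-upper}.

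The main obstacle is not the calculus. It is justifying that the theorem's bound can be applied at the data-independent choice $\alpha=\bar\alpha_\star$ and that the hidden constants in ``$\lesssim$'' are uniform in $\alpha$. I will check this by tracing the final display of the proof of Theorem~\ref{thm:main-tracking}: all constants there are numerical, and the event does not involve $\alpha$.
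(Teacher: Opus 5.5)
Your proposal is correct and follows essentially the same route as the paper. You minimize $\phi(\alpha)=A_1/\alpha+A_2/\alpha^2+A_3\alpha$, sandwich the root of the cubic between $\max\{\alpha_1,\alpha_2\}$ and twice that, and split $\sqrt{A_1A_3}$ using $\sqrt{u+v}\le\sqrt u+\sqrt v$. The only difference is that you bound $\phi(\bar\alpha_\star)$ directly through the root identity, whereas the paper evaluates $\phi$ at $m$.

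There is one harmless slip. The root identity gives $\phi(\bar\alpha_\star)=2A_1/\bar\alpha_\star+3A_2/\bar\alpha_\star^2$, so your constant $2$ should be $3$. Also, the theorem's event does depend on $\alpha$ through the iterates, but this is immaterial because $\bar\alpha_\star$ is deterministic.
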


\begin{proof}
For the fixed ridge $\varepsilon$, the $\alpha$-dependent part
of \eqref{eq:main-tracking-bound} is
$\Phi_\varepsilon(\alpha)
:=A_1/\alpha+A_2/\alpha^2+A_3(\varepsilon)\alpha$,
$\alpha>0$. Differentiating,
\[
 \Phi_\varepsilon'(\alpha)
 =-\frac{A_1}{\alpha^2}-\frac{2A_2}{\alpha^3}
   +A_3(\varepsilon),
 \qquad
 \Phi_\varepsilon''(\alpha)
 =\frac{2A_1}{\alpha^3}+\frac{6A_2}{\alpha^4}>0.
\]
Because $A_1+A_2>0$,
$\Phi_\varepsilon'(\alpha)\to-\infty$ as $\alpha\downarrow0$,
while $\Phi_\varepsilon'(\alpha)\to A_3(\varepsilon)>0$
as $\alpha\to\infty$. Hence $\Phi_\varepsilon$ has a unique
minimizer. For the rate of the minimizer, set
$x:=\sqrt{A_1/A_3(\varepsilon)}$,
$y:=(2A_2/A_3(\varepsilon))^{1/3}$, and
$m:=\max\{x,y\}$. The cubic becomes
$\bar\alpha_\star(\varepsilon)^3
=x^2\bar\alpha_\star(\varepsilon)+y^3$.
If $\bar\alpha_\star(\varepsilon)<x$, then
$\bar\alpha_\star(\varepsilon)^3
<x^2\bar\alpha_\star(\varepsilon)$, a contradiction;
similarly $\bar\alpha_\star(\varepsilon)\geq y$.
Thus $\bar\alpha_\star(\varepsilon)\geq m$.
At $\alpha=2m$,
$x^2(2m)+y^3\leq2m^3+m^3<(2m)^3$.
Uniqueness of the positive root therefore gives
$m\leq\bar\alpha_\star(\varepsilon)\leq2m$.
Substituting the definitions of $A_1,A_2,A_3(\varepsilon)$
proves \eqref{eq:alpha-rate}.
Since $\Phi_\varepsilon(\alpha)$ dominates both
$A_1/\alpha+A_3(\varepsilon)\alpha$ and
$A_2/\alpha^2+A_3(\varepsilon)\alpha$, we conclude
\[
 \begin{aligned}
 \inf_{\alpha>0}\Phi_\varepsilon(\alpha)
 &\geq
 \max\left\{
 2\sqrt{A_1A_3(\varepsilon)},
 \frac{3}{2^{2/3}}
 A_2^{1/3}A_3(\varepsilon)^{2/3}
 \right\} \geq
 c\left(
 \sqrt{A_1A_3(\varepsilon)}
 +A_2^{1/3}A_3(\varepsilon)^{2/3}
 \right)
 \end{aligned}
\]
for a universal $c>0$. Conversely, at $\alpha=m$,
$A_1/m=A_3(\varepsilon)x^2/m\leq A_3(\varepsilon)m$ and
$A_2/m^2=A_3(\varepsilon)y^3/(2m^2)
\leq A_3(\varepsilon)m/2$. Since
\[
 A_3(\varepsilon)m
 =\max\left\{
 \sqrt{A_1A_3(\varepsilon)},
 2^{1/3}A_2^{1/3}A_3(\varepsilon)^{2/3}
 \right\},
\]
this gives
$\Phi_\varepsilon(m)\leq
C\{\sqrt{A_1A_3(\varepsilon)}
+A_2^{1/3}A_3(\varepsilon)^{2/3}\}$.
Therefore,
\begin{equation}
 \inf_{\alpha>0}\Phi_\varepsilon(\alpha)
 \asymp
 \sqrt{A_1A_3(\varepsilon)}
 +A_2^{1/3}A_3(\varepsilon)^{2/3}.
 \label{eq:Phi-optimum}
\end{equation}
Using $\sqrt{u+v}\leq\sqrt u+\sqrt v$,
$\sqrt{A_1A_3(\varepsilon)}$ yields the first two terms
in \eqref{eq:optimized-upper}, while
$A_2^{1/3}A_3(\varepsilon)^{2/3}$ yields the third.
Adding the $\alpha$-independent terms from
Theorem~\ref{thm:main-tracking} proves the result.
\end{proof}
\begin{remark}[Feasibility of the optimized stepsize]
As in Theorem~\ref{thm:minimax}, suppose
$0\leq\beta\leq\bar\beta<1$. Since
$0<\kappa_\varepsilon\leq1$,
\(
 \frac{2}{1-\beta}
 \leq K_{\beta,\varepsilon}
 \leq\frac{4+2\beta}{1-\beta}
 \leq\frac{6}{1-\bar\beta}.
\)
If $T^{-1/2}\lesssim\mathcal D_T=o(T)$, then
\eqref{eq:alpha-rate} and
$\mathfrak Z_{T,\beta,\delta}
\asymp \frac{1-\beta}{1+\beta}\sigma^2$ imply, uniformly in
$\varepsilon>0$,
\[
 \bar\alpha_\star(\varepsilon)
 =O\!\left((\mathcal D_T/T)^{1/3}\right)=o(1),
 \qquad
 \alpha_{\max}(\beta)
 \geq\frac{\mu(1-\bar\beta)}{384L^2}>0,
\]
for fixed problem parameters and confidence level.
Hence $\bar\alpha_\star(\varepsilon)<\alpha_{\max}(\beta)$
holds automatically for all sufficiently large $T$, including
for a deterministic ridge sequence $\varepsilon=\varepsilon_T$.
When $\mathcal D_T=\Theta(T)$, feasibility depends on the
proportionality constant and the problem parameters; the exact
condition
$\bar\alpha_\star(\varepsilon)<\alpha_{\max}(\beta)
=\mu(1-\beta)/(384L^2)$ is imposed.
\end{remark}

\begin{lemma}[Discrete triangular functions]
\label{lem:triangles}
Let $B=\{\ell,\ell+1,\ldots,r\}$ be a consecutive block of length $n=r-\ell+1\geq8$. There is a function $\phi_B:B\to[0,1]$ such that
\[
 \phi_B(\ell)=\phi_B(r)=0, \qquad
 |\phi_B(t+1)-\phi_B(t)|\leq\frac3n, \qquad
 \sum_{t=\ell}^{r-1}\{\phi_B(t+1)-\phi_B(t)\}^2\leq\frac9n, \qquad \sum_{t\in B}\phi_B(t)^2\geq\frac n{32}.
\]
\end{lemma}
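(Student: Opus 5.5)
Take the tent function. Set $h:=\lfloor (n-1)/2\rfloor$, so that $h\ge (n-2)/2\ge 3$ for $n\ge8$. Define
\[
\phi_B(t):=\frac1h\min\{t-\ell,\;r-t,\;h\}, \qquad t\in B.
\]
This function takes values in $[0,1]$ and vanishes at both endpoints, since $\min\{0,\cdot,\cdot\}=0$ at $t=\ell$ and at $t=r$.

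For the increment bound, each of the three arguments of the minimum changes by at most $1$ when $t\mapsto t+1$, so $|\phi_B(t+1)-\phi_B(t)|\le 1/h$. From $h\ge (n-2)/2$ and $n\ge 8$ we get $n\le 3h$ (indeed $2h\ge n-2\ge 3n/4$ gives $h\ge 3n/8$, so $1/h\le 8/(3n)\le 3/n$). Hence every increment is at most $3/n$.

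For the squared-increment sum, $\phi_B$ is piecewise linear with slope $\pm1/h$ or $0$. A nonzero increment occurs only on the ascending part ($t-\ell<h$) and the descending part ($r-t\le h$), which gives at most $2h$ nonzero increments, each equal to $1/h$ in absolute value. So the sum is at most $2h\cdot h^{-2}=2/h\le 16/(3n)\le 9/n$.

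For the energy lower bound, every $t$ with $h/2\le t-\ell$ and $r-t\ge h/2$ satisfies $\phi_B(t)\ge 1/2$. The ascending side alone already supplies the integers $t-\ell\in[\lceil h/2\rceil,h]$. Their number is at least $h/2$, and each contributes $\phi_B^2\ge 1/4$, so $\sum_t\phi_B(t)^2\ge h/8\ge 3n/64$. This is larger than $n/32=2n/64$. (Alternatively, $\sum_{k=0}^{h}(k/h)^2\ge h/3$ on one side gives $h/3\ge n/8$.)

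The only real obstacle is the bookkeeping of constants for small $n$. The $n\ge8$ hypothesis is what guarantees $h\ge 3n/8$ in all three inequalities, and there is plenty of slack in each. An odd/even split of $n$ is unnecessary because we used the floor throughout.
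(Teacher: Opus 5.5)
Your proof is correct and is essentially the paper's argument. The paper uses the same tent function $\phi_B(t)=\min\{t-\ell,r-t\}/q$ with $q=\lfloor (n-1)/2\rfloor$; your extra cap at $h$ is redundant, since $\min\{t-\ell,r-t\}\le\lfloor (n-1)/2\rfloor$ on $B$. Only the constant bookkeeping differs: you count at most $2h$ nonzero increments and use a one-sided count of points with $\phi_B\ge 1/2$, whereas the paper bounds each of the $n-1$ squared increments by $(3/n)^2$ and sums $k^2/q^2$ over both sides of the tent.
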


\begin{proof}
Let $q=\lfloor(n-1)/2\rfloor$ and define
$\phi_B(t):=\min\{t-\ell,r-t\}/q$.
The numerator is zero at both endpoints and lies between $0$
and $q$, so $\phi_B(\ell)=\phi_B(r)=0$ and $\phi_B(t)\in[0,1]$.

The numerator changes by at most one between consecutive points.
Since $q\geq(n-2)/2$ and $n\geq8$, we get
\(
|\phi_B(t+1)-\phi_B(t)|
\leq\frac1q\leq\frac2{n-2}\leq\frac3n.
\)
There are $n-1$ such increments, and hence
\[
\sum_{t=\ell}^{r-1}
\{\phi_B(t+1)-\phi_B(t)\}^2
\leq(n-1)\frac9{n^2}\leq\frac9n.
\]

On the two sides of the triangle, each level $k/q$,
for $k=0,\ldots,q-1$, occurs at the distinct points
$t=\ell+k$ and $t=r-k$. Therefore,
\[
\sum_{t\in B}\phi_B(t)^2
\geq\frac2{q^2}\sum_{k=0}^{q-1}k^2
=\frac{(q-1)(2q-1)}{3q}
\geq\frac q3
\geq\frac{n-2}{6}
\geq\frac n8.
\]
The last three inequalities use $q\geq3$,
$q\geq(n-2)/2$, and $n\geq8$, respectively.
This is stronger than the required lower bound $n/32$.
\end{proof}

For $v,w\in\{-1,+1\}^J$, we define their Hamming distance by
\(
d_H(v,w):=\sum_{j=1}^J\mathbf1\{v_j\neq w_j\}.
\)
\begin{lemma}[Binary packing]
\label{lem:VG}
There are universal constants $c_V,c_H>0$ such that, for every sufficiently large $J$, a subset $\cU\subset\{-1,+1\}^J$ exists with $\log|\cU|\geq c_VJ$ and $d_H(v,w)\geq c_HJ$ for $v\neq w$.
\end{lemma}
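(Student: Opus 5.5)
This is the Gilbert--Varshamov bound, and I will prove it by a greedy maximal-packing argument combined with a Chernoff bound on Hamming balls. Fix a radius $r:=\lfloor J/4\rfloor$ and build $\cU$ greedily: start with any point of $\{-1,+1\}^J$, and repeatedly add a point whose Hamming distance to every point already chosen exceeds $r$. Stop when no such point remains. Every pair of distinct elements of $\cU$ then satisfies $d_H(v,w)\geq r+1\geq J/4$, so we may take $c_H=1/4$.

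The cardinality bound comes from maximality. When the process stops, the Hamming balls of radius $r$ around the points of $\cU$ cover the whole cube. Hence
\[
2^J\leq|\cU|\cdot V(J,r),\qquad V(J,r):=\sum_{k=0}^{r}\binom Jk .
\]

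The one real estimate is the volume bound on $V(J,r)$. Consider $S\sim\mathrm{Bin}(J,1/2)$. Then $V(J,r)/2^J=\Pp(S\leq J/4)$, since $r\leq J/4$. Hoeffding's inequality gives
\[
\Pp(S\leq J/4)=\Pp\bigl(S-J/2\leq -J/4\bigr)\leq e^{-2(J/4)^2/J}=e^{-J/8}.
\]
Alternatively, the entropy bound $V(J,r)\leq 2^{H_2(1/4)J}$ gives the same conclusion.

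Combining the two displays yields $|\cU|\geq 2^J/V(J,r)\geq e^{J/8}$, that is, $\log|\cU|\geq J/8$. So $c_V=1/8$ works. This actually holds for every $J\geq1$, which is stronger than the ``sufficiently large $J$'' in the statement.

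No step is a genuine obstacle. The only point needing care is the floor in $r$. Since $r\leq J/4$, the Hoeffding bound applies unchanged, and the separation $r+1>J/4$ still holds, so the constants $c_V$ and $c_H$ do not depend on $J$.
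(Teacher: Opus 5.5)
Your proof is correct and follows the paper's argument essentially step for step: the same greedy packing with radius $\lfloor J/4\rfloor$, the same covering inequality $|\cU|\,V(J,r)\ge 2^J$, and hence the same separation constant. The only difference is the ball-volume estimate. The paper uses the tilted binomial-theorem bound $\sum_{k\le J/4}\binom Jk\le 3^{J/4}(4/3)^J=e^{Jh(1/4)}$, which gives $c_V=\log 2-h(1/4)\approx 0.13$, while your Hoeffding bound gives the slightly weaker but equally adequate $c_V=1/8$.
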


\begin{proof}
Let $m:=\lfloor J/4\rfloor$. Starting with the entire Hamming
cube, repeatedly choose a remaining point and delete all remaining
points within Hamming distance $m$ of it. Continue until no points
remain, and denote the selected set by $\cU$. Each newly selected point lies outside the balls deleted earlier.
Thus distinct $v,w\in\cU$ satisfy
\(
d_H(v,w)\geq m+1>J/4\geq J/8.
\)

A Hamming ball of radius $m$ contains
\(
V_J:=\sum_{k=0}^m\binom Jk
\)
points, since exactly $k$ coordinates can be changed in
$\binom Jk$ ways. To bound this sum, take $a:=1/3$.
For $k\leq m\leq J/4$, we have $a^k\geq a^{J/4}$, so the
binomial theorem gives
\[
V_J
\leq a^{-J/4}\sum_{k=0}^J\binom Jk a^k
=a^{-J/4}(1+a)^J
=\exp\{Jh(1/4)\},
\]
where $h(p):=-p\log p-(1-p)\log(1-p)$. Each selection deletes at most $V_J$ points, and all $2^J$
points have been deleted when the construction terminates.
Consequently,
\(
|\cU|V_J\geq2^J
\), that implies, \(
\log|\cU|
\geq J\{\log2-h(1/4)\}.
\)
Since $h(1/4)<\log2$, the claimed bounds hold with smaller universal positive constants $c_H,c_V$.
\end{proof}

\begin{proof}[Proof of Theorem~\ref{thm:minimax}]

\textbf{Proof of Minimax lower bound}
\medskip\noindent\emph{Step 1: admissible paths.}
Choose $W^\circ\in\operatorname{int}(\mathcal W)$ and set
$\widetilde W^0:=(W^0+W^\circ)/2$. By convexity,
\[
 \widetilde W^0+zU
 =\tfrac12W^\circ+\tfrac12(W^0+2zU)
 \in\operatorname{int}(\mathcal W),
 \qquad |z|\leq R/2.
\]
Thus we have an interior segment on which to construct the paths. For sufficiently large $T$, set
\(
 J:=\left\lfloor\min\{J_\star,T/16\}\right\rfloor.
\)
Since $\mathcal D_T\gtrsim T^{-1/2}$ and the problem parameters
are fixed, $J\to\infty$. Hence $J$ is large enough for
Lemma~\ref{lem:VG}, and $8\leq J\leq T/16$.
Partition $\{1,\ldots,T\}$ into $J$ consecutive blocks
$B_1,\ldots,B_J$ whose lengths differ by at most one.
With $h:=\lfloor T/J\rfloor$, each block has length $h$ or
$h+1$, and $h\geq T/(2J)\geq8$.
Let $\phi_j$ be the triangular function on $B_j$ from
Lemma~\ref{lem:triangles}, and let $\cU$ be the packing from
Lemma~\ref{lem:VG}.

For a universal constant $0<c_a\leq1/18$, define
\begin{equation}
 a^2:=\min\left\{
 c_a\frac{\sigma^2J}{\mu^2T},\frac{R^2}{16}
 \right\},
 \qquad
 W_t^{\star,v}:=\widetilde W^0+av_j\phi_j(t)U,
 \quad t\in B_j,\quad v\in\cU.
 \label{eq:packed-path}
\end{equation}
For notational ease, we extend $W_{T+1}^{\star,v}:=\widetilde W^0$. Because $a\leq R/4$, every optimizer lies in the interior
segment above. Each block starts and ends at
$\widetilde W^0$, so there is no movement between blocks
and $W_1^{\star,v}=\widetilde W^0$.
Lemma~\ref{lem:triangles} therefore gives
\[
\begin{aligned}
 \sum_{t=1}^{T}
 \norm{W_{t+1}^{\star,v}-W_t^{\star,v}}_F^2
 &\leq\frac{9a^2J}{h}
 \leq\frac{18a^2J^2}{T}\leq18c_a\frac{\sigma^2J^3}{\mu^2T^2}
 \leq18c_a\mathcal D_T
 \leq\mathcal D_T,
\end{aligned}
\]
where we used $J\leq J_\star$ and
$J_\star^3=\mu^2\mathcal D_TT^2/\sigma^2$. Finally, $\mathcal D_T\lesssim T$ bounds both $J_\star/T$
and $r_T^\star$. Consequently, the caps above give
$J\gtrsim J_\star$ and $a^2\gtrsim r_T^\star$, with implicit
constants depending on the fixed problem parameters and
the constant in $\mathcal D_T\lesssim T$.

\medskip\noindent\emph{Step 2: separation.}
If $v_j\neq w_j$, then $\sum_{t\in B_j}\norm{W_t^{\star,v}-W_t^{\star,w}}_F^2 = 4a^2\sum_{t\in B_j}\phi_j(t)^2 \geq\dfrac{a^2h}{8}$. Using Lemma \ref{lem:VG} and $hJ\geq T/2$, there is a universal $c_2>0$ such that, for distinct $v,w\in\cU$, $\dfrac1T\sum_{t=1}^T \norm{W_t^{\star,v}-W_t^{\star,w}}_F^2\geq c_2a^2$.

\medskip\noindent\emph{Step 3: an admissible sample-loss experiment.}
Fix \(v\in\cU\). Let
\(\zeta_1,\ldots,\zeta_T\stackrel{\mathrm{iid}}{\sim}
N(0,\sigma^2/4)\), independently of the algorithm's private
randomization, and define
\(
 X_t^v:=W_t^{\star,v}-\frac{\zeta_t}{\mu}U,
 \,
 g(W,X):=\frac{\mu}{2}\norm{W-X}_F^2-\frac{\sigma^2}{8\mu}.
\)
The population objective induced by this sample loss is
\[
 F_t^v(W):=\E_v[g(W,X_t^v)]
 =\frac{\mu}{2}\norm{W-W_t^{\star,v}}_F^2 +\frac{\E\zeta_t^2}{2\mu}-\frac{\sigma^2}{8\mu} 
 =\frac{\mu}{2}\norm{W-W_t^{\star,v}}_F^2
\]
where the cross term vanishes because \(\E\zeta_t=0\),
\(\norm{U}_F=1\), and \(\E\zeta_t^2=\sigma^2/4\). The constant in
\(g\) makes this equality exact; it does not affect the sample gradient. At the algorithm's query \(W_t\), the sample gradient is
\(
 \mathsf G_t:=\nabla_W g(W_t,X_t^v)
 =\mu(W_t-W_t^{\star,v})+\zeta_tU
 =\mathsf H_t^v(W_t)+\Xi_t,
\)
where
\(
 \mathsf H_t^v(W):=\mu(W-W_t^{\star,v}),
 \,
 \Xi_t:=\zeta_tU.
\)
The map \(\mathsf H_t^v\) has root \(W_t^{\star,v}\), is
\(\mu\)-strongly monotone and \(\mu\)-Lipschitz, and therefore is
\(L\)-Lipschitz because \(\mu\leq L\). The roots and mean maps are
deterministic under the fixed experiment \(v\), hence predictable.
The fresh Gaussian draws give
\[
 \E[\Xi_t\mid\F_{t-1}]=0,
 \qquad
 \E\!\left[
   \exp\!\left(\frac{\norm{\Xi_t}_F^2}{\sigma^2}\right)
   \middle|\F_{t-1}
 \right]
 =
 \E e^{\zeta_t^2/\sigma^2}
 =\sqrt{2}\leq2.
\]
Step~1 verifies the common initial root, feasibility, and squared
path-variation budget. Thus the complete sample-loss experiment belongs
to \(\M_T\).

Let \(S\) denote the algorithm's private random seed, independent of \(v\) and the samples. Given \(S\) and \(X_{1:T}^v\), one can reproduce any causal algorithm's trajectory recursively: start from its prescribed
\(W_1\), compute
\(
 \mathsf G_t=\nabla_W g(W_t,X_t^v)=\mu(W_t-X_t^v),
\)
and use the algorithm's rule to obtain its next query from \(S\) and \(\mathsf G_{1:t}\). Thus every output of the algorithm is a function of \((S,X_{1:T}^v)\). Since \(S\) is independent of \(v\), a lower bound for estimators given the entire sequence \(X_{1:T}^v\) also applies to every causal algorithm.

\medskip\noindent\emph{Step 4: Kullback--Leibler divergence.}
Let \(P_v\) be the law of \(X_{1:T}^v\). All differences between the mean paths lie in the direction \(U\). In particular, for \(t\in B_j\), the scalar coordinate
\(
 \ipF{X_t^v-\widetilde W^0}{U}
 =av_j\phi_j(t)-\frac{\zeta_t}{\mu}
\)
is Gaussian with variance \(\sigma^2/(4\mu^2)\); the components orthogonal to \(U\) are the same under every \(v\). Independence over time and the Gaussian relative-entropy formula give
\[
 \KL(P_v\Vert P_w) =\frac{1}{2\{\sigma^2/(4\mu^2)\}} \sum_{t=1}^T \norm{W_t^{\star,v}-W_t^{\star,w}}_F^2
 \leq\frac{8\mu^2a^2T}{\sigma^2} \leq 8c_aJ.
\]
Since \(|v_j-w_j|\leq2\), \(0\leq\phi_j(t)\leq1\), and \(\norm{U}_F=1\), each squared distance in the sum is at most \(4a^2\). The packing satisfies \(\log|\cU|\geq c_VJ\). Choosing \(c_a\leq\min\{1/18,c_V/128\}\), consistently with Step~1, yields
\(
 \max_{v,w\in\cU}\KL(P_v\Vert P_w)
 \leq\frac{1}{16}\log|\cU|.
\)

\medskip\noindent\emph{Step 5: Fano reduction.}
Let \(P_S\) be the law of the algorithm's private seed \(S\), and define \(\widetilde P_v:=P_v\otimes P_S\), the joint law of \((X_{1:T},S)\) under hypothesis \(v\). Let \(V\) be uniform on \(\cU\), and conditional on \(V=v\), draw \((X_{1:T},S)\sim\widetilde P_v\). We write $|\mathcal U|$ for the cardinality of a finite set
$\mathcal U$.
For random variables $V,Y$ with joint law $P_{V,Y}$ and
marginal laws $P_V,P_Y$, their mutual information is
\(
I(V;Y):=D_{\mathrm{KL}}\!\left(
P_{V,Y}\,\middle\|\,P_V\otimes P_Y
\right).
\) Because \(S\) is independent of both \(V\) and the sample sequence, the pairwise-KL bound implies
\[
 I(V;S,X_{1:T})= I(V;X_{1:T}) \leq\frac{1}{|\cU|^2} \sum_{v,w\in\cU}\KL(P_v\Vert P_w)
 \leq\frac{1}{16}\log|\cU|.
\]
Every decoder constructed from the algorithm can be written as
\(\widehat V=f(X_{1:T},S)\) and takes values in \(\cU\). Applying Fano's inequality to the observation \((X_{1:T},S)\) gives
\(
 \Pp(\widehat V\neq V)
 \geq
 1-\frac{I(V;X_{1:T})+\log2}{\log|\cU|}.
\)
Since \(J\to\infty\), so, for sufficiently large \(T\), \(\log|\cU|\geq16\log2/3\). It follows that
\(
 \Pp(\widehat V\neq V)
 \geq1-\frac1{16}-\frac3{16}
 =\frac34.
\)

Now take an arbitrary path estimate \(\widehat W_{1:T}\) produced from the observations and the private seed. Construct a decoder by choosing a nearest packed path, breaking ties deterministically:
\(
 \widehat V\in
 \mathop{\arg\min}_{w\in\cU}
 \text{dist}_T(\widehat W,W^{\star,w}),
 \) where \(
 \text{dist}_T(X,Y)^2:=
 \frac1T\sum_{t=1}^T\norm{X_t-Y_t}_F^2.
\)
This decoder takes values in \(\cU\), as required by Fano's inequality. If the true index is \(v\) and
\(
 \text{dist}_T(\widehat W,W^{\star,v})^2<\frac{c_2a^2}{4},
\)
then Step~2 and the triangle inequality make \(v\) the unique nearest index. Hence a decoding error entails tracking error at least \(c_2a^2/4\), and for all large enough $T$,
\[
 \frac1{|\cU|}\sum_{v\in\cU}
 \widetilde P_v\!\left\{
  \text{dist}_T(\widehat W,W^{\star,v})^2
  \geq\frac{c_2a^2}{4}
 \right\}
 \geq\frac34.
\]
In particular, the maximum of these probabilities is at least \(3/4\), and hence at least \(1/2\). By Step~1, \(a^2\gtrsim r_T^\star\). Because the algorithm was arbitrary and every \(P_v\) belongs to \(\M_T\), this proves the claimed minimax lower bound.

\medskip\noindent\textbf{Matching Minimax upper bound.}
Fix $\delta\in(0,1/2)$ and $0\leq\beta\leq\bar\beta<1$.
Throughout, the problem parameters are fixed, and
$\kappa T^{-1/2}\leq\mathcal D_T\leq KT$ for fixed
$\kappa,K>0$. Lemma~\ref{lem:filtered-energy} gives
$\mathfrak Z_{T,\beta,\delta}
\asymp_{\bar\beta,\delta}\sigma^2$, while
$\mathfrak B_\delta$ and $\mathfrak G_\delta$ from
Lemmas~\ref{lem:boundary} and \ref{lem:sixth-moment}
are independent of $T$. Use a deterministic coefficient
$K_{\beta,\varepsilon}$ valid uniformly over $\mathcal M_T$;
for the original class, one may take
$\underline\lambda_L=\underline\lambda_R=0$, so
$\kappa_\varepsilon=1$. In every case,
\(
 \frac{2}{1-\beta}
 \leq K_{\beta,\varepsilon}
 \leq\frac{4+2\beta}{1-\beta}
 \leq\frac{6}{1-\bar\beta},
\)
For uniform tuning over $\mathcal M_T$, replace the initial
squared error in $A_1$ by its diameter bound:
\[
 A_1:=\frac{D^2}{\mu T}
      +\frac{\beta^2}{1-\beta}
       \frac{\mathcal D_T}{\mu T}.
\]
For each deterministic $\varepsilon>0$, keep
$A_2,A_3(\varepsilon)$ as defined in
Corollary~\ref{cor:optimized-upper}, and let
$\bar\alpha_\star(\varepsilon)$ minimize
$\Phi_\varepsilon(\alpha)
:=A_1/\alpha+A_2/\alpha^2+A_3(\varepsilon)\alpha$
over $\alpha>0$. Choose
\(
 \alpha_\star(\varepsilon)
 :=\min\{\bar\alpha_\star(\varepsilon),
          \alpha_{\max}(\beta)/2\}.
\)
The optimizer estimate in \eqref{eq:alpha-rate}, with the
initial error replaced by $D^2$, and the uniform bounds on
$K_{\beta,\varepsilon}$ give
$\bar\alpha_\star(\varepsilon)\leq M<\infty$, uniformly in
$T,\varepsilon,P$ and $\beta\leq\bar\beta$.
Here $M$ may depend on the fixed parameters,
$K,\delta,\bar\beta$, but not on $T,\varepsilon$ or $P$.
Since
$\alpha_{\max}(\beta)
\geq\mu(1-\bar\beta)/(384L^2)$, define
\(
 \theta:=
 \min\left\{1,\frac{\mu(1-\bar\beta)}{768L^2M}\right\}>0.
\)
Then
$\theta\bar\alpha_\star(\varepsilon)
\leq\alpha_\star(\varepsilon)
\leq\bar\alpha_\star(\varepsilon)$.
Thus the cap preserves the optimized bound up to a fixed,
possibly parameter-dependent factor:
\[
 \begin{aligned}
 \Phi_\varepsilon(\alpha_\star(\varepsilon))
 &\leq
 \theta^{-2}
 \Phi_\varepsilon(\bar\alpha_\star(\varepsilon))\lesssim
 \theta^{-2}\left(
 \sqrt{A_1A_3(\varepsilon)}
 +A_2^{1/3}A_3(\varepsilon)^{2/3}
 \right)\lesssim
 \theta^{-2}\left(
 \frac{D\sigma}{\mu\sqrt T}
 +\frac{\sigma}{\mu}\sqrt{\frac{\mathcal D_T}{T}}
 +r_T^\star
 \right).
 \end{aligned}
\]
The last comparison is uniform in $\varepsilon$, because
$K_{\beta,\varepsilon}$ is bounded above and below as displayed
above and $\beta\leq\bar\beta<1$. The two comparisons needed
at the endpoints are
\[
 \frac{D\sigma/(\mu\sqrt T)}{r_T^\star}
 \leq
 \frac{D\mu^{1/3}}{\sigma^{1/3}\kappa^{1/3}},
 \qquad
 \frac{(\sigma/\mu)\sqrt{\mathcal D_T/T}}{r_T^\star}
 =\sqrt{\frac{J_\star}{T}}
 \leq\left(\frac{\mu^2K}{\sigma^2}\right)^{1/6}.
\]
Consequently,
$\Phi_\varepsilon(\alpha_\star(\varepsilon))
\leq C_{\mathrm{opt}}r_T^\star$, where
$C_{\mathrm{opt}}$ is independent of $T,\varepsilon,P$ but
may depend on the fixed problem parameters,
$\kappa,K,\delta,\bar\beta$.
The stochastic-fluctuation and momentum-boundary terms in
\eqref{eq:main-tracking-bound} satisfy
\[
 \frac{\sigma^2}{\mu^2T}\log\frac8\delta
 +\frac{\beta}{1-\beta}
       \frac{D\mathfrak B_\delta}{\mu T}
 =O(T^{-1})=O(r_T^\star),
\]
because $\delta$ is fixed, $\beta\leq\bar\beta$, and
$r_T^\star\gtrsim T^{-1/2}$. Finally, choose any deterministic
finite ridge satisfying
\[
 \varepsilon_\star\geq
 \frac{\sqrt{\mathfrak G_\delta}}
      {\mu\sqrt{r_T^\star}}
 =
 \frac{\sqrt{\mathfrak G_\delta}}
      {\mu^{1/3}\sigma^{2/3}}
 \left(\frac{T}{\mathcal D_T}\right)^{1/6},
 \qquad
 \alpha_\star:=\alpha_\star(\varepsilon_\star).
\]
Then
$\mathfrak G_\delta/(\mu^2\varepsilon_\star^2)
\leq r_T^\star$. The preceding stepsize estimates hold uniformly
in $\varepsilon$, so they apply to this ridge choice.
Applying Theorem~\ref{thm:main-tracking} with these parameters
gives, for every $P\in\mathcal M_T$ and sufficiently large $T$,
\[
 \mathbb P_P\left\{
  \frac1T\sum_{t=1}^T
  \|W_t^{\mathrm{Shampoo}}-W_t^\star\|_F^2
  >Cr_T^\star
 \right\}\leq\delta.
\]
All tuning choices and the constant $C$ are common to the class.
The constant may depend on the fixed parameters,
$\kappa,K,\delta,\bar\beta$, but is independent of $T$ and $P$.
Taking the supremum over $P$ proves the minimax upper bound.
If additionally $T^{-1/2}=o(\mathcal D_T)$ and
$\mathcal D_T=o(T)$, then
$\bar\alpha_\star(\varepsilon_\star)\to0$, so the cap is
eventually inactive. Both endpoint comparison ratios displayed
above tend to zero. Thus $C$ can be chosen independent of the
problem parameters for sufficiently large $T$, while retaining
possible dependence on the fixed $\delta,\bar\beta$.
\end{proof}

\subsection{Proof of Theorem~\ref{thm:burnin}}

\begin{proof}
Write $\beta:=\beta_1$. Use $e_t$ from \eqref{eq:energies},
the weights $w_{t,s}$ and momentum $\nu_t$ from
\eqref{eq:ab-recursion}, and $u_t,z_t$ from
\eqref{eq:signal-noise-split}. Throughout, $C$ denotes a
universal numerical constant that may change between inequalities.
Since $\mu\leq L$ and $\alpha\leq\mu/(8L^2)$, we have 
\(
 0<\alpha\mu\leq\frac18.\)

\medskip\noindent
\emph{Step 1: basic moment bounds.}
Conditional centering and \eqref{eq:mean-gradient-bound} give
\(
 \E[\norm{g_t}^2\mid\F_{t-1}]
 =\norm{h_t(\theta_t)}^2
  +\E[\norm{\xi_t}^2\mid\F_{t-1}]
 \leq (LD)^2+\sigma^2\log2.
\)
Also, \eqref{eq:q-moment} gives
$\E\norm{\xi_t}^6\leq12\sigma^6$. Therefore,
\(
 \E\norm{g_t}^6
 \leq32\left\{(LD)^6+12\sigma^6\right\}
 \leq C\left\{(LD)^6+\sigma^6\right\}.
\)
Because $\nu_t$ is a convex combination of the gradients,
Jensen's inequality yields
\[
 \E\norm{\nu_t}^2
 \leq\sum_{s=1}^tw_{t,s}\E\norm{g_s}^2
 \leq (LD)^2+\sigma^2.
\]
\noindent
\emph{Step 2: control the momentum-induced displacement.}
Define
\(
 \mathcal L_t:=\sum_{s=1}^t
 w_{t,s}\norm{e_t-e_s}^2.
\)
Projection nonexpansiveness and \(\|Q_jv\|^2\leq\kappa_\varepsilon\|v\|^2,
\)
from Lemma~\ref{lem:ridge-preconditioner} imply
\(
 \norm{\theta_{j+1}-\theta_j}
 \leq\alpha\norm{Q_j\nu_j}
 \leq\alpha\sqrt{\kappa_\varepsilon}\norm{\nu_j}.
\)
For $s<t$, writing $k:=t-s$ and using the optimizer
movement bound, we obtain
\(
 \norm{e_t-e_s}
 \leq\alpha\sqrt{\kappa_\varepsilon}\sum_{j=s}^{t-1}\norm{\nu_j}+k\Delta.
\)
Squaring, applying Cauchy--Schwarz, and using Step~1 gives
\[
\begin{aligned}
 \E\norm{e_t-e_s}^2
 &\leq2\alpha^2\kappa_\varepsilon k\sum_{j=s}^{t-1}\E\norm{\nu_j}^2
       +2k^2\Delta^2\leq2k^2\left\{
 \alpha^2\kappa_\varepsilon\bigl((LD)^2+\sigma^2\bigr)+\Delta^2
 \right\}.
\end{aligned}
\]
The lag weights form a geometric distribution conditioned
on $k<t$. Since $k^2$ is increasing, its conditional
second moment is at most the untruncated second moment:
\(
 \sum_{s=1}^tw_{t,s}(t-s)^2
 \leq(1-\beta)\sum_{k=0}^\infty k^2\beta^k
 =\frac{\beta(1+\beta)}{(1-\beta)^2}.
\)
For $\beta=0$, this inequality holds with both sides zero.
Consequently,
\(
 \E\mathcal L_t
 \leq
 \frac{2\beta(1+\beta)}{(1-\beta)^2}
 \left\{
 \alpha^2\kappa_\varepsilon\bigl((LD)^2+\sigma^2\bigr)+\Delta^2
 \right\}.
\)

\medskip\noindent
\emph{Step 3: control the finite-ridge perturbation.}
Define \(\widetilde p_t:=(I_d-Q_t)u_t\), so that
\(Q_t\nu_t=u_t+Q_tz_t-\widetilde p_t\), and let
\(U_t:=\sum_{s=1}^tq_{t,s}^{(2)}\norm{g_s}^2\) and
\(V_t^h:=\sum_{s=1}^tw_{t,s}\norm{h_s(\theta_s)}^2\).
The spectral estimate in Lemma~\ref{lem:accumulated-preconditioner}
and Jensen's inequality give
\(\norm{\widetilde p_t}^2\leq U_t^2V_t^h/(4\varepsilon^2)\).
Using \(a^2b\leq(2a^3+b^3)/3\) and Jensen yields
\[
 \norm{\widetilde p_t}^2
 \leq\frac1{12\varepsilon^2}
 \left\{
 2\sum_{s=1}^tq_{t,s}^{(2)}\norm{g_s}^6
 +\sum_{s=1}^tw_{t,s}\norm{h_s(\theta_s)}^6
 \right\}.
\]
Step~1 and \eqref{eq:mean-gradient-bound} therefore imply
\(
 \E\norm{\widetilde p_t}^2
 \leq\frac{C}{\varepsilon^2}\left\{(LD)^6+\sigma^6\right\}.
\)

\medskip\noindent
\emph{Step 4: establish a pathwise one-step contraction.}
For $s\leq t$, strong monotonicity, the root property,
Lipschitzness, and Young's inequality give
\[
\begin{aligned}
 \ip{e_t}{h_s(\theta_s)}
 &\geq\mu\norm{e_s}^2
       -L\norm{e_t-e_s}\norm{e_s}\geq\frac{\mu}{2}\norm{e_s}^2
       -\frac{L^2}{2\mu}\norm{e_t-e_s}^2\geq\frac{\mu}{4}\norm{e_t}^2
       -\frac{\mu^2+L^2}{2\mu}\norm{e_t-e_s}^2,
\end{aligned}
\]
where the last step uses
$\norm{e_s}^2\geq\frac12\norm{e_t}^2-\norm{e_t-e_s}^2$.
Averaging over $s$ gives
\(
 \ip{e_t}{u_t}
 \geq\frac{\mu}{4}\norm{e_t}^2
      -\frac{\mu^2+L^2}{2\mu}\mathcal L_t.
\)
Similarly, Jensen's inequality and Lipschitzness imply
\(
 \norm{u_t}^2
 \leq L^2\sum_{s=1}^tw_{t,s}\norm{e_s}^2
 \leq2L^2\norm{e_t}^2+2L^2\mathcal L_t.
\)
Expanding the square and using
$2\alpha^2L^2\leq\alpha\mu/4$, we obtain
\[
 \norm{e_t-\alpha u_t}^2
 \leq\left(1-\frac{\alpha\mu}{4}\right)\norm{e_t}^2
      +2\alpha\left(\mu+\frac{L^2}{\mu}\right)\mathcal L_t.
\]

Since \(Q_t\nu_t=u_t+Q_tz_t-\widetilde p_t\), projection
nonexpansiveness, Lemma~\ref{lem:ridge-preconditioner}, and the optimizer
movement bound yield
\(
 \norm{e_{t+1}}
 \leq\norm{e_t-\alpha u_t}
      +\alpha\sqrt{\kappa_\varepsilon}\norm{z_t}
      +\alpha\norm{\widetilde p_t}+\Delta.
\)
Apply Young's inequality with parameter $\alpha\mu/16$,
and use
\[
 \bigl(\alpha\sqrt{\kappa_\varepsilon}\norm{z_t}+\alpha\norm{\widetilde p_t}+\Delta\bigr)^2
 \leq3\left\{
 \alpha^2\kappa_\varepsilon\norm{z_t}^2+\alpha^2\norm{\widetilde p_t}^2+\Delta^2
 \right\}.
\]
The coefficient of $\norm{e_t}^2$ satisfies
\(
 \left(1+\frac{\alpha\mu}{16}\right)
 \left(1-\frac{\alpha\mu}{4}\right)
 =\rho-\frac{(\alpha\mu)^2}{64}\leq\rho.
\)
Since $\alpha\mu\leq1/8$, the remaining coefficients
are bounded by universal numerical multiples, giving
\begin{equation}
\begin{aligned}
 \norm{e_{t+1}}^2
 \leq{}&\rho\norm{e_t}^2
 +C\alpha\left(\mu+\frac{L^2}{\mu}\right)\mathcal L_t+\frac{C\alpha}{\mu}
   \left(\kappa_\varepsilon\norm{z_t}^2+\norm{\widetilde p_t}^2\right)
 +\frac{C\Delta^2}{\alpha\mu}.
\end{aligned}
 \label{eq:error-t}
\end{equation}

\medskip\noindent
\emph{Step 5: unroll the recursion and obtain a probability bound.}
Fix $T\geq1$ and write $N:=T-1$. Define
\(
 Y_T:=
 \sum_{t=1}^{N}\rho^{N-t}
 \left\{
 \alpha\left(\mu+\frac{L^2}{\mu}\right)\mathcal L_t
 +\frac{\alpha}{\mu}
   \left(\kappa_\varepsilon\norm{z_t}^2+\norm{\widetilde p_t}^2\right)
 \right\}.
\)
Iterating \eqref{eq:error-t} and using
$1-\rho=3\alpha\mu/16$ gives
\(
 \norm{e_T}^2
 \leq\rho^N\norm{e_1}^2
      +CY_T+\frac{C\Delta^2}{\alpha^2\mu^2}.
\)

Distinct martingale-difference increments are orthogonal
in expectation. Therefore,
\[
\begin{aligned}
 \E\norm{z_t}^2
 &=\sum_{s=1}^tw_{t,s}^2\E\norm{\xi_s}^2\leq\sigma^2(\log2)
 \frac{1-\beta}{1+\beta}\frac{1+\beta^t}{1-\beta^t}\leq\sigma^2
 \left\{\frac{1-\beta}{1+\beta}
         +\frac{2\beta^t}{1+\beta}\right\}.
\end{aligned}
\]
Here we used $1-\beta^t\geq1-\beta$ in the last step.
Now, we define
\(
 \Psi_T:=
 \frac1{1+\beta}\sum_{t=1}^{N}\rho^{N-t}\beta^t.
\)
The noise contribution to $\E Y_T$ consequently satisfies
\(
 \frac{\alpha}{\mu}\kappa_\varepsilon
 \sum_{t=1}^{N}\rho^{N-t}\E\norm{z_t}^2
 \leq
 \frac{16\kappa_\varepsilon\sigma^2}{3\mu^2}\frac{1-\beta}{1+\beta}
 +\frac{2\alpha\kappa_\varepsilon\sigma^2}{\mu}\Psi_T.
\)

Comparing the resulting terms with the definition of
$\overline{\mathcal R}_\delta$ gives
\(
 \E Y_T
 \leq C\delta\overline{\mathcal R}_\delta
      +\frac{C\alpha\kappa_\varepsilon\sigma^2}{\mu}\Psi_T.
\)
Since $Y_T$ is nonnegative, Markov's inequality gives
$Y_T\leq\E Y_T/\delta$ with probability at least $1-\delta$.
On this event,
\begin{equation}
 \norm{e_T}^2
 \leq\rho^N\norm{e_1}^2
      +C\overline{\mathcal R}_\delta
      +\frac{C\alpha\kappa_\varepsilon\sigma^2}{\mu\delta}\Psi_T.
 \label{eq:tracking-error}
\end{equation}
\noindent
\emph{Step 6: control the two transients.}
For the initialization term,
\(
 \rho^N\norm{e_1}^2
 \leq e^{-(1-\rho)N}\norm{e_1}^2
 \leq\overline{\mathcal R}_\delta
\)
whenever
\(
 N\geq\frac1{1-\rho}
 \log_+\!\left(
 \frac{\norm{e_1}^2}{\overline{\mathcal R}_\delta}
 \right).
\)

For the EMA transient, recall $q=\max\{\rho,\beta\}$.
Each term satisfies
$\rho^{N-t}\beta^t\leq q^{N-t}q^t=q^N$.
Since there are $N$ terms,
\(
 \Psi_T\leq\frac{Nq^N}{1+\beta}\leq Nq^N.
\)
This bound includes the case $\rho=\beta$.
Moreover,
\(
 Nq^N
 \leq Ne^{-(1-q)N}
 \leq\frac{2}{1-q}e^{-(1-q)N/2},
\)
where the last inequality follows from
$x e^{-x/2}\leq2$ for $x\geq0$.
Consequently,
\(
 \frac{\alpha\kappa_\varepsilon\sigma^2}{\mu\delta}\Psi_T
 \leq\frac{2\alpha\kappa_\varepsilon\sigma^2}{\mu\delta(1-q)}
       e^{-(1-q)N/2}
 \leq2\overline{\mathcal R}_\delta
\)
whenever
\[
 N\geq\frac2{1-q}
 \log_+\!\left(
 \frac{\alpha\kappa_\varepsilon \sigma^2}
      {\mu\delta(1-q)\overline{\mathcal R}_\delta}
 \right).
\]
In particular, when $\rho=\beta$, we have $q=\beta$ and the same bound holds for $ \Psi_T$, eventually yielding the same lower bound for $N$. Choose $T_{\mathrm{burn}}$ as
the maximum of these two thresholds. This has the order
stated in the theorem. For each $T\geq T_{\mathrm{burn}}$,
both transient terms in \eqref{eq:tracking-error} are
bounded by universal multiples of
$\overline{\mathcal R}_\delta$.
Using the vectorization isometry, we conclude that
\[
 \Pp\!\left(
 \norm{W_T-W_T^\star}_F^2
 \leq C\overline{\mathcal R}_\delta
 \right)\geq1-\delta.
\]
\end{proof}

\subsection{Proof of Theorem~\ref{thm:nonconvex-shampoo-hp}}
\begin{proof}
\emph{Step 1: express the update as an inexact gradient step.}
Write $\beta:=\beta_1$. Using the mean-gradient identity
$h_t(\theta)=\nabla F_t(\theta)$, abbreviate
$h_t:=h_t(\theta_t)$ and define
$\delta_t:=Q_t\nu_t-h_t$. Thus,
\[
 \theta_{t+1}
 =\Pi_\Theta\bigl(\theta_t-\alpha(h_t+\delta_t)\bigr).
\]
The vector $\delta_t$ collects the discrepancy between
the Shampoo direction and the current population gradient.

\medskip\noindent
\emph{Step 2: obtain a one-step descent inequality.}
Put $s_t:=\theta_{t+1}-\theta_t$.
The projection inequality, applied with
$p=\theta_{t+1}$, $z=\theta_t-\alpha(h_t+\delta_t)$,
and $y=\theta_t$, gives
\(
 0\leq
 \ip{s_t+\alpha(h_t+\delta_t)}{-s_t}
 =-\norm{s_t}^2-\alpha\ip{h_t+\delta_t}{s_t}.
\)
Rearranging and applying Young's inequality yields
\begin{equation}
 \ip{h_t}{s_t}
 \leq-\frac1\alpha\norm{s_t}^2-\ip{\delta_t}{s_t}
 \leq-\frac1{2\alpha}\norm{s_t}^2
       +\frac\alpha2\norm{\delta_t}^2.
 \label{eq:stationarity-projection-descent}
\end{equation}
Since $\nabla F_t$ is $L$-Lipschitz on the convex set
$\Theta$, the descent lemma gives
\[
\begin{aligned}
 F_t(\theta_{t+1})
 &\leq F_t(\theta_t)+\ip{h_t}{s_t}
       +\frac L2\norm{s_t}^2\leq F_t(\theta_t)
       -\left(\frac1{2\alpha}-\frac L2\right)\norm{s_t}^2
       +\frac\alpha2\norm{\delta_t}^2.
\end{aligned}
\]
The stepsize condition $2\alpha L\leq1$ implies the coefficient of $\norm{s_t}^2$ is at least
$1/(4\alpha)$. Consequently,
\(
 \norm{s_t}^2
 \leq4\alpha\bigl(F_t(\theta_t)-F_t(\theta_{t+1})\bigr)
       +2\alpha^2\norm{\delta_t}^2.
\)

\medskip\noindent
\emph{Step 3: compare with the noiseless projected step.}
Define
$\bar\theta_{t+1}:=\Pi_\Theta(\theta_t-\alpha h_t)$.
The inputs to the two projections differ by
$\alpha\delta_t$, so nonexpansiveness gives
\(
 \norm{\theta_{t+1}-\bar\theta_{t+1}}
 \leq\alpha\norm{\delta_t}.
\)
By \eqref{eq:def_grad},
\(
 \alpha\mathcal G_{\alpha,t}(\theta_t)
 =-s_t+(\theta_{t+1}-\bar\theta_{t+1}).
\)
Using $\norm{x+y}^2\leq2\norm{x}^2+2\norm{y}^2$
and the displacement bound from Step~2, we obtain
\begin{equation}
\begin{aligned}
 \norm{\mathcal G_{\alpha,t}(\theta_t)}^2
 &\leq\frac2{\alpha^2}\norm{s_t}^2
       +2\norm{\delta_t}^2\leq\frac8\alpha
       \bigl(F_t(\theta_t)-F_t(\theta_{t+1})\bigr)
       +6\norm{\delta_t}^2.
\end{aligned}
 \label{eq:stationarity-one-step-mapping}
\end{equation}

\medskip\noindent
\emph{Step 4: telescope the objective values.}
Summing the objective differences leaves the initial
and final values, together with the changes in the
objective at intermediate iterates:
\begin{equation}
\begin{aligned}
 &\sum_{t=1}^T
   \bigl(F_t(\theta_t)-F_t(\theta_{t+1})\bigr) =
 F_1(\theta_1)-F_T(\theta_{T+1})
 +\sum_{t=1}^{T-1}
   \bigl(F_{t+1}(\theta_{t+1})-F_t(\theta_{t+1})\bigr)\,\leq F_1(\theta_1)-F_\star+\mathcal V_T.
\end{aligned}
 \label{eq:stationarity-telescope}
\end{equation}
The last inequality uses the uniform lower bound
$F_T(\theta_{T+1})\geq F_\star$ and the positive-part
definition of $\mathcal V_T$ in Section~\ref{sec:projected-stationarity}.
Summing \eqref{eq:stationarity-one-step-mapping}
and dividing by $T$ therefore gives
\begin{equation}
 \frac1T\sum_{t=1}^T
 \norm{\mathcal G_{\alpha,t}(\theta_t)}^2
 \leq
 \frac{8(F_1(\theta_1)-F_\star+\mathcal V_T)}{\alpha T}
 +\frac6T\sum_{t=1}^T\norm{\delta_t}^2.
 \label{eq:stationarity-before-error}
\end{equation}

\medskip\noindent
\emph{Step 5: control the accumulated direction error.}
Use $u_t,z_t$ from \eqref{eq:signal-noise-split}.
Because the weights sum to one, the current-gradient
term cancels, giving
\(
 u_t-h_t
 =\sum_{s=1}^{t-1}
   w_{t,s}\bigl(h_s(\theta_s)-h_t\bigr).
\)
By \eqref{eq:mean-gradient-bound}, both gradients
inside each difference have norm at most $LD$.
Moreover, \eqref{eq:ab-recursion} gives
$\sum_{s<t}w_{t,s}=a_t\leq\beta$. Hence
\(
 \norm{u_t-h_t}\leq2LD\,a_t\leq2LD\beta.
\)
Since $\nu_t-h_t=(u_t-h_t)+z_t$, it follows that
\[
 \frac1T\sum_{t=1}^T\norm{\nu_t-h_t}^2
 \leq8L^2D^2\beta^2+\frac{2\mathsf Z_T}{T}.
\]
The decomposition \(\delta_t=Q_t(\nu_t-h_t)+(Q_t-I_d)h_t\) gives
\(
 \frac1T\sum_{t=1}^T\norm{\delta_t}^2
 \leq16\kappa_\varepsilon L^2D^2\beta^2
      +\frac{4\kappa_\varepsilon\mathsf Z_T}{T}
      +\frac{2\mathsf P_T^h}{T}
\)
where \(\mathsf P_T^h:=\sum_{t=1}^T\norm{(I_d-Q_t)h_t}^2\).
To control this new perturbation energy, write
\(U_t:=\sum_{s=1}^tq_{t,s}^{(2)}\norm{g_s}^2\).
The spectral estimate in Lemma~\ref{lem:accumulated-preconditioner}
and \eqref{eq:mean-gradient-bound} imply
\(
 \norm{(I_d-Q_t)h_t}^2
 \leq\frac{U_t^2(LD)^2}{4\varepsilon^2}
 \leq\frac{2U_t^3+(LD)^6}{12\varepsilon^2}.
\)
Corollary~\ref{cor:hardy-l3} therefore gives
\begin{equation}
 \frac{\mathsf P_T^h}{T}
 \leq\frac1{\varepsilon^2}
 \left\{\frac1T\sum_{t=1}^T\norm{g_t}^6+\frac{(LD)^6}{12}\right\}.
 \label{eq:stationarity-mean-perturbation}
\end{equation}

Now, we apply Lemma~\ref{lem:filtered-energy} with failure
probability $\delta/2$. On the resulting event,
\(
 \frac{\mathsf Z_T}{T}
 \leq\mathfrak Z_{T,\beta,\delta/2}
 \leq\frac32\mathfrak Z_{T,\beta,\delta},
\)
where the second inequality follows from
$\log(8/\delta)\leq\frac32\log(4/\delta)$.
On the sixth-moment event, whose failure probability
is at most $\delta/4$,
\eqref{eq:stationarity-mean-perturbation} and \(C_6\geq1\) give
\(
 \frac{\mathsf P_T^h}{T}
 \leq\frac{13\mathfrak G_\delta}{12\varepsilon^2}.
\)
The union bound shows that their intersection has
probability at least $1-3\delta/4\geq1-\delta$.
On this intersection,
\begin{equation}
 \frac1T\sum_{t=1}^T\norm{\delta_t}^2
 \leq16\kappa_\varepsilon L^2D^2\beta^2
      +6\kappa_\varepsilon \mathfrak Z_{T,\beta,\delta}
      +\frac{13\mathfrak G_\delta}{6\varepsilon^2}.
 \label{eq:stationarity-average-direction-error}
\end{equation}
Substituting into \eqref{eq:stationarity-before-error}
gives
\[
\begin{aligned}
 \frac1T\sum_{t=1}^T
 \norm{\mathcal G_{\alpha,t}(\theta_t)}^2
 \leq{}&
 \frac{8(F_1(\theta_1)-F_\star+\mathcal V_T)}{\alpha T}+96\kappa_\varepsilon L^2D^2\beta^2
  +36\kappa_\varepsilon\mathfrak Z_{T,\beta,\delta}
  +\frac{13\mathfrak G_\delta}{\varepsilon^2}.
\end{aligned}
\]
Absorbing the numerical constants proves
\eqref{thm:thm_4}.
\end{proof}

\newpage

\section{Additional Experiments}

\subsection{Objective specific details}
\textbf{Kronecker quadratic. } The parameter and target are $W,W_k^\star\in\R^{4\times4}$.  The objective is
\begin{equation*}
 F_k(W)=\frac12\operatorname{tr}\!\left[
   (W-W_k^\star)^{\mathsf T}L(W-W_k^\star)R
 \right]
\end{equation*}
with population gradient
\(
 \nabla F_k(W)=L(W-W_k^\star)R.
\)
Independent orthogonal eigenvector matrices for $L$ and $R$ are generated from Gaussian QR factorizations. Both matrices have four geometrically spaced eigenvalues from $1$ to $\sqrt{10}$. Therefore the vectorized Hessian has condition number $10$. This is a strongly convex example of left--right matrix geometry.

\textbf{Huber regression. }
Here $W,W_k^\star\in\R^{4\times4}$ and $w=\operatorname{vec}(W)$. A fixed design $X\in\R^{64\times16}$ has independent $N(0,1)$ entries and. With threshold $\kappa=0.1$ and centered ridge strength $\lambda=0.1$, define
\begin{equation*}
 F_k(w)=\frac1{64}\sum_{i=1}^{64}
 h_\kappa\!\left(x_i^{\mathsf T}(w-w_k^\star)\right)
 +\frac\lambda2\norm{w-w_k^\star}^2,  \qquad \qquad h_\kappa(r)=
 \begin{cases}
  r^2/2,& |r|\le\kappa,\\
  \kappa|r|-\kappa^2/2 & |r|>\kappa.
 \end{cases}
\end{equation*}

Writing $\psi_\kappa(r)=\max\{-\kappa,\min(r,\kappa)\}$, the analytic
population gradient is
\begin{equation*}
 \nabla F_k(w)=\frac1{64}X^{\mathsf T}
 \psi_\kappa\!\left(X(w-w_k^\star)\right)
 +\lambda(w-w_k^\star),
\end{equation*}
where $\psi_\kappa$ is applied entrywise. The ridge makes the objective globally $\lambda$-strongly convex while preserving $w_k^\star$ as its exact minimizer. The Huber score supplies the intended nonquadratic regression geometry.

\textbf{Soft-label logistic regression. }
Here $W,W_k^\star\in\R^{4\times4}$ and $w=\operatorname{vec}(W)\in\R^{16}$. The fixed design $X\in\R^{64\times16}$ has independent
$N(0,1)/\sqrt{16}$ entries. If
$p_k^\star=\operatorname{sigmoid}(Xw_k^\star)$, the soft-label cross-entropy has gradient
\(
 \nabla F_k(w)=\frac1{64}X^{\mathsf T}
 \{\operatorname{sigmoid}(Xw)-p_k^\star\}
\). The parameters were chosen so that the problem remains convex.

\textbf{Population phase retrieval. }
The parameter and target are $W,W_k^\star\in\R^{4\times4}$. Up to an
additive target-only constant, the analytic population objective is
\begin{equation*}
 F_k(W)=\frac34\norm{W}_F^4
 -\frac12\norm{W_k^\star}_F^2\norm{W}_F^2
 -\ip{W}{W_k^\star}_F^2,
\end{equation*}
with gradient
\begin{equation*}
 \nabla F_k(W)
 =\{3\norm{W}_F^2-\norm{W_k^\star}_F^2\}W
 -2\ip{W}{W_k^\star}_F W_k^\star.
 \label{eq:phase-gradient}
\end{equation*}
The objective is nonconvex and has the usual sign symmetry, and chosen to be smooth.

\textbf{Cosine-perturbed matrix objective. }
Here $W,W_k^\star\in\R^{4\times4}$. Gaussian QR generates an orthonormal basis $\{A_j\}_{j=1}^{16}$ of $\R^{4\times4}$ under the Frobenius inner product. With $\mu=0.2$, amplitude $a=0.03$, and frequency $\omega=5$, define
\begin{equation*}
 F_k(W)=\frac\mu2\norm{W-W_k^\star}_F^2
 +a\sum_{j=1}^{16}
 \left[1-\cos\!\left(\omega\ip{A_j}{W-W_k^\star}_F\right)\right].
\end{equation*}
Its analytic gradient is
\begin{equation*}
 \nabla F_k(W)=\mu(W-W_k^\star)
 +a\omega\sum_{j=1}^{16}
 \sin\!\left(\omega\ip{A_j}{W-W_k^\star}_F\right)A_j.
\end{equation*}
Every summand is nonnegative and vanishes at $W_k^\star$, so the moving target is a known global minimizer. Nevertheless, $a\omega^2=0.75>\mu$, and along a basis direction at cosine phase $\pi$ the curvature is $\mu-a\omega^2=-0.55$. The objective is therefore genuinely smooth and nonconvex.

\textbf{Low-rank matrix factorization. }
The parameters are $U,V\in\R^{12\times3}$. With target $M_k^\star=U_k^\star(V_k^\star)^{\mathsf T}$,
\(
 F_k(U,V)=\frac12\norm{UV^{\mathsf T}-M_k^\star}_F^2.
\)
Writing $E_k=UV^{\mathsf T}-M_k^\star$, the analytic gradients are
\(
 \nabla_UF_k=E_kV,
 \,
 \nabla_VF_k=E_k^{\mathsf T}U.
\)
This objective is nonconvex and nonidentifiable under factor rotations.

\subsection{Experiment details}
\textbf{Drift and noise schedule. } For matrix-valued targets, the drift is generated through normalized Gaussian directions:
\[
W_{t+1}^\star = W_t^\star + \Delta_t\frac{U_t}{\|U_t\|_F}, \qquad (U_t)_{ij}\sim N(0,1).
\]
At each round the optimizer receives
\(
G_t=H_t(W_t)+ \sigma_t \,Z_{t,s},
\)
where $H_t$ is the analytic population gradient and every coordinate of $Z_{t,s}$ is an independent Rademacher sign multiplied by $1/\sqrt d$. For a multi-block parameter, $d$ is the total number of coordinates across all blocks. Hence $\E[Z_{t,s}]=0$, $\norm{Z_{t,s}}=1$, and $\norm{G_t-H_t(W_t)}^2=\sigma_t^2$ at every round. The dimensions are $d=16$ for the five single-matrix
objectives and $d=72$ for matrix factorization. Thus, $\Delta_t$ sets the drift magnitude and $\sigma_t^2$ decides the stochastic noise level.

\textbf{Optimizers and tuning. } EMA matrix-Shampoo is implemented as in Algorithm~\ref{alg:shampoo}. For real data analyses SGD, bias corrected momentum SGD, and bias corrected momentum Adam with ridge parameter have been used. The effective stepsize \(\alpha\) is tuned from a problem-specific grid, while the ridge parameter \(\varepsilon\) is tuned over the grid
\(
\{10^{-4},10^{-3},10^{-2},10^{-1},1,10\}.
\) Curves are then reported for the selected learning rate, averaged over seeds $\{0,1,2,3,4,5,6,7,8,9\}$, with shaded regions denoting one standard error of the mean. 

\textbf{Evaluation. } For convex problem we evaluate the average tracking error $\frac1t\sum_{k=1}^t\|\widehat W_k - W_k^\star\|^2_F$, and for non-convex objectives we evaluate the average stationary gap $\frac1t\sum_{k=1}^t \left\| \mathcal G_{\alpha,k}(\theta_k)  \right\|^2$; see \eqref{thm:thm_4}, over $T = 5000$ iterations.

All other experimental details can be found in the supplementary material.

\subsection{Real-data experimental details}
\label{app:real-data-details}

\paragraph{Common setup.}
We compared SGD, Momentum SGD, Adam, and EMA-Shampoo using an end-to-end ResNet-18. The convolutional backbone was initialized from ImageNet-pretrained weights, while the final linear classifier was initialized to zero. The backbone and classifier were optimized jointly; batch-normalization running statistics and affine parameters were frozen to avoid introducing an additional source of temporal state. The penultimate feature vector was $\ell_2$-normalized before classification.

We minimized multiclass cross-entropy with the explicit penalty $\frac{\lambda}{2}\lVert\theta\rVert_2^2$, where $\lambda=10^{-4}$ for every method. No additional optimizer weight decay was used. The batch size was 64. For each seed, all methods used the same initialization, preprocessing, temporal ordering, and minibatch order. Both the model parameters and the complete optimizer state, including momentum and preconditioner statistics, were carried across temporal shifts. We did not reset the optimizer or use replay. Results were computed with numerous seeds and are reported as the mean, with shaded regions indicating one standard deviation.

\paragraph{CLEAR-10 protocol.}
CLEAR-10 contains ten named visual concepts and a background class, giving 11 prediction classes. We used the official paired training and test splits from temporal buckets 1--10. Images were resized to 256 pixels, center-cropped to $224\times224$, and normalized using the ImageNet statistics. Bucket 1 was used as an initialization period for 10 epochs and was excluded from the reported stream. The model was then trained for one pass over each of buckets 2--10 in chronological order. Training and test errors in the figure are evaluated at the end of each bucket using the same post-update model state.

\paragraph{CORe50 protocol.}
Our CORe50 experiment is a custom session-ordered, object-disjoint protocol, rather than one of the standard CORe50 continual-learning scenarios. The prediction labels are the ten official object \emph{categories}, not the 50 physical-object identities. Each category contains five physical objects. We used object positions 1--4 in every category for training, giving 40 training objects, and position 5 for testing, giving 10 held-out objects. Consequently, all ten test categories are represented during training, but the physical instances used for testing are never observed during training; the experiment therefore measures category generalization to unseen objects rather than classification of unseen classes.

Sessions $s1,\ldots,s11$ were processed in numerical order. We retained 50
uniformly spaced frames per object and session, resulting in 2,000 training
and 500 held-out test images per session (22,000 and 5,500 images in total,
respectively). Images were resized to 144 pixels, center-cropped to
$128\times128$, and normalized using the ImageNet statistics. Session $s1$
was used for 10 initialization epochs, after which sessions $s2$--$s11$ were
processed once each. The acquisition sessions induce changes in illumination,
background, viewpoint, object pose, hand configuration, and occlusion. As in
CLEAR-10, the displayed values are the training and held-out errors obtained
from the end-of-session model states.

\paragraph{Optimizer settings and model selection.}
The first-moment coefficient was varied over
$\beta_1\in\{0.5,0.99\}$ for SGDm, Adam, and Shampoo. The remaining
settings were fixed across $\beta_1$ and seeds. On both dataset, the learning
rate and adaptive step-size for SGD, SGDM, Adam, and Shampoo were tuned optimally,  with
$\beta_2=0.999$ for Adam and Shampoo.

\paragraph{Shampoo implementation.}
Each parameter tensor was treated independently. A tensor of order at least
two was reshaped as
$\mathrm{shape}[0]\times\prod_{j\geq1}\mathrm{shape}[j]$; vectors were treated
as column matrices. The resulting matrices were partitioned along both axes
into blocks of at most $128\times128$, with separate left and right
second-moment statistics for each block. These statistics were updated at
every optimizer step, while their inverse fourth roots were recomputed every
10 steps and cached between recomputations. First- and second-moment estimates
were bias corrected. We added the permanent matrix ridge
$\varepsilon=10^{-4}$ before computing each inverse fourth root.

\subsection{Empirical minimax scaling}

We further examine the minimax tracking rate on objectives not shown in the main text. Figure~\ref{fig:minimax-scaling_app} reports the average tracking error normalized by
\(
r_T^\star
=
\sigma^{4/3}\mathcal D_T^{1/3}/(\mu^{4/3}T^{1/3})
\)
under
\(
\mathcal D_T\asymp \log T,\,
T^{1/3},\,
T^{1/2},
\)
and \(T\). While Figure~\ref{fig:minimax-scaling} considers the Kronecker, pseudo-Huber, and soft-label logistic objectives, Figure~\ref{fig:minimax-scaling_app} considers phase retrieval and the cosine objective. Across these additional problems, the normalized errors remain broadly stable over the horizon, providing further empirical support for the predicted minimax scaling.

\begin{figure}[H]
    \centering
    \includegraphics[width=\columnwidth]
    {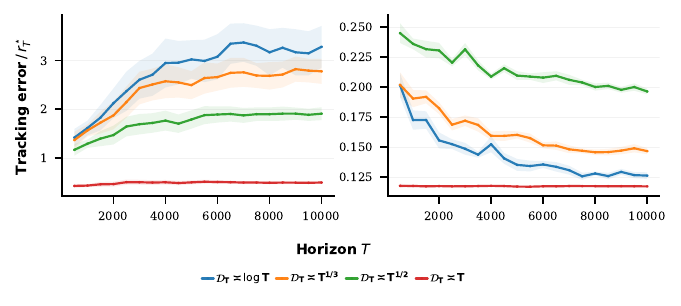}
    \caption{\textbf{Empirical validation of the minimax tracking rate.}
    Results for \textbf{phase retrieval} (left) and the \textbf{cosine objective} (right) under the path-variation regimes
    \(\mathcal D_T\asymp \log T,\;T^{1/3},\;T^{1/2}\), and \(T\).
    The average tracking error is normalized by
    \(r_T^\star=
    \sigma^{4/3}\mathcal D_T^{1/3}/(\mu^{4/3}T^{1/3})\).
    The normalized errors remain overall stable over the horizon, supporting the predicted minimax scaling.}
    \label{fig:minimax-scaling_app}
\end{figure}

\subsection{Ridge--geometry tradeoff}

We next complement the ridge experiment in the main text with the remaining objectives. Figure~\ref{fig:ridge-geometry_app} considers the Kronecker, soft-label logistic, and cosine objectives, whereas Figure~\ref{fig:ridge-geometry} in the main text reports pseudo-Huber regression, matrix factorization, and phase retrieval. For each problem, we compare the tracking error or projected-stationarity gap with the preconditioned filtered-noise energy
\(
\mathfrak Z_T^Q(\varepsilon)
=
T^{-1}\sum_{t=1}^T
\|Q_t(\varepsilon)z_t\|^2
\)
across high-drift/low-noise and low-drift/high-noise regimes. The complementary behavior across the two regimes illustrates the stability--geometry tradeoff induced by the ridge parameter.

\begin{figure}[!htbp]
    \centering
    \includegraphics[width=\columnwidth]
    {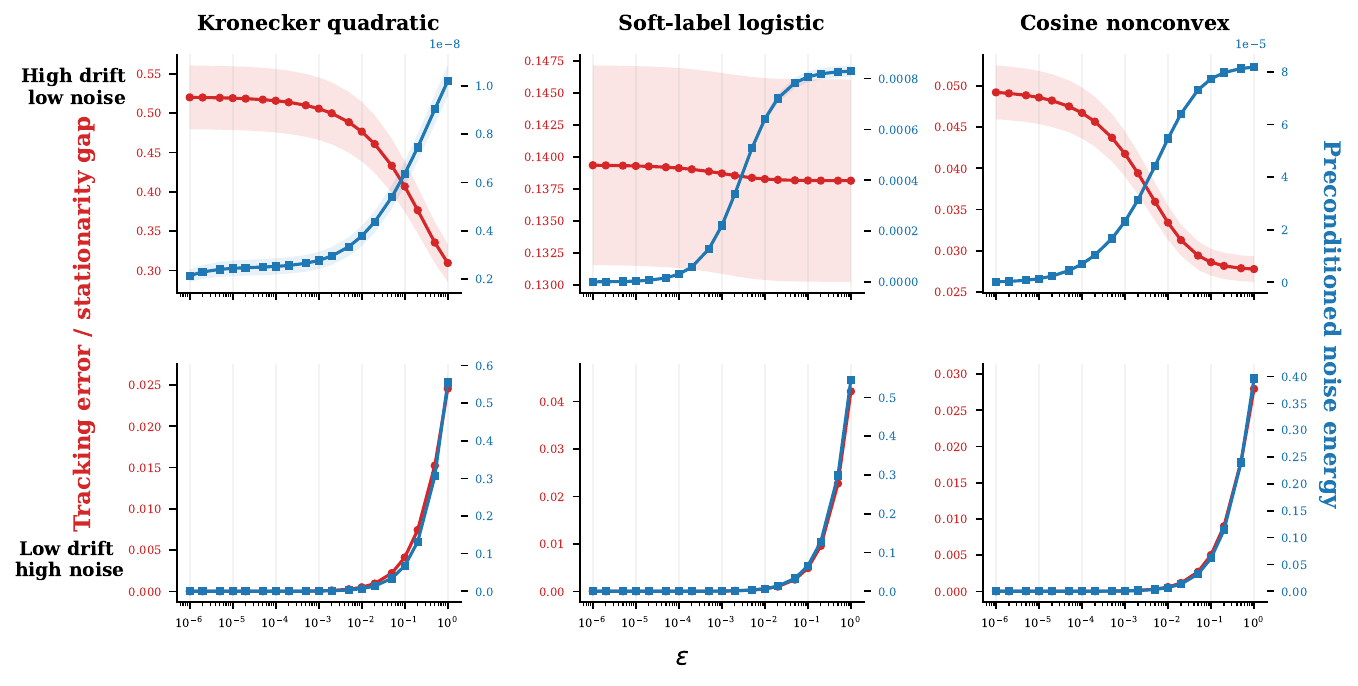}
    \caption{\textbf{Ridge stability--geometry tradeoff.}
    Average tracking error for convex objectives and average projected-stationarity gap for nonconvex objectives. In the high-drift, low-noise regime (top), the preconditioned filtered-noise energy
    \(\mathfrak Z_T^Q(\varepsilon)\) remains small, and the decreasing error with increasing \(\varepsilon\) is consistent with dominance of the preconditioner-perturbation contribution
    \(\mathfrak G_\delta/\varepsilon^2\).
    In the low-drift, high-noise regime (bottom),
    \(\mathfrak Z_T^Q(\varepsilon)\) becomes increasingly important as \(\varepsilon\) grows, producing the complementary behavior.}
    \label{fig:ridge-geometry_app}
\end{figure}

\subsection[Drift-noise tradeoff and optimal beta1]{Drift-noise tradeoff and optimal \(\beta_1\)}
The main-text experiments already illustrate the drift--noise tradeoff in the choice of first-moment memory and the corresponding variation in the preferred \(\beta_1\) across high-drift/low-noise and low-drift/high-noise regimes. Here, we provide additional trajectory-level comparisons of Shampoo across objectives and optimizers.

\begin{figure}[!htbp]
    \centering
    \includegraphics[width=\columnwidth]
    {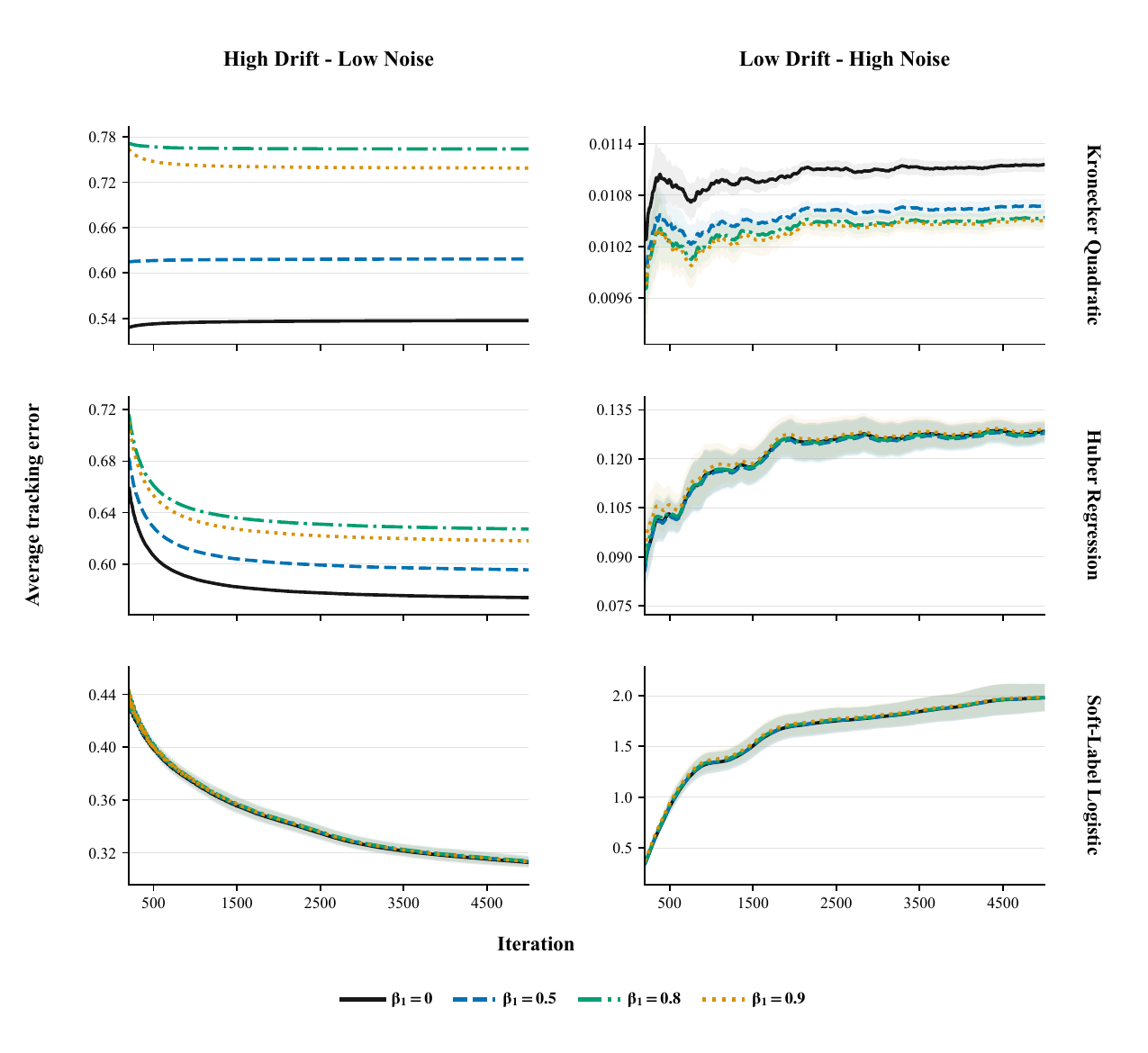}
   \caption{\textbf{Shampoo tracking trajectories for convex objectives.} Results for \textbf{Kronecker quadratic}, \textbf{Huber}, and \textbf{soft-label logistic} (top to bottom) under high drift/low noise $(\Delta,\sigma^2)=(0.15,0.01)$ (left) and low drift/high noise $(0.003,1)$ (right); fixing $\beta_2=0.999$. Curves show the ten-seed mean cumulative average tracking error, with one-standard-error bands. Overall, strong drift generally favors shorter first-moment memory, whereas higher noise can favor moderate memory, with the size of this benefit depending on the objective.}
    \label{fig:appendix:beta1_convex}
\end{figure}

\begin{figure}[!htbp]
    \centering
    \includegraphics[width=\columnwidth]
    {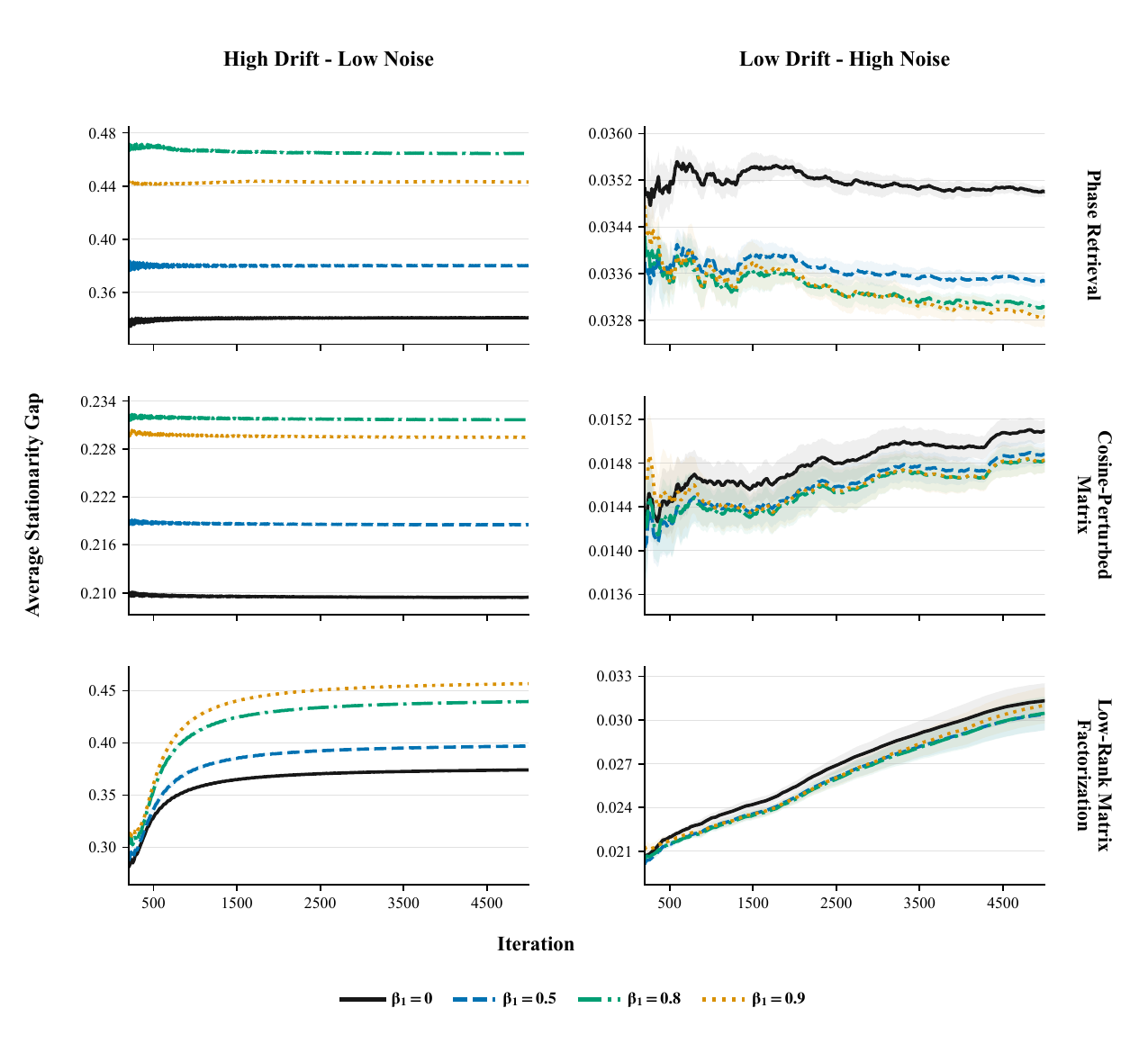}
   \caption{\textbf{Shampoo tracking trajectories for non-convex objectives.} Results for \textbf{Phase retrieval}, \textbf{cosine-perturbed matrix}, and \textbf{low-rank matrix factorization} (top to bottom) under high drift/low noise $(\Delta,\sigma^2)=(0.15,0.01)$ (left) and low drift/high noise $(0.003,1)$ (right); fixing $\beta_2=0.999$. Curves show the ten-seed mean cumulative average stationary gap, with one-standard-error bands. Overall, strong drift generally favors shorter first-moment memory, whereas higher noise can favor moderate memory, with the size of this benefit depending on the objective.}
    \label{fig:appendix:beta1_nonconvex}
\end{figure}

\end{document}